\documentclass[preprint,12pt]{elsarticle}

\usepackage{amssymb}
\usepackage{amsmath}
\usepackage{booktabs}
\usepackage{multirow}
\usepackage{comment}
\usepackage{enumitem}
\usepackage{wrapfig}
\usepackage{enumitem}
\usepackage{setspace}
\usepackage{makecell}
\usepackage{xspace}
\usepackage[table]{xcolor}
\usepackage{tabularx}
\usepackage{soul}
\usepackage{hyperref}
\usepackage{algorithm,algpseudocode}
\usepackage{longtable}
\usepackage[dvipsnames]{xcolor}
\usepackage{tikz}
\usetikzlibrary{positioning}
\usetikzlibrary{matrix,arrows.meta}
\newcommand{\set}[1]{\{#1\}}

\newcommand{\LTL}{\textsc{ltl}\xspace}
\newcommand{\LTLF}{\textsc{ltl}\ensuremath{_f}\xspace}
\newcommand{\FLTLF}[1][x]{\textsc{fltl}\ensuremath{_f^{#1}}\xspace}
\newcommand{\FLTL}{\textsc{fltl}\xspace}

\renewcommand{\P}{\mathcal{P}}

\newcommand{\X}{\mathcal{X}}
\newcommand{\p}{p\xspace}

\NewDocumentCommand\tomorrow    {}{\mathsf{X}}
\NewDocumentCommand\until       {}{\mathbin{\mathsf{U}}}
\NewDocumentCommand\always      {}{\mathsf{G}}
\NewDocumentCommand\eventually  {}{\mathsf{F}}
\NewDocumentCommand\release     {}{\mathbin{\mathsf{R}}}
\NewDocumentCommand\mrelease     {}{\mathbin{\mathsf{M}}}

\newcommand{\declare}{{\textsc{declare}}\xspace}
\newcommand{\limp}{\mathbin{\rightarrow}}

\usetikzlibrary{tikzmark,arrows.meta, shapes, positioning, calc, decorations.pathmorphing, backgrounds}

 \usetikzlibrary{
  arrows,
   arrows.meta,
   matrix,
   tikzmark,
 }

\newcommand{\bgcell}[2]{
|[fill=#2]|#1
}

 \newcommand{\fuzzyeval}[3]{\bgcell{#1}{#3!\fpeval{100*#2}}}

\newcommand{\fuzzyval}[1]{
\fuzzyeval{#1}{#1}{LimeGreen}
}

\newcommand{\fuzzyg}[2]{
\fuzzyeval{#1}{#2}{LimeGreen}
}
\newcommand{\fuzzyp}[2]{
\fuzzyeval{#1}{#2}{LimeGreen}
}

\newcommand{\formulabox}[3]{
\subnode[inner sep=0pt,outer sep=0pt]
    {#1}
    {
        \colorbox{#3}{\ensuremath{#2}}
    }
}

\tikzstyle{table} = [
    matrix of nodes,
    ampersand replacement=\&,
    row sep=-\pgflinewidth,
    column sep=.5em,
    nodes={
      rectangle,
      draw=black,
      align=center,
      text centered,
      font=\footnotesize\ttfamily
    },
    text height=1em,
    text depth=0.5em,
    nodes in empty cells,
]

\tikzstyle{trace-table} = [
	table,
    row 1/.style={
      nodes={
        fill=white,
        draw=white,
        text=
            black,
            font=\footnotesize
      }
    }
]

\tikzstyle{eval-table} = [
    table,
]

\tikzstyle{link} = [
    arrows= {-Stealth},
    thick,
]

\definecolor{cA}{HTML}{2166AC}
\definecolor{cB}{HTML}{1B7837}
\definecolor{cO}{HTML}{762A83}
\definecolor{cG}{HTML}{C0392B}
\definecolor{cF}{HTML}{E8A33D}

\usepackage{amsthm}%
\usepackage{multicol}
\newtheorem{theorem}{Theorem}%

\newtheorem{example}{Example}%
\newtheorem{definition}{Definition}%
\newtheorem{proposition}[theorem]{Proposition}% 

\newcommand{\fluent}[1]{\mathtt{#1}\xspace}

\newcommand{\U}{\mathsf{U}\xspace}
\renewcommand{\X}{\mathsf{X}\xspace}
\newcommand{\Xw}{\mathsf{X}_w\xspace}
\newcommand{\G}{\mathsf{G}\xspace}
\newcommand{\W}{\mathsf{W}\xspace}
\newcommand{\F}{\mathsf{F}\xspace}

\newcommand{\R}{\mathsf{R}\xspace}
\newcommand{\M}{\mathsf{M}\xspace}

\newcommand{\FuzzyDFA}{FuzzyA\xspace}
\newcommand{\NeSyA}{NeSyA\xspace}
\newcommand{\TILR}{T-ILR\xspace}
\newcommand{\LTLZinc}{LTLZinc\xspace}

\newcommand{\DiffLTLf}{\ensuremath{\partial}LTL\ensuremath{_f}\xspace}

\newcommand{\ocircarrow}{%
  \mathbin{
    \ooalign{
      \scalebox{0.78}{$\bigcirc$}\cr
      \hidewidth\raisebox{0.1ex}{$\mkern+.5mu\scriptscriptstyle\rightarrow$}\hidewidth\cr
    }
  }
}

\newcommand{\tand}{\otimes}
\newcommand{\tor}{\oplus}
\newcommand{\tneg}{\ominus}
\newcommand{\timpl}{\ocircarrow}%{\oslash}

\journal{Artificial Intelligence}
\begin{document}

\begin{frontmatter}

%% Title, authors and addresses

%% use the tnoteref command within \title for footnotes;
%% use the tnotetext command for theassociated footnote;
%% use the fnref command within \author or \affiliation for footnotes;
%% use the fntext command for theassociated footnote;
%% use the corref command within \author for corresponding author footnotes;
%% use the cortext command for theassociated footnote;
%% use the ead command for the email address,
%% and the form \ead[url] for the home page:
%% \title{Title\tnoteref{label1}}
%% \tnotetext[label1]{}
%% \author{Name\corref{cor1}\fnref{label2}}
%% \ead{email address}
%% \ead[url]{home page}
%% \fntext[label2]{}
%% \cortext[cor1]{}
%% \affiliation{organization={},
%%             addressline={},
%%             city={},
%%             postcode={},
%%             state={},
%%             country={}}
%% \fntext[label3]{}

\title{%\DiffLTLf: Scalable Neurosymbolic Learning for LTLf via Direct Fuzzy Semantics
%
%An exploration into fuzzy semantics for temporal neuro-symbolic learning
%
%Time to Reason: Fuzzy Semantics for Temporal Neuro-Symbolic Learning
%
% Time to Reason: Characterizing Fuzzy Semantics for Scalable Temporal Neurosymbolic Integration
%
%Time to Reason: Characterizing Fuzzy Semantics for Direct Temporal Neurosymbolic Learning
%
%Time to Reason: Characterizing Fuzzy Semantics for Temporal Neurosymbolic Integration
%
% Time to Reason: Defining Fuzzy Semantics for Temporal Neurosymbolic Integration
%
% Time to Reason: Exploring Fuzzy Semantics for Temporal Neurosymbolic Integration
%
% Navigating LTLf Fuzzy Semantics for Scalable Neurosymbolic Learning
%
% Time to Reason: Fuzzy Semantics for Temporal Neurosymbolic Integration
%
% Time to Reason: Temporal Neurosymbolic Integration with Fuzzy Semantics 
% \DiffLTLf: Characterizing Fuzzy Semantics for Scalable Temporal Neurosymbolic Learning
%
%for LTLf and a Theoretical Study of Its Fuzzy Semantics
%
% \DiffLTLf: Scalable Temporal Neurosymbolic Learning and a Theoretical Study of Its Fuzzy Semantics
% Time to Reason: Scalable Neurosymbolic Learning for LTLf via Direct Fuzzy Semantics Integration
Time to Reason: Scalable Neurosymbolic Learning for LTLf via Fuzzy Semantics
}

%% use optional labels to link authors explicitly to addresses:
%% \author[label1,label2]{}
%% \affiliation[label1]{organization={},
%%             addressline={},
%%             city={},
%%             postcode={},
%%             state={},
%%             country={}}
%%
%% \affiliation[label2]{organization={},
%%             addressline={},
%%             city={},
%%             postcode={},
%%             state={},
%%             country={}}

\author[FBK,UNIBZ]{Riccardo Andreoni} %% Author name
\author[FBK]{Andrei Buliga} %% Author name
\author[UNIBZ]{Alessandro Daniele} %% Author name
\author[UNIBO]{Paolo Felli} %% Author name
\author[UNIBZ]{Chiara Ghidini} 
\author[UNIBZ]{Marco Montali} 
\author[FBK]{Massimiliano Ronzani} %% Author name

%% Author affiliation
\affiliation[FBK]{organization={Fondazione Bruno Kessler},
            addressline={Via Sommarive, 18}, 
            city={Trento},
            postcode={38123}, 
            % state={},
            country={Italy}}  

\affiliation[UNIBZ]{organization={Free University of Bozen-Bolzano},
            addressline={Via Bruno Buozzi, 1}, 
            city={Bolzano},
            postcode={39100}, 
            % state={},
            country={Italy}}       

\affiliation[UNIBO]{organization={Università di Bologna},
            addressline={Via Mura Anteo Zamboni 7}, 
            city={Bologna},
            postcode={40126}, 
            % state={},
            country={Italy}}      
%% Abstract
\begin{abstract}
Neurosymbolic (NeSy) Artificial Intelligence  aims to integrate Deep Learning (DL) architectures with symbolic reasoning. 
While initial NeSy approaches have targeted mainly symbolic reasoning in propositional and first-order logics, recent works have started to address the construction of neurosymbolic frameworks for Temporal Logics, and in particular for \LTLF.
These approaches have established temporal NeSy as a promising research direction, laying the foundations for learning under temporal constraints.
Nonetheless, they leave many questions unanswered. 
From a theoretical perspective, several differentiable semantics for interpreting \LTLF have been proposed but have not yet been formally and systematically defined within a unified framework.
Moreover, existing approaches commonly rely on automata to represent temporal knowledge, resulting in limited scalability.
Motivated by this research gap, this paper provides the following contributions:
(i) formally defining different fuzzy semantics for \LTLF, and systematically analysing theoretical properties regarding equivalences and dualities of temporal operators; (ii) showing how these semantics can be directly integrated within a novel NeSy framework, called \DiffLTLf, enabling flexible and scalable learning without relying on the usage of automata; and (iii) introducing a novel evaluation protocol of increased complexity of learning tasks w.r.t.~existing benchmarks.

Our results show that the choice of fuzzy semantics has a significant impact on predictive performance. Moreover, \DiffLTLf achieves performance on par with, and sometimes superior to, state-of-the-art probabilistic approaches while substantially improving scalability. Taken together, these results establish direct fuzzy interpretations as a competitive and scalable alternative to existing temporal NeSy frameworks.

% \riccardo{
% Our theoretical results show that the classical \LTLF equivalences between temporal operators are preserved in full only under the Gödel semantics, whereas under Product and Łukasiewicz some operators lose their interdefinability and require a native definition.
% }

\end{abstract}

% %%Graphical abstract
% \begin{graphicalabstract}
% %\includegraphics{grabs}
% \end{graphicalabstract}

% %%Research highlights
% \begin{highlights}
% \item Research highlight 1
% \item Research highlight 2
% \end{highlights}

% %% Keywords
% \begin{keyword}
% %% keywords here, in the form: keyword \sep keyword

% %% PACS codes here, in the form: \PACS code \sep code

% %% MSC codes here, in the form: \MSC code \sep code
% %% or \MSC[2008] code \sep code (2000 is the default)

% \end{keyword}

\end{frontmatter}

%% Add \usepackage{lineno} before \begin{document} and uncomment 
%% following line to enable line numbers
%% \linenumbers

%% main text
%%

%% Use \section commands to start a section
\section{Introduction}
\label{sec:intro}

Neurosymbolic (NeSy) Artificial Intelligence (AI) aims to integrate Deep Learning (DL) architectures with symbolic reasoning techniques~\cite{DBLP:series/faia/BesoldGBBDHKLLPPPZ21}. This integration aims to combine the strengths of deep neural networks and symbolic AI while overcoming their respective weaknesses, addressing important problems related to data efficiency, fairness, trust, and safety of AI. 

%This integration is motivated by the observation that purely data-%driven models often struggle to incorporate prior knowledge, %enforce logical constraints, and generalize reliably in structured %domains.

A number of NeSy approaches have been proposed in recent years, differing both in the symbolic formalisms they adopt and in how symbolic knowledge is incorporated into gradient-based learning. Concerning the former, most of the efforts have been focused on propositional and first-order logics~\cite{DBLP:conf/nips/ManhaeveDKDR18, ltn}. Concerning the latter, end-to-end training is typically enabled by relaxing symbolic representations into numerical forms. As a result, different interpretations, including fuzzy and probabilistic semantics, have been explored to align symbolic reasoning with the learning dynamics of neural models \cite{ltn, SBR, DBLP:conf/nips/ManhaeveDKDR18, DBLP:conf/icml/XuZFLB18}.

In the last few years the NeSy community has witnessed the emergence of a new series of works (see in particular \cite{ltlzincIJCAI, DBLP:conf/kr/UmiliCG23, DBLP:conf/ijcai/ManginasPR25,DBLP:conf/nesy/AndreoniBDGMR25}) that aim to build neurosymbolic frameworks for Temporal Logics, and in particular for Linear time Temporal Logic (\LTL)~\citep{Pnue77} and its variant \LTLF on finite traces~\cite{DBLP:conf/ijcai/GiacomoV13}. This is not surprising. In fact, \LTL and \LTLF are widely used across a range of domains—including formal methods \citep{Pnue77}, automated planning \citep{DeDM14}, AI-augmented process mining \citep{declare_handbook}, and reinforcement learning \citep{restraining_bolt}. 
	These new temporal NeSy formalisms are characterised by the fact that learning must account for \emph{sequential data} and \emph{time-dependent constraints}. 
	An important difference between these works concerns the way in which 
	the temporal knowledge is incorporated within the framework. The works of \citet{DBLP:conf/kr/UmiliCG23} and \citet{DBLP:conf/ijcai/ManginasPR25} rely on a finite-state machine (automata) transformation of the temporal formula that is used to define supervision signals over input sequences. The former uses fuzzy automata under the Product semantics, while the latter uses probabilistic automata. 
	 Instead, the work of \citet{DBLP:conf/nesy/AndreoniBDGMR25} interprets directly \LTLF specifications through fuzzy logic under the G\"{o}del semantics to guide the learning process. This is to overcome the computational overhead required to (1) build the automaton and (2) traverse it multiple times during the training procedure. 
	 
These works have a great merit, as they have extended NeSy frameworks to reference logics for dynamic domains. Nonetheless, they are still first, and somehow preliminary, proposals towards the solid definition of temporal NeSy frameworks. From the description above it is easy to see that they all exploit different differentiable semantics to interpret classical \LTL, and none of them takes into account the problem of investigating the impact that this choice has, both in terms of the relationship between the differentiable semantics and the classical one, and in terms of performances. This is a crucial aspect, especially for formalisms based on fuzzy logics, as prior work on propositional logics shows that the choice of fuzzy semantics can significantly affect the predictive performance within NeSy frameworks~\cite{DBLP:journals/ai/KriekenAH22}.
 	 
A second important merit of the works above is to have identified a clear task for the evaluation of these methods, that is weakly supervised symbol grounding (see Section~\ref{sec:task_overview}). Nonetheless, the works mentioned above provide limited attention to the issue of scalability. While this is understandable, as they are the first works in the field, scalability is an important property that needs to be considered and investigated when developing a NeSy framework. %Related to this, a recent benchmark that was proposed for this task~\cite{ltlzincIJCAI} works only for approaches that rely on the explicit construction of the automaton corresponding to the temporal formula and this can hamper the comparison of different types of systems.
Related to this, \citet{ltlzincIJCAI} recently proposed a benchmark for this task. However, it applies only to NeSy methods that compile temporal knowledge into automata, which hampers the assessment of systems relying on alternative integration approaches.

This work has the aim to overcome the limitations listed above: (i) generalizing the work of \citet{DonadelloFIMM25}, we provide a systematic study of fuzzy semantics in temporal NeSy learning, where
we show that the classical \LTLF equivalences between temporal operators are preserved in full only under the Gödel semantics, whereas under Product and Łukasiewicz some operators lose their interdefinability and require a native definition;
% (ii) together with theoretical insights into the interpretation of fuzzy truth values in temporal logics, clarifying how different semantics affect both predictive performance and the evaluation of temporal operators. 
% This study aims to provide the theoretical basis for the usage of fuzzy temporal logics in NeSy settings; 
(ii) we show how different fuzzy semantics can be directly integrated within a novel NeSy framework, called \DiffLTLf, enabling flexible learning without having to rely on the usage of automata; (iii) we take inspiration from \citet{ltlzincIJCAI} and introduce an extensive evaluation protocol for temporal NeSy, that works for both automata-based and automata-free frameworks. 
To start addressing the problem of scalability, we increase the complexity of learning scenarios w.r.t.~\cite{ltlzincIJCAI}. 
The evaluation shows that: (1) different fuzzy semantics affect predictive performance, and (2) the direct fuzzy interpretations of  \DiffLTLf overall achieve comparative performance with the state-of-the-art method proposed in \cite{DBLP:conf/ijcai/ManginasPR25} while significantly increasing scalability, thus confirming trends already exposed for first-order logics~\cite{DBLP:conf/nips/MaeneR23} where probabilistic approaches tend to improve predictive performance, while fuzzy logics tend to increase computational efficiency.

Besides this particular work, and the proposal of \DiffLTLf, we believe that the systematic theoretical investigation of fuzzy \LTLF in Section~\ref{sec:fuzzyLTL}, and the novel evaluation setting provided in Section~\ref{sec:evaluation} can provide a solid contribution to the development of future temporal NeSy frameworks, thus contributing to the consolidation of this field. 

The paper is structured as follows. 
% \todo[inline]{P: placeholder:}
%
In Section \ref{sec:background} we summarise the required preliminaries and background material; in Section \ref{sec:conceptual} we illustrate the state of the art in temporal NeSy, including available benchmarks, focusing on the most relevant approaches against which our experimental evaluation is carried out; in Section \ref{sec:fuzzyLTL} we introduce our fuzzy variant of the \LTLF temporal logic, formalising and studying three distinct semantics; in Section \ref{sec:diffLTLf} we illustrate the overall \DiffLTLf framework and formally state the NeSy task at hand; in Section \ref{sec:evaluation} we formulate critical research questions and carry out experimental evaluation, comparing \DiffLTLf against the existing approaches in the literature that were already identified; in Section \ref{sec:results} we present the results of our evaluation with respect to the research questions that were formulated; in Section \ref{sec:threat_validity} we underline some limitations of the current study. Conclusions are summarised in Section \ref{sec:conclusions}, and some technical implementation details and ancillary experimental results are given in appendix.
The \DiffLTLf implementation and the experiments are available at \href{https://github.com/andreoniriccardo/DiffLTLf}{github.com/andreoniriccardo/DiffLTLf}.
\section{Background}
\label{sec:background}
In this section we revise the background notions needed in the paper. We start by introducing the task of weakly supervised symbol grounding which we aim to solve with the proposed \DiffLTLf. We continue by revising the definitions of Linear Time Temporal logic both in its classical setting and in the setting of finite traces. For the sake of readability we have decided to omit a separate background section on propositional fuzzy logics. Instead, we introduce the key notions of fuzzy logics when they are needed in defining the fuzzy \LTLF variants in Section \ref{sec:fuzzyLTL}.    

\subsection{Weakly Supervised Symbol Grounding under Distant Temporal Supervision}
\label{sec:task_overview}

In this section, we revise the \textit{weakly supervised symbol grounding under distant temporal supervision} problem addressed by previous temporal neurosymbolic frameworks \cite{ltlzincIJCAI, DBLP:conf/kr/UmiliCG23, DBLP:conf/ijcai/ManginasPR25,DBLP:conf/nesy/AndreoniBDGMR25}. In this task, a model must discover the symbolic meaning of visual observations from sequences, supervised only by a binary label for each sequence, where the label expresses compliance with a temporal specification.

The setting is as follows: a model observes sequences of images, where each image represents one of a finite set of categories unknown to the model.
Each sequence is annotated with a single binary label indicating whether the sequence, interpreted at the symbolic level, satisfies or violates a given temporal specification (i.e., a rule that constrains the admissible ordering of symbols over time).
Crucially, the model has no access to the category of the individual images: it is never told which category any particular image belongs to.
The learning objective is to train a perception model that maps each raw image to a symbolic category, using only the sequence-level compliance label as supervision.

\begin{figure}[bt]
% \centering\includegraphics[trim={.7cm .4cm .7cm .4cm}, clip, width=1.\linewidth]{Figures/task_overview_industrial_v2.pdf}
\centering\includegraphics[trim={.6cm .4cm .6cm .4cm}, clip, width=1.\linewidth]{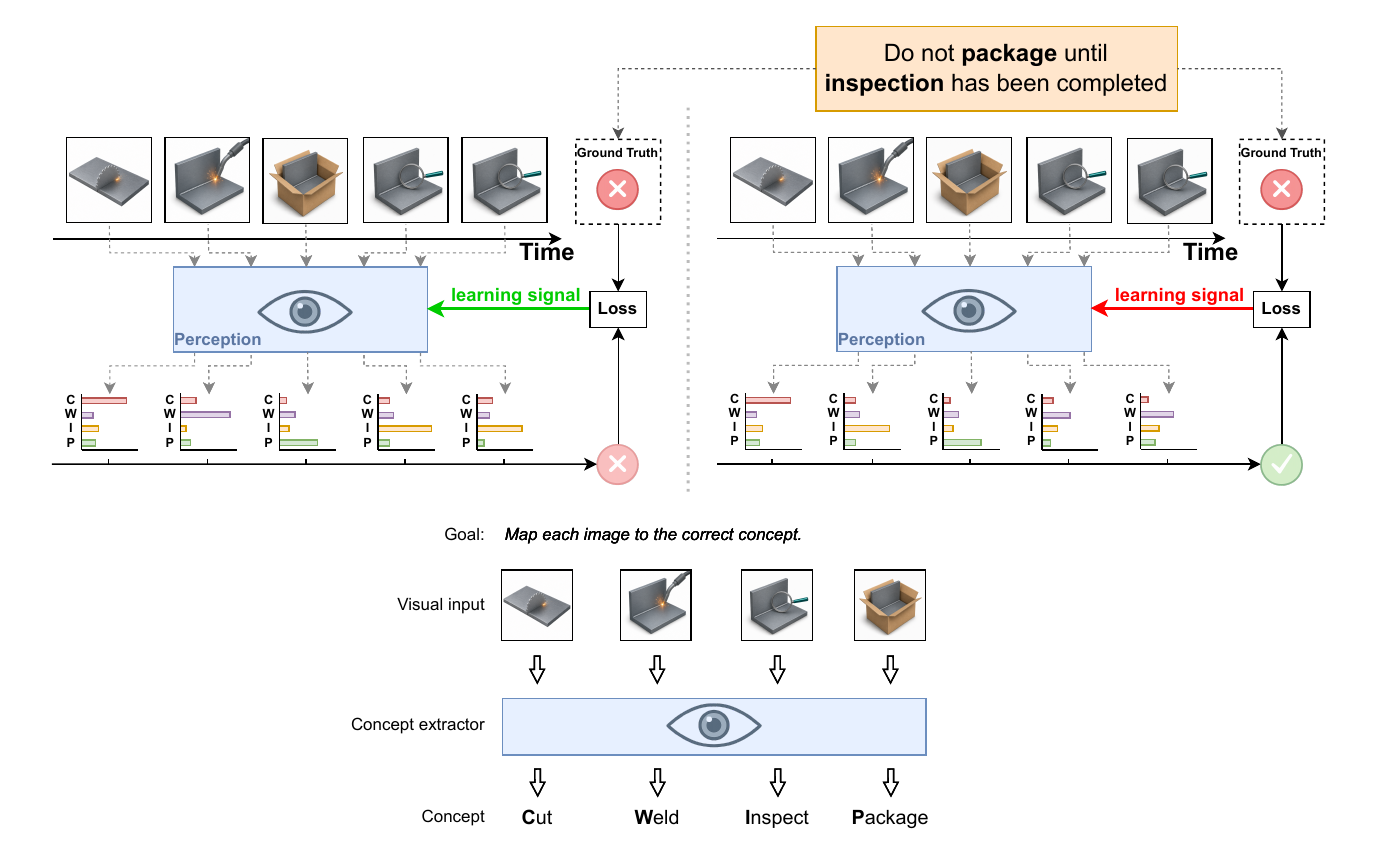}
  \caption{Illustrative example of the weakly supervised learning task on a simplified manufacturing scenario. Left: correct perception leads to a correct assessment of non-compliance, matching the ground-truth label. Right: incorrect perception leads to a wrong assessment of compliance, producing a corrective learning signal.}
  \label{fig:task_overview}
\end{figure}

Figure~\ref{fig:task_overview} 
illustrates the task's learning process on a simplified manufacturing example. 
This example involves four possible activities: cutting, welding, inspecting, and packaging. Moreover, the scenario at hand is subject to the temporal specification ``\textit{do not package until inspection has been completed'}'.
Both sides of the figure show the same input sequence, which violates this rule.
On the left hand side, the perception module correctly recognises each image, resulting in a correct classification of the sequence as non-compliant.
On the right hand side, the perception module misclassifies some images, resulting in an incorrect classification of the sequence as compliant. The mismatch with the ground-truth label produces a large loss, and the resulting learning signal pushes the perception module to correct its predictions (red arrow).
In both cases, the model receives only the binary compliance label, and the symbolic category of the individual images is never provided.

This example serves as a high-level illustration of the learning problem; the scenario and the images in Figure~\ref{fig:task_overview} are not part of the experimental evaluation. The formal definition of the task, including the temporal specification language and the proposed neurosymbolic framework, is presented in Sections \ref{sec:fuzzyLTL} and \ref{sec:diffLTLf} respectively. 
The experimental evaluation is detailed in Section~\ref{sec:evaluation}.

\subsection{Linear Temporal Logic on Finite Traces}
\label{sec:ltlf}

Linear-time logics, that is, temporal logics predicating on traces, provide the most natural choice to express symbolic knowledge in our setting. Traditionally, traces are assumed to have an infinite length, as witnessed by Linear Temporal Logic (\LTL)~\citep{Pnue77}. 
In several application domains, such as planning and process mining, the dynamics of the system are more naturally captured using unbounded, but \emph{finite}, traces \cite{DeDM14}. This led to \emph{\LTL on finite traces} (\LTLF)~\citep{DBLP:conf/ijcai/GiacomoV13}. 

An \LTLF formula $\varphi$ over a finite set $\P$ of propositional symbols follows the grammar \cite{DBLP:conf/ijcai/GiacomoV13}:
%\begin{equation*}
%\varphi  ::=  \p \mid \lnot \varphi \mid %\varphi_1 \lor \varphi_2 \mid \tomorrow %\varphi \mid \varphi_1 \until \varphi_2, %\text{ where $\p \in \P$.}
%\end{equation*}
\begin{equation*}
\varphi  ::=  
%\top 
%\mid \bot 
%\mid 
\p 
\mid \neg \varphi \mid \varphi \lor \varphi 
\mid \X \varphi 
\mid \varphi \U \varphi
\end{equation*}
where $\p \in \P$. 
%and $\top$ denotes the \emph{true} value. 

The semantics of these formulae is  defined over \emph{finite traces}, usually formalised as non-empty sequences $\lambda = \langle A_1,A_2 \ldots,A_n \rangle$ where each $A_i\in 2^{\P}$ for $i> 0$ is the set of propositional symbols that are true at instant $i$, namely is a propositional assignment over $\P$. For uniformity with the fuzzy setting discussed later, and in particular with the definition of traces used in Section~\ref{sec:fuzzyLTL}, we adopt a functional, equivalent definition, where 
%However, in this paper we adopt an equivalent representation for uniformity with the definition of traces adopted in later sections, when the fuzzy-counterpart of \LTLF is introduced. 
%
a trace is represented as a non-empty series of instants, each mapping \emph{every} propositional symbol from $\mathcal{P}$ to either $0$ (for \emph{false}) or $1$ (for \emph{true}).  

%
%As we are dealing with traces of finite length, we denote the last instance of a trace as $last(\lambda)$.

\begin{definition}\label{def:traces_ltlf}
A \emph{trace} of length $n$ is a non-empty vector $\lambda=\langle \lambda_0,\ldots,\lambda_{n-1}\rangle$ of functions in which, for each  instant $i = 0,\ldots,n{-}1$, the $i$-th function $\lambda_i : P \mapsto \{0,1\}$ assigns a truth value to each symbol $p\in \mathcal{P}$. Given $i \in \{0,\ldots,n-1\}$ and $p \in P$, $\lambda_i(p)$ is equivalently denoted by $\lambda(p,i)$. 
%
%For brevity, we denote by $\lambda(p,i)$ the value associated to $p$ at instant $i$, namely $\lambda_i(p)$. Moreover, in Section \ref{sec:diffLTLf} and following we write  $\lambda(p)=\langle p_0, p_1, \ldots p_{n-1}\rangle$ to denote the sequence of all values of $p$ along the trace, namely the sequence of values $p_i=\lambda(p,i)$.   
\end{definition}
Notably, in this paper we consider only non-empty traces of finite length, and denote the last instant of a trace $\lambda$ of length $n$ by  $last(\lambda)=n{-}1$. 

\begin{figure}[t]
\centering
\noindent
\resizebox{\textwidth}{!}{
\newcommand{\cellw}{18ex}
\begin{tikzpicture}[
  node distance=0.8cm and 1.3cm,
  bend angle=45,
  font=\sffamily, 
  remember picture,
  auto,
]

\matrix[
  trace-table, 
  text width = \cellw,
] (l) {
 $i=0$ \& $i=1$ \& $i=2$ \& $i=3$ \& $i=4$\\
\fuzzyval{1} \& \fuzzyval{1} \& \fuzzyval{0} \& \fuzzyval{1} \& \fuzzyval{1} \\
\fuzzyval{0} \& \fuzzyval{0} \& \fuzzyval{1} \& \fuzzyval{1} \& \fuzzyval{0} \\
};
\node[
    right=0mm of l-1-1.west,
    anchor=east,
] {$\lambda$};
\node[
    right=0mm of l-2-1.west,
    anchor=east,
] {$a$};
\node[
    right=0mm of l-3-1.west,
    anchor=east,
] {$b$};

\matrix[
  eval-table, 
  below= 5mm of l,
  text width = \cellw,
] (u) {
\fuzzyval{1} \& \fuzzyval{1} \& \fuzzyval{1} \& \fuzzyval{1} \& \fuzzyval{0} \\
\fuzzyval{0} \& \fuzzyval{1} \& \fuzzyval{1} \& \fuzzyval{0} \& \fuzzyval{0} \\
};
\node[
    right=0mm of u-1-1.west,
    anchor=east,
] {$a \U b$};
\node[
    right=0mm of u-2-1.west,
    anchor=east, 
] {$\X b$};
\end{tikzpicture}
}
 \caption{Graphical depiction of an \LTLF trace of length 5, and satisfaction of the two formulae $a \U b$ and $\X b$ in every instant of the trace. }
    \label{fig:ltlf-example}
\end{figure}

\begin{example}
\label{ex:ltlf-trace}
    Figure~\ref{fig:ltlf-example} (top) graphically depicts an \LTLF trace $\lambda$ of length 5 defined over $\P = \set{a,b}$. We have $last(\lambda) = 4$. Also, $\lambda(\fluent{a},0) = 1$ is the truth value associated to  symbol $\fluent{a}$ at instant $0$ (the beginning of the trace), while $\lambda(\fluent{a},1) = 1$ is the value of $\fluent{a}$ in the following instant. 
\end{example}

% \todo{Transform this sentence into an example, possibly using also the graphical notation, together with the evaluation of a formula (to be discussed in a later example).} For instance, $\lambda(\fluent{a},0)$ is the truth value associated to  symbol $\fluent{a}$ at instant $0$ in the trace $\lambda$, namely at the beginning of the trace, while $\lambda(\fluent{a},1)$ is the value of $\fluent{a}$ in the following instant. 
%

% A \emph{trace} \lambda over $\P$ is a finite, non-empty sequence $\lambda = \langle \lambda_1,\lambda_2 \ldots,\lambda_n \rangle$ where each $\lambda_i, i> 0$ is a propositional assignment (in symbols $\lambda_i\in 2^{\P}$) indicating which propositional atoms from $\P$ are true at instant $i$ in the trace. The length $n$ of $\lambda$ is denoted $\length(\lambda)$.
 
The grammar above extends propositional logic on $\P$ with formulae employing the temporal operators $\X$ and $\U$, respectively representing \emph{(strong) next} and \emph{(strong) until}. Intuitively, when evaluated in an instant of the trace: $\X \varphi$ states that there exists a next instant, and $\varphi$ holds therein; $\varphi_1 \U \varphi_2$ states that $\varphi_2$ holds in the current or a later instant, and in all instants in between, $\varphi_1$ holds.

 %For $i<\length(\lambda)$, $\lambda(\colon i)$ is the prefix $\lambda(0),\lambda(1)\ldots,\lambda(i)$ of $\lambda$. 

Formally, for an \LTLF formula $\varphi$, a trace $\lambda$, and an instant $i \in \set{0,\ldots,last(\lambda)}$, we inductively define that $\varphi$ is true in instant $i$ of $\lambda$, written $\lambda,i\models \varphi$, as  \cite{DBLP:conf/ijcai/GiacomoV13}: 
\[
\begin{array}{l l l}
\lambda,i \models  \p\in \P & \text{if\;\;} & \lambda(p,i)=1 \\
\lambda,i \models  \lnot \varphi & \text{if} & \lambda,i \not \models \varphi\\
\lambda,i \models \varphi_1 \lor \varphi_2 & \text{if} & \lambda,i \models \varphi_1 \text{ or }  \lambda,i \models \varphi_2\\
\lambda,i \models \X \varphi &
\text{if} & i < last(\lambda) \text{ and } \lambda,i{+}1 \models \varphi \\
\lambda,i \models \varphi_1 \U \varphi_2 
&
\text{if } & \lambda,j \models \varphi_2 \text{ for some } j \text{ s.t.~} i \leq j \leq last(\lambda) 
\text{ and } \\
&  & \qquad \lambda,k \models \varphi_1 \text{ for every } k \text{ s.t.~} i \leq k < j
\end{array}
\]
%
%\noindent 
We say that $\lambda$ satisfies $\varphi$, written $\lambda \models \varphi$, if $\lambda,0 \models \varphi$. 

\begin{example}
\label{ex:ltlf-semantics}
Figure~\ref{fig:ltlf-example} (bottom) graphically depicts, instant by instant, the truth values of the two \LTLF formulae $a \U b$ and $\X b$, evaluated over the trace $\lambda$ from Example~\ref{ex:ltlf-trace} (and shown in the top part of the same figure). 

Formula $\X b$ essentially shifts of one instant ahead the truth values of $b$ as assigned by $\lambda$, and is false in instant $4 = last(\lambda)$ since there is no next instant in $\lambda$.

Formula $a \U b$ evaluates to false in instant $4$: while the left-hand side holds therein, there is no later instant in $\lambda$ where $b$ holds. The formula instead evaluates to true in all other instants: in instants $2$ and $3$ because the right-hand side holds, and in instants $0$ and $1$ because the left-hand side holds there and in later instants until, in instant $3$, the right-hand side holds. The truth values at instant $0$ indicate the overall verdict for the entire trace: $\lambda \models a \U b$, and $\lambda \not \models \X b$.
\end{example}

The syntax and semantics of the other standard boolean connectives $\top$, $\bot$, $\land$, $\limp$ are derived as usual. Further key temporal operators are derived from $\tomorrow$ and $\until$ as follows.
\begin{itemize}
\item \emph{weak next} ($\Xw$): $\Xw \varphi \equiv \X \varphi \lor \neg \X \top$, capturing that if there is a next instant, then $\varphi$ holds therein;
\item \emph{release} ($\release$): $\varphi_1 \release \varphi_2 \equiv \neg (\neg \varphi_1 \until \neg \varphi_2)$, capturing that $\varphi_2$ holds until, and including the moment, where $\varphi_1$ holds (if this condition never occurs, then $\varphi_2$ holds throughout the entire trace);
\item \emph{globally} ($\always$): $\always \varphi \equiv \bot \release \varphi$, capturing that $\varphi$ holds throughout the entire trace;
\item \emph{eventually} ($\eventually$): $\eventually \varphi \equiv \top \until \varphi$, capturing that $\varphi$ holds at some instant in the trace.
\item \emph{strong release} ($\mrelease$): $\varphi \mrelease \psi \equiv \varphi \release \psi \land \eventually \varphi$, capturing a stronger form of \emph{release} where $\varphi$ holds at some point.
\item \emph{weak until} ($\W$): $\varphi \W \psi \equiv \varphi \U \psi \lor \G \varphi$, capturing a weaker form of \emph{until} where $\psi$ is not required to hold at some point and, in that case, $\varphi$ must globally hold. 
\end{itemize}

The process mining community has selected a number of patterns of \LTLF formulas that are particularly significant for describing business processes in a declarative manner. These patterns constitute the \declare modelling language~\citep{declare_handbook}. Examples of \declare patterns are provided in Table~\ref{tab:declare} together with their \LTLF formalisation and intuitive meaning. Following \cite{DBLP:conf/kr/UmiliCG23,DBLP:conf/nesy/AndreoniBDGMR25,DBLP:conf/ecai/UmiliC24}, we use \declare formulae in the evaluation of our approach in Section~\ref{sec:evaluation}.
\begin{table}[tb]
\centering
\caption{\declare pattern templates used in the benchmark. $A$ and $B$ denote stream-specific propositions or Boolean combinations of them.}
\label{tab:declare}
\begin{tabularx}{\columnwidth}{llX}
\toprule
\declare Pattern & \LTLF Template & Description \\
\midrule
Response            & $\always(A \rightarrow \eventually\, B)$            & If $A$ occurs, $B$ must eventually follow \\
Not Succession      & $\always(A \rightarrow \neg\eventually\, B)$        & If $A$ occurs, $B$ must never follow \\
Chain Succession    & $\always(A \rightarrow \tomorrow\, B)$            & If $A$ occurs, $B$ must hold at the next timestep \\
Not Chain Succession & $\always(A \rightarrow \neg\tomorrow\, B)$       & If $A$ occurs, $B$ must not hold at the next timestep \\
Precedence          & $(\neg B \;\until\; A) \lor \always(\neg B)$                              & $B$ cannot occur before $A$ has occurred \\
Chain Precedence    & $\always(\tomorrow\, B \rightarrow A)$            & Every occurrence of $B$ is immediately preceded by $A$ \\
Responded Existence & $\eventually\, A \rightarrow \eventually\, B$            & If $A$ occurs at some point, $B$ must also occur \\
Not Co-existence    & $\eventually\, A \rightarrow \neg\eventually\, B$        & $A$ and $B$ cannot both occur in the sequence \\
\bottomrule
\end{tabularx}
\end{table}

\section{State of the Art in Temporal NeSy}
\label{sec:conceptual}
The majority of works in the NeSy community focus on static, relational domains~\cite{DBLP:series/faia/BesoldGBBDHKLLPPPZ21}. Nonetheless recent efforts have started proposing solutions for NeSy learning over temporal domains, dealing with different types of formalisms, architectures, and evaluation protocols.
In this section, we describe the main existing frameworks with the goal of highlighting their contributions and their limitations. The limitations illustrated here motivate our overall work and the definition of \DiffLTLf.   

The works we take into account here are \FuzzyDFA~\cite{DBLP:conf/kr/UmiliCG23}\footnote{\citet{DBLP:conf/kr/UmiliCG23} do not introduce a name for their system. We use here the same name used by \citet{DBLP:conf/ijcai/ManginasPR25} to refer to the system in \cite{DBLP:conf/kr/UmiliCG23}.}, \NeSyA~\cite{DBLP:conf/ijcai/ManginasPR25}, \TILR~\cite{DBLP:conf/nesy/AndreoniBDGMR25}, and the \LTLZinc benchmark~\cite{ltlzincIJCAI}. We focus on these papers because they provide the most recent and impactful contributions.\footnote{Early work that combine temporal logics and neural systems operate on symbolic inputs, assuming propositional truth values rather than grounding them in perceptual data. They then compile temporal (and epistemic) logics into the architecture and weights of neural networks for logical deduction or runtime monitoring~\cite{NIPS2003_34766559,6889961}. Another, more recent, work~\cite{an_eye_into} exploits \LTLF for suffix prediction under \LTLF constraints but does not rely on the direct, differentiable integration of temporal logic into NeSy architectures. For these reasons, they are not analysed in this section. The works of~\citet{DBLP:conf/nesy/UmiliABC23} and \citet{DBLP:conf/pmai/UmiliLP24}, which address the tasks of non-Markovian reinforcement learning and  suffix prediction under \LTLF constraints, respectively, are instead not analysed here as they follow the same conceptual approach of \FuzzyDFA~\cite{DBLP:conf/kr/UmiliCG23}.}

We first review the three frameworks of \FuzzyDFA~\cite{DBLP:conf/kr/UmiliCG23}, \NeSyA~\cite{DBLP:conf/ijcai/ManginasPR25}, and \TILR~\cite{DBLP:conf/nesy/AndreoniBDGMR25}, and we conclude the section with the \LTLZinc benchmark~\cite{ltlzincIJCAI}.

\subsection{NeSy Frameworks for Temporal Logic}
\label{sec:sota-temporal}
A first important observation is that all these recent efforts focus on a particular variant of Temporal Logics, that is, Linear time Temporal Logics on finite traces (\LTLF). Having said so, they solve the task of Weakly Supervised Symbol Grounding under Distant Temporal Supervision in different manners. We can make explicit these differences along two main dimensions: (i) how the temporal knowledge is integrated into the learning process, and (ii) which differentiable semantics is used to interpret the temporal formula. Let us explore these two dimensions one by one.

\paragraph{Integration of Knowledge}
Concerning the way in which temporal knowledge is integrated into the learning process, the dominant approach compiles the \LTLF specification into a deterministic finite-state automaton (DFA), which is then evaluated in a differentiable manner over the perception outputs. The two most representative instances of this approach are \FuzzyDFA \cite{DBLP:conf/kr/UmiliCG23} and \NeSyA \cite{DBLP:conf/ijcai/ManginasPR25}. In these frameworks, at each timestep, a neural perception module produces soft truth values for the atomic propositions, which are used to evaluate the automaton's transition guards.
The guard values then update the truth values associated with each automaton state, recursively from the first to the last instant of the sequence.

The explicit construction of a finite-state automaton from the \LTLF formula originates two distinct limitations:
% Generally, automata-based approaches suffer from two limitations.
first, the automaton must be built before training, and its size is, in the worst case, double-exponential in the size of the \LTLF formula~\cite{DBLP:conf/ijcai/GiacomoV13}.
Second, during training the evaluation of the state of the automaton is computed recurrently over the sequence, each timestep depending on the previous one, so the evaluation is inherently sequential, preventing parallelisation over time. 

To overcome these limitations, \TILR \cite{DBLP:conf/nesy/AndreoniBDGMR25} operates directly on the logical structure of the formula, foregoing the need for an intermediate DFA-based representation.
In particular, \TILR proposes a differentiable framework in which \LTLF formulas are evaluated through fuzzy semantics without compiling them into an intermediate automaton. This allows the model to compute a degree of satisfaction directly from the neural outputs, while still enabling gradient-based optimization. 

In this work we follow the approach of \TILR, and base our proposal \DiffLTLf on this latter approach of evaluating directly \LTLF formulas through a differentiable semantics without compiling them into an intermediate automaton.

\paragraph{Differentiable Semantics}
Concerning the differentiable semantics used to interpret the temporal formula, we can observe a heterogeneous situation. In fact, \FuzzyDFA,  \TILR, and \NeSyA adopt three different semantics. 

More in detail, \FuzzyDFA adopts a fuzzy interpretation of the automaton, and exploits the \textit{Product semantics} for the fuzzy connectives. The authors of \FuzzyDFA claim to be inspired, in their work, by Logic Tensor Networks \cite{ltn}, and the choice of Product semantics appears to be grounded on this inspiration. 
Similarly to \FuzzyDFA, \TILR chooses to rely on a fuzzy approach, but opts for the \textit{G\"{o}del semantics}. This choice is justified in terms of the theoretical investigation of fuzzy \LTLF provided in \citet{DonadelloFIMM25}. 
Finally, \NeSyA replaces fuzzy semantics with \textit{probabilistic reasoning}. The probabilistic semantics is exact and avoids the approximation introduced by fuzzy evaluations.
% It comes, however, at a computational cost, as Weighted Model Counting must be carried out for every transition of the automaton at every timestep.\todo{@Ale: inserire footnote Weighted Model Counting per NesYA}
It comes, however, at a computational cost, as exact probabilistic inference requires solving a Weighted Model Counting (WMC) problem.\footnote{In practice, WMC is often evaluated through compiled probabilistic circuits rather than explicit model enumeration, which can enable efficient inference when compact compiled representations are available.}

The divide between probabilistic and fuzzy approaches is something to be expected, and mimics what has happened for the more established NeSy frameworks for static, relational domains, that is, domains where knowledge is formalised using propositional or first-order logics. 

Nonetheless, we can observe a lack of systematic investigation of the choices one has and of the impact of these choices on the resulting framework. This is a crucial limitation both for formalisms based on fuzzy logics, and for the choice between probabilistic and fuzzy logics. In fact, prior work on NeSy frameworks for Propositional and First Order  Logics shows that the choice of fuzzy semantics can significantly affect the predictive performance within NeSy frameworks~\cite{DBLP:journals/ai/KriekenAH22}, and a trade off exists between probabilistic and fuzzy approaches, where probabilistic approaches tend to improve the predictive performance, while fuzzy logics tend to improve (that is, decrease) the computational cost.

In this work, we aim for scalability and therefore opt for the fuzzy approach. Nonetheless we do it in a principled manner, first by providing theoretical investigation of different fuzzy semantics for \LTLF obtained using different t-norms, and then by defining  \DiffLTLf as a semantics-agnostic, fuzzy framework for temporal data where one can opt for different fuzzy semantics. Moreover, we aim at providing a first comprehensive and systematic evaluation of the different fuzzy and probabilistic semantics in terms of predictive performance and computational cost, thus obtaining systematic insights on the impact that the different differentiable semantics can have on the task. 

\subsection{Temporal NeSy benchmarks}
It is worth noting that, despite their architectural differences, the state-of-the-art methods described above are all evaluated on the task introduced in Section~\ref{sec:task_overview}, where supervision is provided only at the sequence level.

A recent benchmark related to this task is that of \LTLZinc~\cite{ltlzincIJCAI}, which targets sequence classification under temporal specifications. While the proposal of a benchmark is extremely important in building a solid research community, we can observe several limitations in this work when we aim to use it for  comparing the different aspects of temporal NeSy methods under purely distant supervision.
First, its specifications interleave temporal operators with relational constraints, so that the contribution of the temporal reasoning component is not isolated.
Second, it introduces intermediate supervision signals at several levels of the pipeline, together with a supervised pre-training of the perception module. While this facilitates training, it makes the benchmark less suitable for assessing how methods perform in more challenging (and possibly more realistic) scenarios when only distant supervision is available.
Third, it relies on the supervision on the next state in the finite-state machine. This makes it applicable only to automata-based methods, excluding approaches that evaluate the temporal specification directly.

These limitations motivate the evaluation protocol we introduce in Section \ref{sec:evaluation}, which relies only on the sequence-level label and applies to both automata-based and automata-free methods. Moreover, our evaluation setting disentangles two distinct critical aspects that affect scalability, by varying the size of the temporal specification and the length of the input sequences in a separate manner.

\section{Fuzzy LTLF}
\label{sec:fuzzyLTL}
%% Labels are used to cross-reference an item using \ref command.

In this section we introduce the syntax and the semantics of the temporal fuzzy logic that we use to formalise specifications. 
To define the semantics, as customary for t-norm fuzzy logics, one has to choose how to evaluate the logical connectives, namely conjunction, disjunction, negation and implication, as this is not fixed univocally. 
Instead of committing to a specific choice of evaluation, in this work we consider three alternatives, in turn obtaining three fuzzy variants of \LTLF, which we term \FLTLF[G], \FLTLF[P], and \FLTLF[L]. \FLTLF[G] corresponds to the logic previously introduced in \cite{DonadelloFIMM25}, that is, fuzzy \LTLF with the Gödel semantics; \FLTLF[P] and \FLTLF[L]  adopt, respectively, a variant of the Product semantics and the Łukasiewicz semantics.  
For convenience of presentation, we denote this family of logics as \FLTLF. 

The remainder of this section is organised as follows: 
in Section~\ref{subsec:syntax_semantics} we provide the syntax and semantics of the three logics, focussing on the core temporal operators $\X$ (next) and $\U$ (until). In  Section~\ref{subsec:equivalences} we consider additional temporal operators that are well known in the literature on temporal logic, and that in the boolean case are derivable from $\X$ and $\U$. In particular, we investigate whether the equivalences known to hold for \LTLF hold for our case as well. We show that, irrespective of the semantics chosen, some of these operators can be defined from the two fundamental temporal operators $\X$ and $\U$ introduced in Section~\ref{subsec:syntax_semantics}, while others cannot be derived as abbreviations, and thus their semantics needs to be defined natively.

\subsection{\texorpdfstring{\FLTLF}{FLTLf} Syntax and Semantics}
\label{subsec:syntax_semantics}

The logics we consider are finite-trace variants of the known Fuzzy Linear-time Temporal Logic (\FLTL)~\cite{LK00,FrigeriPS14} that is interpreted over finite traces, and thus can be seen as fuzzy counterparts of \LTLF (see Section~\ref{sec:ltlf} for preliminaries on \LTLF).

Given a finite set $\mathcal{P}$ of propositional symbols, an \FLTLF formula $\varphi$ is written according to the following grammar:
\[
\varphi := \top ~|~ \bot ~|~ p ~|~ \neg \varphi ~|~ \varphi \land \varphi ~|~ \varphi \lor \varphi ~|~ \varphi \rightarrow \varphi ~|~ \X \varphi ~|~ \varphi \U \varphi
\]
where $p\in \mathcal{P}$, the symbols $\top$ and $\bot$ are the \emph{true} and \emph{false} constants, respectively, and $\neg$, $\land$, $\lor$, $\rightarrow$ are fuzzy connectives for negation, conjunction, disjunction, implication.  
As in \LTLF, the fundamental temporal operators are $\X$ (next) and $\U$ (until). 

The semantics of these formulae is defined again over finite traces over propositional symbols $\P$.  
However, whereas the sequences in  Section~\ref{sec:ltlf} assigned only crisp values, namely either $0$ or $1$, the traces used for \FLTLF are non-empty sequences of instants in which a \emph{real value in the interval} $[0,1]$ is assigned to each propositional symbol $p \in \mathcal{P}$, representing the \emph{degree of truth} of $p$ at each instant.  

\begin{definition}\label{def:traces}
A \emph{trace} of length $n$ is a vector $\lambda=\langle \lambda_0,\ldots,\lambda_{n-1}\rangle$ of functions in which, for each  instant $i = 0,\ldots,n{-}1$, the $i$-th function $\lambda_i : P \mapsto [0,1]$ assigns to each symbol $p\in \mathcal{P}$ a real value. 
For brevity, we denote by $\lambda(p,i)$ the value associated to $p$ at instant $i$, namely $\lambda_i(p)$. Moreover, in Section \ref{sec:diffLTLf} and following we write  $\lambda(p)=\langle p_0, p_1, \ldots, p_{n-1}\rangle$ to denote the sequence of all values of $p$ along the trace, namely the sequence of values $p_i=\lambda(p,i)$.   
\end{definition}

%
%For instance, $\lambda(\fluent{a},0)$ is the value associated to  symbol $\fluent{a}$ at instant $0$ in the trace $\lambda$, namely at the beginning of the trace, while $\lambda(\fluent{a},1)$ is the value of $\fluent{a}$ in the following instant. 
%
We restrict to non-empty traces of finite length, thus continue to denote the last instant of a trace $\lambda$ of length $n$ by  $last(\lambda)=n{-}1$.

As a consequence, differently from \LTLF, the evaluation of an \FLTLF formula $\varphi$ over a trace $\lambda$ at instant $i\geq 0$, denoted by $v(\varphi,\lambda,i)$, returns a real value in $[0,1]$ rather than necessarily a boolean value in $\{0,1\}$. 
This is defined below, where $p\in \mathcal{P}$ and $\varphi_1,\varphi_2$ are \FLTLF formulae:  
\[
\begin{array}{r c l}
v(\top,\lambda,i) & = & 1 \text{ and } v(\bot,\lambda,i)  =  0; \\ 
v(p,\lambda,i) & = & \lambda(p,i); \\
v(\neg \varphi,\lambda,i) & = & \tneg v(\varphi,\lambda,i) ;\\
v(\varphi_1 \land \varphi_2,\lambda,i) & = & v(\varphi_1,\lambda,i)~\tand~ v(\varphi_2,\lambda,i); \\
v(\varphi_1 \lor \varphi_2,\lambda,i) & = & v(\varphi_1,\lambda,i)~\tor~ v(\varphi_2,\lambda,i) ;\\
v(\varphi_1 \rightarrow \varphi_2,\lambda,i) & = &  v(\varphi_1,\lambda,i)~\timpl~ v(\varphi_2,\lambda,i) ; \\
v(\X \varphi,\lambda,i) & = & v(\varphi,\lambda,i{+}1) \text{ if } i<last(\lambda) \text{, } 0 \text{ otherwise}; \\
v(\varphi_1 \U \varphi_2,\lambda,i) & = &  v(\varphi_2,\lambda,i) ~\tor~ (v(\varphi_1,\lambda,i)~\tand~  v(\X(\varphi_1 \U \varphi_2),\lambda,i) ).
\end{array}
\]

\noindent
where $\tand$, $\tor$, $\tneg$, $\timpl$ are the chosen t-norm, t-conorm (or s-norm), negation function and implication function, respectively. 
As discussed in the remainder of this paper, this is indeed one crucial distinction between \LTLF and this fuzzy logic: the semantics of propositional connectives is explicitly given. 

It is also worth noting that the semantics of $\U$ (until) is here given using the alternative, equivalent recursive formulation based on the expansion law for the until operator \cite{BaiK08}, which in the propositional setting has the form $\varphi_1 \U \varphi_2 = \varphi_2 \lor (\varphi_1 \land \X(\varphi_1 \U \varphi_2))$. This makes explicit the role of the chosen t-norm (conjunction) and t-conorm (disjunction) in its semantics and thus in the evaluation.

We say that an \FLTLF formula $\varphi$ \emph{has value $k$} in a trace $\lambda$ at instant $i\geq 0$ iff $v(\varphi,\lambda,i)=k$.    
Similarly, we say that $\varphi$ has value $k$ in $\lambda$ iff $v(\varphi,\lambda,0)=k$, also written   $v(\varphi,\lambda)=k$. 
Finally, we say that two \FLTLF formulae $\varphi_1,\varphi_2$ are equivalent, written $\varphi_1\equiv\varphi_2$, if and only if they have the same value in every possible trace.

%Given a \FLTLF formula $\varphi$ and a trace $\lambda$, the verification task is to determine the value of $\varphi$ in $\lambda$, namely $v(\varphi,\lambda)$. 
%%

\medskip
Intuitively, the value of a propositional symbol $p\in\mathcal{P}$ at instant $i$ is simply the value assigned to $p$ by the trace, while boolean connectives are computed according to the chosen t-norm, t-conorm, negation and implication functions. 
A formula of the form $\X\varphi$ is evaluated at $i$ on the next instant $i{+}1$ of the trace, and if this does not exist, then the value is $0$. Instead, $\varphi_1 \U \varphi_2$ expresses that $\varphi_2$ eventually contributes positively to the value of the entire formula which is until then supported by the value of $\varphi_1$. 
However, as customary for t-norm fuzzy logics, it is evident that the precise semantics depends on the choice of connectives, which is not fixed univocally.

We consider the following three distinct semantics, thus a family of three resulting logics. 

\paragraph{{\FLTLF[G]}: G\"odel semantics} We fix the interpretation for the generic connectives in the semantics above to match the standard Zadeh logic. Namely, for any values $\alpha,\beta\in[0,1]$:

\begin{multicols}{2}
\begin{itemize}[itemsep=0pt,parsep=3pt,topsep=0pt]
\item $\alpha \tand \beta = \min\{ \alpha, \beta \}$	
\item $\alpha \tor \beta = \max\{ \alpha, \beta \}$
\item $\tneg \alpha = 1 - \alpha$
\item $\alpha \timpl \beta = (\tneg \alpha) \tor \beta$	
\end{itemize}
\end{multicols}

% \InlineNode[options]{name}{text}
%\NewDocumentCommand{\inlinenode}{O{} m m}{%
%  \subnode[inner sep=0pt,outer sep=0pt,#1]{#2}{#3}%
%\subnode[fill=cyan]{c}{ccccc}
%}

\newcommand{\uformula}{
\ensuremath{a \U b}
}

\newcommand{\nuformula}{
\ensuremath{\X(\uformula)}
}

\newcommand{\aformula}{
\ensuremath{a \land \nuformula}
}

\newcommand{\nucolorzero}{Goldenrod}
\newcommand{\nucolorone}{Dandelion}
\newcommand{\nucolortwo}{BurntOrange}

\newcommand{\acolorzero}{SpringGreen}
\newcommand{\acolorone}{LimeGreen}
\newcommand{\acolortwo}{JungleGreen}

\newcommand{\ucolorzero}{Lavender}
\newcommand{\ucolorone}{Orchid}
\newcommand{\ucolortwo}{Plum}

\newcommand{\atomcolor}{gray!50}

% \tikzset{
%   every matrix/.append style={
%     every cell/.style 2 args={
%       /utils/exec={
%         \ifnum##2>1\relax
%           \tikzset{nodes={text width = 1.5em}}
%         \fi
%       }
%     }
%   }
% }

\begin{figure}[t]
\centering

\noindent
\hspace*{-0.03\textwidth}%
\resizebox{1.05\textwidth}{!}{

\newcommand{\cellw}{42ex}
\begin{tikzpicture}[
  node distance=0.8cm and 1.3cm,
  bend angle=45,
  font=\sffamily, 
  remember picture,
  auto,
]

\matrix[
  trace-table, 
  text width = \cellw,
] (l) {
 $i=0$ \& $i=1$ \& $i=2$ \\
\fuzzyval{0.5} \& \fuzzyval{0.9} \& \fuzzyval{0.2} \\
\fuzzyval{0.2} \& \fuzzyval{0.3} \& \fuzzyval{0.8} \\
};
\node[
    right=0mm of l-1-1.west,
    anchor=east,
] {$\lambda$};
\node[
    right=0mm of l-2-1.west,
    anchor=east,
] {$a$};
\node[
    right=0mm of l-3-1.west,
    anchor=east,
] {$b$};

\matrix[
  eval-table, 
  below= 0mm of l,
  text width = \cellw,
] (u) {
    %|[fill=\nucolorzero]|
    $
    \formulabox{f0}{v(\uformula,0)}{Goldenrod} =
     \formulabox{b0}{v(b,0)}{\atomcolor}
     \tor 
     (  \formulabox{a0}{v(a,0)}{\atomcolor}
        \tand 
        \formulabox{u1}{v(\X(\uformula), 0)}{Dandelion}
    )
    $
  \&
   $\formulabox{f1}{v(\uformula, 1)}{Dandelion} =
   \formulabox{b1}{v(b,1)}{\atomcolor}
     \tor 
     (  \formulabox{a1}{v(a,1)}{\atomcolor}
        \tand 
        \formulabox{u2}{v(\X(\uformula), 1)}{BurntOrange}
    )
   $
  \&
   $\formulabox{f2}{v(\uformula, 2)}{BurntOrange} =
   \formulabox{b2}{v(b,2)}{\atomcolor}
     \tor 
     (  \formulabox{a2}{v(a,2)}{\atomcolor}
        \tand 
        \formulabox{u3}{v(\X(\uformula), 2)}{\atomcolor}
    )
   $
  \\
  \fuzzyg{$\max\{$ 0.2 $,\min\{$ 0.5 $,$ 0.8 $\}\}=$ 0.5}{0.5}
  \&
  \fuzzyg{$ \max\{$ 0.3 $, \min\{$ 0.9 $,$ 0.8 $ \}\}=$ 0.8}{0.8}
  \&
  \fuzzyg{$\max\{$ 0.8 $,\min\{$ 0.2 $,$ 0 $\}\}=$ 0.8 $= v(b,2)$}{0.8}
  \\
  \fuzzyp{0.2 + (0.5 $\cdot$ 0.8) - 0.2 $\cdot$ (0.5 $\cdot$ 0.8) $=$ 0.52}{0.52}
  \&
  \fuzzyp{0.3 + (0.9 $\cdot$ 0.8) - 0.3 $\cdot$ (0.9 $\cdot$ 0.8) $\simeq$ 0.8}{0.8}
  \&
  \fuzzyp{$(0.8 + 0.2 \cdot 0) - 0.8 \cdot (0.2 \cdot 0) =$ 0.8 $= v(b,2)$}{0.8}
  \\
  \fuzzyg{$\min\{$ 1$, $ 0.2 + $\max\{$ 0$, $ 0.5+1-1$\}\}=$ 0.7}{0.7}
  \&
  \fuzzyg{$\min\{$ 1$, $ 0.3 + $\max\{$ 0$, $ 0.9+0.8-1$\}\}=$ 1}{1}
  \&
  \fuzzyg{$\min\{$ 1$,  $0.8 + $ \max\{$0$,  $0.2+0-1$\}\}=$ 0.8 $= v(b,2)$}{0.8}
  \\
};
\node[
    right=0mm of u-1-1.west,
    anchor=east,
] {\uformula};
\node[
    right=0mm of u-2-1.west,
    anchor=east,
] {\textbf{G}};
\node[
    right=0mm of u-3-1.west,
    anchor=east,
] {\textbf{P}};
\node[
    right=0mm of u-4-1.west,
    anchor=east,
] {\textbf{L}};

\draw[out=80,in=-80] (a0.north) edge[link] (a0|-l-2-1.south);
\draw[out=100,in=-100] (b0.north) edge[link] (b0|-l-3-1.south);
\draw[out=80,in=-80] (a1.north) edge[link] (a1|-l-2-1.south);
\draw[out=100,in=-100] (b1.north) edge[link] (b1|-l-3-1.south);
\draw[out=80,in=-80] (a2.north) edge[link] (a2|-l-2-1.south);
\draw[out=100,in=-100] (b2.north) edge[link] (b2|-l-3-1.south);
\draw[out=30,in=150] (u1.north) edge[link] (f1.north);
\draw[out=30,in=150] (u2.north) edge[link] (f2.north);

\end{tikzpicture}
}

 \caption{Graphical depiction of a \FLTLF trace of length 3, and satisfaction of the formula $a \U b$ in every instant of the trace, under all three semantics.  }
    \label{fig:figUexample}
\end{figure}

\noindent 
This logic matches the one considered in \cite{DonadelloFIMM25}. Intuitively, the t-norm preserves the intuitive behavior of the conjunction so that its value depends on the weakest evidence of truth, while the t-conorm preserves the intuitive behavior of disjunction and thus selects the strongest evidence. The remaining connectives correspond to classical negation and the fuzzy counterpart of material implication. 

%For example, for evaluating $\varphi_1 \U \varphi_2$ in this logic, the different candidate witness instants for $\varphi_2$ is taken is compared by ta depends on the 

\paragraph{{\FLTLF[P]}: Product semantics, with standard negation} This semantics regards truth values as akin to  probabilistic information, namely, conjunction progressively weakens the value of truth evidence by multiplicative accumulation, while disjunction accounts for independent evidences. The values for conjunction and disjunction, therefore, behave as the probability of the intersection and union of independent events, respectively: 

\begin{multicols}{2}
\begin{itemize}[itemsep=0pt,parsep=3pt,topsep=0pt]
\item $\alpha \tand \beta = \alpha \cdot \beta $	
\item $\alpha \tor \beta = \alpha + \beta - \alpha \cdot \beta$
\item $\tneg \alpha = 1 - \alpha$
\item $\alpha \timpl \beta = (\tneg \alpha) \tor \beta$	
\end{itemize}
\end{multicols}

\paragraph{{\FLTLF[L]}: Łukasiewicz semantics} This semantics captures the notion that the truth value can be treated as a limited additive quantity, so by disjunction evidence of truth accumulates additively until a maximum, whereas by conjunction they combine additively but can compensate for each other until a point, so that an insufficient total value collapses to zero:
%\todo{Marco: sistemare la frase sopra (non mi viene in mente come)}

\begin{multicols}{2}
\begin{itemize}[itemsep=0pt,parsep=3pt,topsep=0pt]
\item $\alpha \tand \beta = \max\{ 0, \alpha + \beta-1\} $	
\item $\alpha \tor \beta = \min\{ 1, \alpha + \beta\}$
\item $\tneg \alpha = 1 - \alpha$
\item $\alpha \timpl \beta = (\tneg \alpha) \tor \beta$	
\end{itemize}
\end{multicols}

\bigskip

\begin{comment}

\begin{figure}[bt]
\centering
{\footnotesize
\[
\begin{array}{|@{\hspace{2pt}}c@{\hspace{2pt}}|@{\hspace{2pt}}c@{\hspace{2pt}}|@{\hspace{2pt}}c@{\hspace{2pt}}|@{\hspace{2pt}}c@{\hspace{2pt}}|}
\hline
\textbf{}
&
\textbf{G}
&
\textbf{P}
&
\textbf{L}
\\
\hline

v(a \U b,\lambda,2)
&
0.8
&
0.8
&
0.8
\\
\hline

v(a,\lambda,1) {\tand} v(a \U b,\lambda,2)
&
{\min(0.9,0.8)}{=}0.8
&
{0.9\cdot0.8}{=}0.72
&
{\max(0,0.9{+}0.8{-}1)}{=}0.7
\\
\hline

v(a \U b,\lambda,1)
&
{\max(0.3,0.8)}{=}0.8
&
{0.3{+}0.72{-}0.22}{=}0.8
&
{\min(1,0.3{+}0.7)}{=}1
\\
\hline

v(a,\lambda,0) {\tand} v(a \U b,\lambda,1)
&
{\min(0.5,0.8)}{=}0.5
&
{0.5\cdot0.8}{=}0.4
&
{\max(0,0.5{+}1{-}1)}{=}0.5
\\
\hline

v(a \U b,\lambda,0)
&
{\max(0.2,0.5)}{=}\mathbf{0.5}
&
{0.2{+}0.4{-}0.08}{=}\mathbf{0.52}
&
{\min(1,0.2{+}0.5)}{=}\mathbf{0.7}
\\
\hline

\end{array}
\]

  \caption{Evaluation steps for the formula $a \U b$ over trace $\lambda=\langle \{ a\mapsto 0.5, b\mapsto 0.2 \}, \{ a\mapsto 0.9, b\mapsto 0.3 \}, \{ a\mapsto 0.2, b\mapsto 0.8 \} \rangle$ for each of the three semantics. The evaluation proceeds top-down, from instant $2$ to instant $0$.}
  \label{fig:figUexample}
  
  }%footnotesize
  
\end{figure} 
\end{comment}

To see how the choice of semantics affects the meaning and evaluation of formulas, consider again a formula of the form $\phi \U \psi$ evaluated at instant $0$. 
Its value results in general from the contribution of multiple possible choices of the instant $j$ at which the temporal eventuality is realised, namely at which the value $v(\psi,\lambda,j)$ is taken together with the support provided by each $v(\phi,\lambda,i)$ until then for $0\leq i<j$.

The recursive semantics at each instant combines three ingredients: 
\begin{enumerate}
\item the value associated with the immediate realization of the eventuality (i.e., the value of $\psi$); 
\item the value of $\X(\phi \U \psi)$ associated with all the deferred realizations; 
\item the local support to these deferred realizations (i.e., the value of $\phi$). 
\end{enumerate}
Operationally, this occurs in two distinct steps, proceeding backwards, as illustrated in Figure \ref{fig:figUexample}. 
The t-norm $\tand$ updates the value of deferred realizations by combining it with their local support given by $\phi$. %, hence behaving as backward propagation through the current state. 
Instead, the t-conorm $\tor$ combines the so-obtained value with the value of $\psi$ (i.e., the value of the immediate realization). 

In \FLTLF[G], the t-norm selects the minimum between local support $v(\phi,\lambda,i)$ and the value of deferred realizations, so only one of these is considered. For instance, in the evaluation illustrated in Figure \ref{fig:figUexample} for formula $a\U b$, at instant $1$ this is value $\min\{v(a,\lambda,1), v(a \U b,\lambda,2)\}=0.8$. Then, the t-conorm selects the maximum, namely the best choice between immediate realization and possible deferred realizations. In Figure \ref{fig:figUexample}, for instant $1$ this is the value $\max\{v(b,\lambda,1), 0.8\}=0.8$. As a result, the final value of the formula corresponds exactly to the choice of $j$ such that $\min\{v(\phi,\lambda,0), \cdots ,\allowbreak v(\phi,\lambda,j{-}1) ,\allowbreak v(\psi,\lambda,j)\}$ is maximal. In Figure \ref{fig:figUexample}, we indeed see that by selecting $j=2$ we obtain the maximal value $0.5$: for $j=1$ and $j=0$ the value of $b$ is too low, while the local support given by the value of $a$ is at least $0.5$. So the final value is the minimum between $0.5$ and the value of $b$ at instant $2$. This behavior fundamentally relies on the fact that the t-conorm is associative and monotonic. Even if the local support at any step supports many deferred realizations at once, these repeated support values do not compound. We can actually think of realizations (choices of $j$) as evaluated independently, selecting the best one.

In \FLTLF[P], the t-norm updates the value of deferred realizations by multiplying it with their local support $v(\phi,\lambda,i)$, which typically weakens it, so both contribute. Also, to combine the so-obtained value with $v(\psi,\lambda,i)$ for immediate realization, the t-conorm operates a probabilistic-style accumulation: again both contribute, but they are treated as independent events and their value discounted to account for overlaps. Thus, unlike \FLTLF[G], all possible realizations (choices of $j$) contribute at once to the final value. 

In \FLTLF[L], local support $v(\phi,\lambda,i)$ to deferred realizations is combined with their value by a t-norm that remains positive whenever the sum is greater than $1$. This means that weaker support at earlier instants may be offset by stronger support at later instants that is propagated back (i.e., higher support of $\phi$ until the eventuality is realised and a higher value of $\psi$ later). The t-conorm combines the resulting value with $v(\psi,\lambda,i)$ for immediate realization additively (up to a value of $1$), so the value may progressively strengthen. As for \FLTLF[P], multiple possible realizations may contribute at once until saturation to $1$.
%\todo{Ale: cosa dite, lasciamo la frase così? Oppure aggiungiamo qualcosa sul fatto che non è detto? Di base, in questo caso, il min e max ci sono comunque, e a seconda dell'interpretazione specifica, e dell'operatore specifico, con \FLTLF[L] si hanno casi dove nessun valore contribuisce, o solo alcuni. Quindi: in Godel, sempre uno solo contribuisce; in product, sempre tutti; in Luka, dipende da interpretazione specifica}\todo{P: secondo me ora è corretto perchè diciamo che contribuiscono fino a saturare, quindi contributi successivi con contribuiscono. se vogliamo indebolire la frase possiamo scrivere che possono contribuire: may contribute.}

So for the same formula, the choice of semantics affects the value also because of how different ways of making the formula `true' interact with each other.

\subsection{Additional Temporal Operators}
\label{subsec:equivalences}

% Based on the basic temporal operators $\X$ and $\U$ of \FLTLF, 
In this section we consider additional temporal operators that are a fuzzy counterpart of well known \LTLF temporal operators, namely $\F$ (eventually), $\G$ (always), $\Xw$ (weak next), $\W$ (weak until), $\R$ (release), $\M$ (strong release). 
%
%As we are going to show, while $\F$, $\G$, $\Xw$ and $\R$ can be directly obtained as abbreviations from the base logic in the previous section, in \FLTLF[G] also $\W$ and $\M$ can be derived. 
As we are going to show, while $\F$, $\G$, $\Xw$ and $\R$ can be directly obtained as abbreviations from the base logic in the previous section, the $\W$ and $\M$ operators in general need to be explicitly defined, with the exception of the \FLTLF[G] case where they can also be derived.

We first give below the explicit semantics for each of these. %, then we prove that according to this semantics we obtain some of the equivalences known to hold in \LTLF. 
We extend \FLTLF as follows:
\[
\begin{array}{r c l}
v(\Xw \varphi,\lambda,i) & = & v(\varphi,\lambda,i{+}1) \text{ if } i<last(\lambda) \text{, }  1 \text{ otherwise}; \\
%v(\F \varphi,\lambda,i) & = & v(\varphi,\lambda,i) ~\tor~  v(\F \varphi,\lambda,i{+}1)  \text{ if } i\leq last(\lambda) \text{, } 0 \text{ otherwise;} \\
%v(\G\varphi,\lambda,i) & = & v(\varphi,\lambda,i) ~\tand~ v(\G\varphi,\lambda,i{+}1)  \text{ if } i\leq last(\lambda) \text{, } 1 \text{ otherwise;} \\
v(\F \varphi,\lambda,i) & = & v(\varphi,\lambda,i) ~\tor~  v(\X~\F \varphi,\lambda,i); \\% \text{ if } i\leq last(\lambda), 0 \text{ otherwise} ; \\
v(\G\varphi,\lambda,i) & = & v(\varphi,\lambda,i) ~\tand~ v(\Xw ~\G\varphi,\lambda,i); \\ % \text{ if } i\leq last(\lambda), 1 \text{ otherwise} ; \\
%v(\varphi_1 \W \varphi_2,\lambda,i) & = &  max\{ ~v(\varphi_2,\lambda,i) ~,~ v(\varphi_1,\lambda,i)~\tand~  v(\Xw(\varphi_1 \W \varphi_2),\lambda,i) ~\}  \\
v(\varphi_1 \W \varphi_2,\lambda,i) & = &  v(\varphi_2,\lambda,i) ~\tor~ ( v(\varphi_1,\lambda,i)~\tand~  v(\Xw(\varphi_1 \W \varphi_2),\lambda,i) );  \\
%nota: se tengo la riga "if" sotto (e qui uso i+1 invece di usare qui \Xw) allora uso la t-norm nell'ultimo istante: qui invece all'ultimo istante scelgo ancora la migliore spiegazione
%& & \text{if } i< last(\lambda) \text{, } v(\varphi_1,\lambda,i) ~\tor~ v(\varphi_2,\lambda,i) \text{ otherwise}. \\
%v(\varphi_1 \W \varphi_2,\lambda,i) & = &  max\{ ~v(\varphi_2,\lambda,i) ~,~ v(\varphi_1,\lambda,i)~\tand~  v(\varphi_1 \W \varphi_2,\lambda,i{+}1) \}  \\
%& & \text{if } i< last(\lambda) \text{, } v(\varphi_1,\lambda,i) ~\tor~ v(\varphi_2,\lambda,i) \text{ otherwise}. \\
%& & \text{if } i< last(\lambda) \text{, } v(\varphi_2,\lambda,i) \text{ otherwise}. \\
v(\varphi_1 \M \varphi_2,\lambda,i) & = &  v(\varphi_2,\lambda,i) ~\tand~ ( v(\varphi_1,\lambda,i) ~\tor~  v(\X (\varphi_1 \M \varphi_2),\lambda,i) );  \\
v(\varphi_1 \R \varphi_2,\lambda,i) & = &  v(\varphi_2,\lambda,i) ~\tand~ ( v(\varphi_1,\lambda,i) ~\tor~  v(\Xw (\varphi_1 \R \varphi_2),\lambda,i) ). \\
%& & \text{if } i< last(\lambda) \text{, } 
	%min\{~ v(\varphi_1,\lambda,i),~v(\varphi_2,\lambda,i)~\} \text{ otherwise}. \\
\end{array}
\]

Intuitively, $\Xw \varphi$ is analogous to $\X \varphi$, although it does not require the next instant to exist (in which case, the value is trivially $1$, i.e., maximally true).  
The value of $\F \varphi$ measures the evidence of $\varphi$ being true at any point in the remaining portion of the trace, hence its value is obtained by combining the current and future values of $\varphi$ using the t-conorm, to account for all possible ways in which the requirement can be satisfied. 
The value of $\G \varphi$ measures the evidence of $\varphi$ being true in all instants in the remaining portion of the trace, so its value depends on the value of $\varphi$ in the current instant and in each instant until the end. All these values are combined using the t-norm. 
The $\W$ operator is the weak counterpart of $\U$: the value of $\varphi_1 \W \varphi_2$ depends on the support given by $\varphi_1$ until the value of $\varphi_2$ is positive, but a trace in which $\varphi_2$ has no positive value can still have a positive value. 
$\varphi_1\M\varphi_2$ is used to express that eventually $\varphi_1$ has a positive value and up to (and including) that instant the support is given by the value of  $\varphi_2$. Intuitively, after that point, the value of $\varphi_2$ is `released' from providing support. %\todo{Ale: da non esperto in LTL, mi aveva un po' confuso questa definizione di M. Se non ho capito male, la differenza con l'Until, ad eccezione dell'ordine dei due operandi, è che nell'Until non appena $\varphi_1$ diventa vera, $\varphi_2$ può avere qualsiasi valore, mentre con la M, c'è uno step temporale in cui devono essere vere entrambe, prima che la $\varphi_2$ venga "rilasciata". Ho capito bene? Se si, penso che vada chiarita questa cosa}
$\R$ is the weak version of $\M$, so there is no requirement on $\varphi_1$ having a positive eventually.

%$\M$ is the strong counterpart of $\R$ \todo{Ale: aggiungere spiegazione di R} the value of $\varphi_1\M\varphi_2$ depends on the continuous support by the values of $\varphi_2$ until $\varphi_1$ contributes positively, which for $\R$ is not required to happen. 
%
%At each step, the value of $\varphi_2$ is combined through the t-norm with either the current contribution of $\varphi_1$ or the recursive continuation of the formula in the future. 

\smallskip
In the propositions we establish next, we use the following convention: whenever the proposition is stated for \FLTLF, this means that the proposition holds for all the three logics \FLTLF[G], \FLTLF[P], and \FLTLF[L]. As a preliminary step in our study of inter-definability of temporal operators, we recall two known results on the fuzzy logics we consider.   
%no, not complete: The former can be seen as direct consequence of the use of standard negation in all three semantics, while the latter by the choice of $\tand$ and $\tor$ in \FLTLF[G], which are idempotent and coincide with lattice meet/join on $[0,1]$. 
First, the chosen combinations of t-norms, t-conorms and standard negation form so-called De Morgan triplets \cite{book_fuzzy_KY95}. 
%
%Second, the totally ordered set $[0,1]$ forms a distributive lattice where the meet and join operations are taken as minimum and maximum. 
Second, the choice of $\min$ and $\max$ in this setting implies distributivity. 
As a direct application to our logics, we can state the following two propositions.
%
%\todo[inline]{Ale: visto che in tutte le prove formali $\lambda$ è fisso, forse conviene definire un abuso di notazione con $v(\varphi, \lambda, i)$ sostituito da $v(\varphi, i)$ quando la $\lambda$ è fissa, così da alleggerire un po' le prove formali}

\begin{proposition}
	The De Morgan's law holds for \FLTLF. Namely in all these logics 
	$v(\neg(\varphi_1 \land \varphi_2),\lambda, i)=v(\neg\varphi_1 \lor \neg\varphi_2,\lambda,i)$ and $v(\neg(\varphi_1 \lor \varphi_2),\lambda, i)=v(\neg\varphi_1 \land \neg\varphi_2,\lambda,i)$ for any $\lambda$ and $0\leq i \leq last(\lambda)$. 
\end{proposition}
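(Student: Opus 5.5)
The plan is to reduce the statement, via the compositional semantics of \FLTLF, to an algebraic identity on $[0,1]$ and then verify that identity for each of the three semantics. Fix $\lambda$ and $i$, and write $\alpha = v(\varphi_1,\lambda,i)$ and $\beta = v(\varphi_2,\lambda,i)$. By the clauses for $\neg$, $\land$ and $\lor$, the left-hand side of the first law is $\tneg(\alpha \tand \beta)$ and the right-hand side is $(\tneg\alpha)\tor(\tneg\beta)$. The second law likewise becomes $\tneg(\alpha\tor\beta) = (\tneg\alpha)\tand(\tneg\beta)$. Since $\varphi_1,\varphi_2$ enter only through their values, which range over $[0,1]$, it suffices to prove these two identities for all $\alpha,\beta\in[0,1]$. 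In all three logics $\tneg x = 1-x$ is an involution, so the second identity follows from the first. Apply the first to $1-\alpha$ and $1-\beta$ to get $1-((1-\alpha)\tand(1-\beta)) = \alpha\tor\beta$, and then negate both sides.

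It remains to check $1-(\alpha\tand\beta) = (1-\alpha)\tor(1-\beta)$ in each semantics, which is exactly the statement that each pair forms a De Morgan triplet \cite{book_fuzzy_KY95}.

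\begin{itemize}
\item For Gödel, $1-\min\{\alpha,\beta\} = \max\{1-\alpha,1-\beta\}$, because $x\mapsto 1-x$ is order-reversing.
\item For Product, expand $(1-\alpha)+(1-\beta)-(1-\alpha)(1-\beta)$, which gives $1-\alpha\beta$.
\item For Łukasiewicz, $1-\max\{0,\alpha+\beta-1\} = \min\{1, 2-\alpha-\beta\} = \min\{1,(1-\alpha)+(1-\beta)\}$. This uses the fact that $1-\max\{a,b\} = \min\{1-a,1-b\}$.
\end{itemize}

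There is no real obstacle here. The only point needing care is to state explicitly that the compositional semantics lets us work pointwise at the fixed instant $i$, so that no temporal reasoning is required. For Łukasiewicz one must also handle the truncation at $0$ and $1$ correctly, and the min/max identity above does that.
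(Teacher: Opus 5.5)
Your proposal is correct and follows the paper's route: the compositional semantics reduces both laws, pointwise at the fixed instant $i$, to the identities $\tneg(\alpha\tand\beta)=(\tneg\alpha)\tor(\tneg\beta)$ and $\tneg(\alpha\tor\beta)=(\tneg\alpha)\tand(\tneg\beta)$ on $[0,1]$, which hold because each chosen triple is a De Morgan triplet. The only difference is that the paper simply cites this known fact, whereas you verify it explicitly for the G\"odel, Product and \L{}ukasiewicz operators and derive the dual law from the involutivity of $1-x$; all of your computations check out.
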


Namely, 
	$\tneg (v(\varphi_1,\lambda, i) \tand v(\varphi_2,\lambda, i))=\tneg v(\varphi_1,\lambda, i) \tor (\tneg v(\varphi_2,\lambda,i))$ and $\tneg (v(\varphi_1,\lambda, i) \tor v(\varphi_2,\lambda, i))=\tneg  v(\varphi_1,\lambda, i) \tand (\tneg v(\varphi_2,\lambda,i))$.

\begin{proposition}\label{prop:distrib_onlyG}
	In \FLTLF[G], $\tand$ and $\tor$ coincide respectively with the meet and join operations of the totally ordered set $[0,1]$. Hence they form a distributive lattice and are distributive with respect to each other. However this is not the case for \FLTLF[P] and \FLTLF[L].
\end{proposition}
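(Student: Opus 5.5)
The proof splits into a positive part for \FLTLF[G] and a negative part for \FLTLF[P] and \FLTLF[L]. For the positive part, I would first observe that on the totally ordered set $[0,1]$ the meet of $\alpha,\beta$ is exactly $\min\{\alpha,\beta\}$ and the join is $\max\{\alpha,\beta\}$. Hence, by definition of the Gödel connectives, $\tand$ and $\tor$ coincide with meet and join. Every chain is a distributive lattice, so I would verify
\[
\min\{\alpha,\max\{\beta,\gamma\}\}=\max\{\min\{\alpha,\beta\},\min\{\alpha,\gamma\}\}
\]
by a short case analysis on the relative order of $\alpha,\beta,\gamma$. Concretely, assume without loss of generality $\beta\le\gamma$; then the right-hand side is $\min\{\alpha,\gamma\}$ by monotonicity of $\min$ in its second argument, and so is the left-hand side. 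The dual law, with $\min$ and $\max$ exchanged, follows symmetrically, or from the first law together with the De Morgan proposition stated above, using the standard negation $1-\alpha$, which is order reversing. Lifting this pointwise to formulae then gives $v(\varphi_1\land(\varphi_2\lor\varphi_3),\lambda,i)=v((\varphi_1\land\varphi_2)\lor(\varphi_1\land\varphi_3),\lambda,i)$ for all $\lambda,i$, and dually.

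For the negative part, exhibiting concrete counterexample values suffices, since any trace with $last(\lambda)=0$ can realise arbitrary atomic values at instant $0$. For \FLTLF[P], I would take $\alpha=\beta=\gamma=1/2$. Then $\alpha\cdot(\beta\tor\gamma)=\frac12\cdot\frac34=\frac38$, while $(\alpha\beta)\tor(\alpha\gamma)=\frac14+\frac14-\frac1{16}=\frac7{16}$, so the two sides differ. For \FLTLF[L], the same choice gives $\alpha\tand(\beta\tor\gamma)=\max\{0,\frac12+1-1\}=\frac12$, whereas $\alpha\tand\beta=\alpha\tand\gamma=0$ and their join is $0$. For each semantics I would also record a failure of the dual law, or infer it from the De Morgan duality. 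I would additionally note that neither operation is idempotent in these semantics: in \FLTLF[P], $\frac12\cdot\frac12\ne\frac12$, and in \FLTLF[L], $\frac12\tand\frac12=0\neq\frac12$. This shows directly that they are not lattice meet and join.

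The only step needing care is the dual distributivity for Gödel, which I would handle via the De Morgan proposition to avoid a second case analysis. The rest is routine arithmetic, and the main obstacle is just picking counterexamples that land in non-saturated regions of the Łukasiewicz operations.
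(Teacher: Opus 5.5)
Your proof is correct and complete. The paper gives no proof of this proposition. It presents it as a recalled known result: the choice of $\min$ and $\max$ on a chain implies distributivity. The negative claim for \FLTLF[P] and \FLTLF[L] is only asserted, and is illustrated indirectly later through the trace counterexamples of Propositions~\ref{propW} and~\ref{propMR2}.

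Your argument supplies exactly the details the paper omits:
\begin{itemize}
\item the case analysis under $\beta\le\gamma$;
\item the reduction of the dual law to De Morgan via the order-reversing negation;
\item direct scalar counterexamples at $\alpha=\beta=\gamma=\tfrac12$.
\end{itemize}
These counterexamples can even be realised with a single atom $p$: compare $p\land(p\lor p)$ with $(p\land p)\lor(p\land p)$ on a one-instant trace where $p$ has value $\tfrac12$. So no assumption on $|\mathcal{P}|$ is needed.

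Keep the non-idempotence remark. Failing to be meet and join and failing distributivity are logically independent facts; for instance, the product t-norm still distributes over $\max$. The statement's \emph{this is not the case} covers both.
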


We can also immediately prove that the semantics for $\F$ and $\G$ operators can be simplified as follows, which allows for more efficient implementations.

\begin{proposition}\label{prop:unfold_FG} 
	In \FLTLF: 
\begin{equation}
\begin{array}{rcl}
	v(\F \varphi,\lambda,i) &=& v(\varphi,\lambda,i) \oplus \cdots \oplus v(\varphi,\lambda,last(\lambda))  \\
	v(\G \varphi,\lambda,i) &=& v(\varphi,\lambda,i) \otimes \cdots \otimes v(\varphi,\lambda,last(\lambda))  
\end{array}
\end{equation}
\end{proposition}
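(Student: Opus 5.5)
The plan is a backward induction on $i$, from $last(\lambda)$ down to $0$, for a fixed trace $\lambda$ and formula $\varphi$. Both identities are handled in the same way, using only the recursive clauses for $\F$ and $\G$ together with the clauses for $\X$ and $\Xw$. Since the definitions of $\F$ and $\G$ recurse forward in time, induction on the distance $last(\lambda)-i$ is the natural choice.

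\emph{Base case $i = last(\lambda)$.} By definition $v(\X\F\varphi,\lambda,i)=0$, because no next instant exists. Hence
\[
v(\F\varphi,\lambda,i)=v(\varphi,\lambda,i)\tor 0 = v(\varphi,\lambda,i),
\]
since $0$ is the neutral element of every t-conorm. We check this directly for each logic: $\max\{\alpha,0\}=\alpha$, $\alpha+0-0=\alpha$, and $\min\{1,\alpha+0\}=\alpha$. Symmetrically, $v(\Xw\G\varphi,\lambda,i)=1$, and $1$ is neutral for each t-norm: $\min\{\alpha,1\}=\alpha$, $\alpha\cdot 1=\alpha$, and $\max\{0,\alpha+1-1\}=\alpha$. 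So $v(\G\varphi,\lambda,i)=v(\varphi,\lambda,i)$, which is the one-term case of the claimed iterated expressions.

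\emph{Inductive step, $i<last(\lambda)$.} Now $v(\X\F\varphi,\lambda,i)=v(\F\varphi,\lambda,i{+}1)$ and $v(\Xw\G\varphi,\lambda,i)=v(\G\varphi,\lambda,i{+}1)$. Applying the induction hypothesis gives
\[
v(\F\varphi,\lambda,i)=v(\varphi,\lambda,i)\tor\bigl(v(\varphi,\lambda,i{+}1)\tor\cdots\tor v(\varphi,\lambda,last(\lambda))\bigr),
\]
and the analogous expression with $\tand$ for $\G$. To justify writing the iterated operation without parentheses, we invoke associativity of the three t-norms and t-conorms. This is immediate for $\min$/$\max$ and for the product. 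For the probabilistic sum it follows from the identity $1-(\alpha\tor\beta)=(1-\alpha)(1-\beta)$, or alternatively from the De Morgan proposition above combined with associativity of the product. For the Łukasiewicz operations we use the closed forms $\min\{1,\sum\alpha_k\}$ and $\max\{0,\sum\alpha_k-(m-1)\}$.

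The main obstacle is the Łukasiewicz case, where the closed form must survive repeated clipping. I would prove by induction that the right-nested fold equals $\min\{1,\sum_k \alpha_k\}$. If $\beta=\min\{1,s\}$ with $s\ge 0$, then $\min\{1,\alpha+\beta\}=\min\{1,\alpha+s\}$: when $s\le 1$ this is immediate, and when $s>1$ both sides equal $1$ because $\alpha\ge 0$. The dual statement for $\tand$ then follows via De Morgan. Beyond this, the proof is only a matter of unfolding the definitions.
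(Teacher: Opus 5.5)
Your proof is correct and follows the same route as the paper. The paper unfolds the recursive clauses for $\F$ and $\G$ and uses that $v(\X\F\varphi,\lambda,last(\lambda))=0$ and $v(\Xw\G\varphi,\lambda,last(\lambda))=1$ are neutral for $\tor$ and $\tand$, and your backward induction is the rigorous version of that unfolding. Your extra care over associativity, including the \L{}ukasiewicz closed forms, is sound but is not really an obstacle: associativity is part of the definition of any t-norm or t-conorm, and the paper takes it for granted when it writes the iterated operations without parentheses.
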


\begin{proof}
Recursively using the definition of $\F$ and $\G$ and the fact that $v(\X\varphi, \lambda, i)=0$ and $v(\X_w \varphi, \lambda, i)=1$ for $i\ge last(\lambda)$ we have
	$v(\F \varphi,\lambda,i) = v(\varphi,\lambda,i) \tor \ldots \tor v(\varphi,\lambda,last(\lambda)) \tor 0 \ldots  = v(\varphi,\lambda,i) \tor \ldots \tor v(\varphi,\lambda,last(\lambda))$
	and 
	$v(\G\varphi,\lambda,i) =  v(\varphi,\lambda,i) \tand \ldots \tand  v(\varphi,\lambda,last(\lambda)) \allowbreak \tand 1 \ldots = v(\varphi,\lambda,i) \tand \ldots \tand v(\varphi,\lambda,last(\lambda))$.  
\end{proof}

We finally look at these known equivalences for \LTLF to see whether these hold for \FLTLF as well, namely whether the additional temporal operators can be derived by others as abbreviations:
\begin{equation}\label{eq:equiv}
\begin{array}{rclrcl}
\F \varphi &\equiv& \top \U \varphi &
\G \varphi &\equiv& \neg \F \neg \varphi \\
\Xw \varphi &\equiv& \X\varphi \lor \neg \X\top \qquad  &
\varphi_1 \W \varphi_2 &\equiv& \varphi_1 \U \varphi_2 \lor \G \varphi_1 \\
%\varphi_1 \R \varphi_2 &\equiv& \neg (\neg \varphi_1 \U \neg  \varphi_2)  &
%\varphi_1 \M \varphi_2 &\equiv& \neg (\neg \varphi_1 \W \neg \varphi_2)  \\
\varphi_1 \M \varphi_2 &\equiv& \varphi_2 \U (\varphi_1 \land \varphi_2) \qquad &
\varphi_1 \R \varphi_2 &\equiv& \varphi_2 \W (\varphi_1 \land \varphi_2)
\end{array}
\end{equation}

%For \FLTLF[G], as this logic is the one considered in \cite{DonadelloFIMM25}, the fact that these equivalences hold could be directly taken from there.  %, so it would suffice to show the same for \FLTLF[P] and \FLTLF[L]. 
The results are summarised as follows. For ease of notation, in these proofs we write $v(\varphi,i)$ in place of $v(\varphi,\lambda,i)$ since the trace $\lambda$ is always fixed. 

\begin{proposition}\label{propF}
$\F \varphi \equiv \top \U \varphi$ in \FLTLF.
\end{proposition}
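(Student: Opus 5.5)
We fix a trace $\lambda$ and prove $v(\F\varphi,i)=v(\top\U\varphi,i)$ for every instant $i$ by backward induction on $i$, from $last(\lambda)$ down to $0$. The idea is that the recursive clause for $\U$, once its left argument is $\top$, becomes exactly the recursive clause defining $\F$. The only fact about the connectives that we need is that $1$ is the neutral element of every t-norm, i.e.\ $1\tand x=x$. This holds for $\min$, for the product, and for $\max\{0,1+x-1\}$ alike, so the argument is uniform across \FLTLF[G], \FLTLF[P] and \FLTLF[L]. We do not need distributivity, so Proposition~\ref{prop:distrib_onlyG} plays no role.

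First we unfold the definitions at an arbitrary instant $i$. Since $v(\top,i)=1$,
\[
v(\top\U\varphi,i)=v(\varphi,i)\tor\bigl(1\tand v(\X(\top\U\varphi),i)\bigr)=v(\varphi,i)\tor v(\X(\top\U\varphi),i),
\]
while by definition $v(\F\varphi,i)=v(\varphi,i)\tor v(\X\F\varphi,i)$.

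\emph{Base case} ($i=last(\lambda)$). Both $\X$-terms equal $0$ by the semantics of $\X$. Hence both sides reduce to $v(\varphi,i)\tor 0=v(\varphi,i)$, using that $0$ is neutral for each of the three t-conorms.

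\emph{Inductive step} ($i<last(\lambda)$). The $\X$-terms are $v(\top\U\varphi,i{+}1)$ and $v(\F\varphi,i{+}1)$, which coincide by the induction hypothesis. The two right-hand sides are therefore identical.

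Alternatively, one could apply Proposition~\ref{prop:unfold_FG} and show directly that $v(\top\U\varphi,i)$ unfolds to $v(\varphi,i)\tor\cdots\tor v(\varphi,last(\lambda))$. That route needs the same neutrality facts, and associativity of $\tor$ is only implicit in the unparenthesised notation there. The direct induction above avoids it.

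There is no real obstacle. The only points requiring care are checking $1\tand x=x$ and $x\tor 0=x$ for each semantics, and remembering that the recursion terminates because $\X$ evaluates to $0$ at the last instant.
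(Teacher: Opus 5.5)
Your proof is correct and follows the same backward induction as the paper. The base case at $last(\lambda)$ is identical, and the inductive step likewise uses $v(\top,i)=1$ to collapse the $\U$ clause into the $\F$ clause; you additionally spell out the neutrality facts $1\tand x=x$ and $x\tor 0=x$ that the paper uses without comment.
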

%
%\begin{proof}
%Since $v(\F \varphi,\lambda,i) =  v(\varphi,\lambda,i) \tor  v(\X\F \varphi,\lambda,i)$, we have
%
%$v(\top \U \varphi,\lambda,i) =  v(\varphi,\lambda,i) ~\tor~ (  v(\top,\lambda,i) ~\tand~  v(\X(\top \U \varphi),\lambda,i))$ which simplifies to $v(\varphi,\lambda,i) \tor$ $( v(\X(\top \U \varphi),\lambda,i))$. 
%
%Hence   
%
%$v(\top \U \varphi,\lambda,i) = 
%v(\varphi,\lambda,i) \tor  
%( v(\varphi,\lambda,i{+}1) \tor 
%( \ldots \tor ( 
% v(\varphi,\lambda,last(\lambda)) )))
%\allowbreak
%=v(\varphi,\lambda,i) \tor \allowbreak
%\ldots \tor \allowbreak
%v(\varphi,\lambda,last(\lambda))$, so the equivalence follows by Proposition.~\ref{prop:unfold_FG}.
%\end{proof}

\begin{proof}
Reasoning by backward induction, first we check the boundary condition: for $i=last(\lambda)$ we have $v(\F \varphi,i) = v(\varphi,i) = v(\top \U \varphi,i)$.  
In the inductive step, 
$v(\top \U \varphi,i) = v(\varphi,i) \tor ( v(\top,i) \tand v(\top \U \varphi,i{+}1))$ which by inductive hypothesis is equal to 
$v(\varphi,i) \tor v(\F\varphi,i{+}1)$, and thus equal to $v(\F \varphi,i)$.  
\end{proof}

\begin{proposition}\label{propG}
$\G \varphi \equiv \neg \F \neg \varphi$ in \FLTLF.
\end{proposition}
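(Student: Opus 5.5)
The plan is a backward induction on $i$, the same scheme as in the proof of Proposition~\ref{propF}. We show $v(\G\varphi,i)=\tneg v(\F\neg\varphi,i)$ for every $0\le i\le last(\lambda)$. The only properties needed are that the negation is standard ($\tneg\alpha = 1-\alpha$, so $\tneg\tneg\alpha=\alpha$) and the De Morgan property of the triplets $(\tand,\tor,\tneg)$ stated in the first proposition of this subsection. Distributivity is never used, so the argument works uniformly for \FLTLF[G], \FLTLF[P] and \FLTLF[L].

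\emph{Base case} $i=last(\lambda)$. By definition $v(\Xw\G\varphi,i)=1$ and $v(\X\F\neg\varphi,i)=0$. Since $1$ is the unit of the t-norm and $0$ is the unit of the t-conorm, this gives $v(\G\varphi,i)=v(\varphi,i)$ and $v(\F\neg\varphi,i)=\tneg v(\varphi,i)$. Hence
\[
v(\neg\F\neg\varphi,i)=\tneg\tneg v(\varphi,i)=v(\varphi,i).
\]

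\emph{Inductive step} $i<last(\lambda)$. Unfold the definition of $\F$:
\[
v(\neg\F\neg\varphi,i)=\tneg\bigl(\tneg v(\varphi,i)\tor v(\F\neg\varphi,i{+}1)\bigr).
\]
Apply the De Morgan law $\tneg(\alpha\tor\beta)=\tneg\alpha\tand\tneg\beta$ together with involutivity of $\tneg$. The right-hand side becomes
\[
v(\varphi,i)\tand\tneg v(\F\neg\varphi,i{+}1).
\]
By the inductive hypothesis, $\tneg v(\F\neg\varphi,i{+}1)=v(\G\varphi,i{+}1)$, which equals $v(\Xw\G\varphi,i)$ because $i<last(\lambda)$. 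The expression is therefore exactly $v(\G\varphi,i)$.

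An alternative route goes through Proposition~\ref{prop:unfold_FG}. It reduces the claim to
\[
\tneg\bigl(\tneg a_i\tor\cdots\tor\tneg a_n\bigr)=a_i\tand\cdots\tand a_n,
\]
which follows from the binary De Morgan law by induction on the number of terms, using associativity.

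No step is a real obstacle. The only care needed is the boundary case, where $\X$ returns $0$ and $\Xw$ returns $1$. These values must match the units of $\tor$ and $\tand$ respectively, which they do for all three semantics, including the Łukasiewicz case, since $\max\{0,\alpha+1-1\}=\alpha$ and $\min\{1,\alpha+0\}=\alpha$.
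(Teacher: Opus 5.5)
Your proof is correct. Your ``alternative route'' is exactly the paper's proof: it rewrites $v(\G\varphi,i)$ and $v(\F\neg\varphi,i)$ using the closed forms of Proposition~\ref{prop:unfold_FG}, then applies De Morgan's law once to the whole iterated t-conorm. Your main argument instead works directly on the recursive clauses by backward induction.

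The paper's version is a one-liner because the boundary bookkeeping is already absorbed into Proposition~\ref{prop:unfold_FG}: at the last instant, $\X$ and $\Xw$ return $0$ and $1$, the units of $\tor$ and $\tand$. However, it tacitly uses the $n$-ary De Morgan law, which, as you note, needs a further induction on the number of terms together with associativity.

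Your induction needs only the binary law and involutivity of $\tneg$ at each step, and is self-contained with respect to the recursive semantics. It is the same scheme the paper uses for the $\M$/$\W$ and $\R$/$\U$ dualities in Proposition~\ref{propMR}. It therefore carries over verbatim to operators such as $\U$ and $\R$, which have no flat iterated form analogous to Proposition~\ref{prop:unfold_FG}.
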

\begin{proof}
The result follows from Proposition~\ref{prop:unfold_FG} by applying De Morgan's law, as $\tneg ( \tneg v(\varphi,i) \tor \ldots \tor \tneg v(\varphi,last(\lambda)) )$ is equal to $v(\neg\F \neg \varphi,i)$. 
% %
% The result then follows by applying De Morgan's law, as $\tneg ( \tneg v(\varphi,\lambda,i) \tor \ldots \tor \tneg v(\varphi,\lambda,last(\lambda)) )$ is equal to $v(\neg\F \neg \varphi,\lambda,i)$. 
\end{proof}

\begin{proposition}
$\Xw \varphi \equiv \X\varphi \lor \neg \X\top$ in \FLTLF.
\end{proposition}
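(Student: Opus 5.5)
The plan is a direct case split on whether the current instant $i$ is the last instant of the trace. No induction is needed, because neither side of the equivalence involves recursion. We need the following facts for every $\alpha\in[0,1]$ and each of the three semantics: $\alpha\tor 0=\alpha$ and $\alpha\tor 1=1$, together with $\tneg 0 = 1$ and $\tneg 1 = 0$. These are boundary properties of any t-conorm and of the standard negation. We will check them directly on the three definitions: $\max\{\alpha,0\}=\alpha$; $\alpha+0-\alpha\cdot 0=\alpha$; $\min\{1,\alpha+0\}=\alpha$; and analogously with $1$.

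\emph{Case $i<last(\lambda)$.} By the semantics of $\X$, $v(\X\varphi,i)=v(\varphi,i{+}1)$ and $v(\X\top,i)=v(\top,i{+}1)=1$. Hence $v(\neg\X\top,i)=\tneg 1=0$, and $v(\X\varphi\lor\neg\X\top,i)=v(\varphi,i{+}1)\tor 0=v(\varphi,i{+}1)$. This coincides with $v(\Xw\varphi,i)$ by the first clause of the extended semantics.

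\emph{Case $i=last(\lambda)$.} Here $v(\X\varphi,i)=0$ and $v(\X\top,i)=0$, so $v(\neg\X\top,i)=\tneg 0=1$. Therefore $v(\X\varphi\lor\neg\X\top,i)=0\tor 1=1$, which equals $v(\Xw\varphi,i)=1$ by definition.

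Since the two values agree at every instant of every trace, the formulae are equivalent in \FLTLF. There is essentially no obstacle. The only point needing care is that the identities $\alpha\tor 0=\alpha$ and $0\tor 1=1$ must hold for all three t-conorms. This holds because $0$ is the neutral element and $1$ the absorbing element of any t-conorm. Note that, unlike the later operators $\W$ and $\M$, this argument uses neither distributivity nor idempotence, so Proposition~\ref{prop:distrib_onlyG} is not needed.
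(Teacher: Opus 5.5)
Your proof is correct and is essentially the paper's own argument: a direct, induction-free case split on $i<last(\lambda)$ versus $i=last(\lambda)$, relying only on $\alpha\tor 0=\alpha$, $0\tor 1=1$ and the standard negation of the crisp values $v(\X\top,i)\in\{0,1\}$. Your explicit check of these identities for each of the three t-conorms, and your remark that distributivity is not needed, are consistent with the paper's reasoning.
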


\begin{proof}
We directly show the equality at each instant. $v(\Xw \varphi,i)=v(\varphi,i+1)$ if $i<last(\lambda)$, $1$ otherwise. So for $i=last(\lambda)$ we have  $v(\Xw \varphi,i)=1=v(\neg\X\top)$ and $v(\X \varphi,i)=0$, hence equality holds since $0\tor 1=1$. For  $i<last(\lambda)$ we have $v(\Xw \varphi,i)=v(\varphi,i{+}1)=v(\X \varphi,i)$ while $v(\neg\X \top,i)=0$, and these are equal since $\alpha\tor 0=\alpha$. 
\end{proof}

%The remaining equivalences from \LTLF hold for \FLTLF[G] but not for the two other logics. Indeed, it is easy to see that the following proofs requires distributivity, which by Proposition~\ref{prop:distrib_onlyG} does not hold for \FLTLF[P] and \FLTLF[L].

%\todo[inline]{Ale: ok, ma a meno che non si sia dimostrato che la proprietà distributiva sia una proprietà necessaria, questa non è una prova. Il fatto che la prova formale proposta usi la distributività, non significa che per una semantica priva di tale distributività non valga il teorema.}
%\todo[inline]{Ale: Guardando meglio sotto, sulla proposizione 8 sembra invece che si vada nel dettaglio, e che si dica proprio che la distributività sia necessaria. Se è così, questa frase qua sopra va modificata. In generale però, è poco chiaro dalle prove il perché sia necessaria}

%For instance, thanks to the distributivity of $\tor$ and $\tand$, $\varphi_1 \W \varphi_2 \equiv \varphi_1 \U \varphi_2 \lor \G \varphi_1$ holds in \FLTLF[G] but not under the product semantics because the conjunction and disjunction are not idempotent.

\begin{proposition}\label{propW}
$\varphi_1 \W \varphi_2 \equiv (\varphi_1 \U \varphi_2) \lor \G \varphi_1$ only in \FLTLF[G].
\end{proposition}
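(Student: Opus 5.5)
The proof has two parts: the equivalence holds in \FLTLF[G], and it fails in \FLTLF[P] and \FLTLF[L].

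\emph{Positive part (\FLTLF[G]).} We argue by backward induction on $i$, as in the proof of Proposition~\ref{propF}.

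At $i=last(\lambda)$ we have $v(\Xw(\varphi_1\W\varphi_2),i)=1$, so $v(\varphi_1\W\varphi_2,i)=\max\{v(\varphi_2,i),v(\varphi_1,i)\}$. On the other side, $v(\varphi_1\U\varphi_2,i)=\max\{v(\varphi_2,i),\min\{v(\varphi_1,i),0\}\}=v(\varphi_2,i)$, and by Proposition~\ref{prop:unfold_FG} $v(\G\varphi_1,i)=v(\varphi_1,i)$. The two sides therefore agree.

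For $i<last(\lambda)$, write $a=v(\varphi_1,i)$, $b=v(\varphi_2,i)$, $u=v(\varphi_1\U\varphi_2,i{+}1)$ and $g=v(\G\varphi_1,i{+}1)$. The induction hypothesis gives
\[
v(\varphi_1\W\varphi_2,i)=\max\{b,\min\{a,\max\{u,g\}\}\}.
\]
The right-hand side is
\[
\max\{\max\{b,\min\{a,u\}\},\min\{a,g\}\},
\]
using Proposition~\ref{prop:unfold_FG} to write $v(\G\varphi_1,i)=\min\{a,g\}$. By distributivity of $\min$ over $\max$ (Proposition~\ref{prop:distrib_onlyG}) and associativity of $\max$, both expressions equal $\max\{b,\min\{a,u\},\min\{a,g\}\}$. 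This step is exactly where Gödel is needed.

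\emph{Negative part.} Here a single concrete counterexample trace suffices for each logic. I take a trace of length $1$ with $v(\varphi_1,0)=v(\varphi_2,0)=\tfrac12$, using atoms $\varphi_1=a$, $\varphi_2=b$.

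On the left, $v(a\W b,0)=b\tor(a\tand 1)=b\tor a$. On the right, $v(a\U b,0)=b\tor(a\tand 0)=b$ and $v(\G a,0)=a$, so the right side is also $b\tor a$. Length $1$ therefore gives no discrepancy.

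The next candidate is length $2$. At the last instant both sides equal $a_1\tor b_1$, so set $w=a_1\tor b_1$. The left side at $0$ is $b_0\tor(a_0\tand w)$. The right side at $0$ is $(b_0\tor(a_0\tand b_1))\tor(a_0\tand a_1)$.

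With $b_0=0$, $a_0=1$ both sides reduce to $a_1\tor b_1$, so no discrepancy. Choosing instead $b_0=0$, $a_0=a_1=b_1=\tfrac12$:
\begin{itemize}
\item \emph{Product.} The left side is $\tfrac12\cdot\tfrac34=\tfrac38$. The right side is $\tfrac14\tor\tfrac14=\tfrac12-\tfrac1{16}=\tfrac7{16}$. These differ.
\item \emph{Łukasiewicz.} Here $w=1$, so the left side is $a_0\tand 1=\tfrac12$. The right side is $0\tor 0\tor 0=0$. These also differ.
\end{itemize}
So this single trace refutes the equivalence in both non-Gödel logics.

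The main obstacle is the negative direction: choosing values that avoid degenerate cases such as length-$1$ traces or crisp values, where all three semantics agree. I would try length-$2$ traces with all values $\tfrac12$ first, as above, and check the arithmetic carefully. It is also worth noting informally that the failure stems from the lack of distributivity in Proposition~\ref{prop:distrib_onlyG}, which the positive proof used.
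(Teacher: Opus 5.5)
Your proof is correct and follows the paper's argument. For \FLTLF[G] you use backward induction with the same boundary case, using distributivity of $\min$ over $\max$ in the inductive step. For \FLTLF[P] and \FLTLF[L] you give a length-2 counterexample. Your trace ($b_0=0$, $a_0=a_1=b_1=\tfrac12$) differs from the paper's but checks out, giving $\tfrac38$ versus $\tfrac7{16}$ under Product and $\tfrac12$ versus $0$ under Łukasiewicz.
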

\begin{proof}
%By backward induction, consider first $i=last(\lambda)$. $v(\varphi_1 \W \varphi_2,\lambda, i)= v(\varphi_2, \lambda, i) \tor v(\varphi_1, \lambda, i)$. Moreover, $v(\varphi_1 \U \varphi_2,\lambda, i) = v(\varphi_2, \lambda, i)$ and $v(\G \varphi_1,\lambda, i) = v(\varphi_1, \lambda, i)$. So the result holds.  
%
We start by showing that the equivalence holds for \FLTLF[G]. In particular, we prove the claim by backward induction. 
First, it is immediate to see that the boundary condition is true, namely for $i=last(\lambda)$   $v(\varphi_1 \W \varphi_2, i)= v(\varphi_2, i) \tor v(\varphi_1, i)$ while $v(\varphi_1 \U \varphi_2, i) = v(\varphi_2, i)$ and $v(\G \varphi_1, i) = v(\varphi_1, i)$. 
Then, assuming it holds for some $0<n\leq last(\lambda)$, consider the inductive step for $i={n}{-}1$. 
On the LHS and RHS we have, respectively: 
{\small
\begin{align*}
v(\varphi_1 \W \varphi_2, i)&= v(\varphi_2, i) \tor (v(\varphi_1,i) \tand v(\varphi_1 \W \varphi_2,i{+}1)), \\
v((\varphi_1 \U \varphi_2) {\lor} \G\varphi_1,i) &= (v(\varphi_2, i) {\tor} ( v(\varphi_1,i) {\tand} v(\varphi_1\U \varphi_2,i{+}1)) {\tor} ( v(\varphi_1, i) {\tand} v(\G \varphi_1,i{+}1)).
\end{align*}
}

%\noindent
%$
%{\small 
% \begin{array}{@{}r@{\;}c@{\;}l}
% v(\varphi_1 \W \varphi_2, i)&=& v(\varphi_2, i) \tor (v(\varphi_1,i) \tand v(\varphi_1 \W \varphi_2,i{+}1)) \\
% v((\varphi_1 \U \varphi_2) {\lor} \G\varphi_1,i) &=& (v(\varphi_2, i) {\tor} ( v(\varphi_1,i) {\tand} v(\varphi_1\U \varphi_2,i{+}1)) {\tor} ( v(\varphi_1, i) {\tand} v(\G \varphi_1,i{+}1)).
% \end{array}
%}
%$

Recalling that $\tand$ and $\tor$ are $\min$ and $\max$, respectively, on the RHS we get $\max\{ v(\varphi_2, i), \min \{ v(\varphi_1,i), \max\{ v(\varphi_1\U \varphi_2,i{+}1), v(\G \varphi_1,i{+}1) \}\}\}$  by associativity of $\max$ and distributivity of $\min$ over $\max$. On the LHS instead $\max\{ v(\varphi_2, i), \min \{ v(\varphi_1,i), v(\varphi_1\W\varphi_2, i{+}1) \}\}$. These are equal by the inductive hypothesis, namely $v(\varphi_1\W\varphi_2, i{+}1)=\max\{ v(\varphi_1\U \varphi_2,i{+}1), v(\G \varphi_1,i{+}1) \}$.  
%
%On the LHS, $v(\varphi_1 \W \varphi_2, i)= v(\varphi_2, i) \tor (v(\varphi_1,i) \tand v(\varphi_1 \W \varphi_2,i{+}1))$ which by hypothesis  is equal to  $v(\varphi_2, i) \tor (v(\varphi_1, i) \tand (v(\varphi_1 \U \varphi_2, i{+}1) \tor v(\G\varphi_1,i{+}1))$. 
%
%On the RHS, $v(\varphi_1 \U \varphi_2,i)=v(\varphi_2, i) \tor ( v(\varphi_1,i) \tand v(\varphi_1\U \varphi_2,i{+}1)$ and 
%$v(\G \varphi_1,  i)=v(\varphi_1, i) \tand v(\G \varphi_1,i{+}1)$. 
%
%So the equality is obtained by distributivity (see Prop.~\ref{prop:distrib_onlyG}), and 
Therefore the claim for \FLTLF[G] holds by backward induction.

Regarding the negative result for \FLTLF[P] and \FLTLF[L], the above reasoning fails due to the fact that distributivity does not hold. % (recall, instead, that t-norms and t-conorms are always associative). 
Indeed, it is immediate to find a counterexample: consider $\lambda = \langle \{ a\mapsto 0.5, b \mapsto 0.6  \}, \{ a\mapsto 0.4, b \mapsto 0.7  \}\rangle$. 
In \FLTLF[P] we have on LHS $v(a \W b,1)=0.7\tor (0.4\tand 1)=0.82$ and so $v(a \W b,0)=0.6\tor (0.5\tand 0.82)=0.764$, 
and on the RHS $v(a \U b,1)=0.7$ and so $v(a \U b,0)=0.6\tor (0.5\tand 0.7)=0.74$, and also  $v(\G a,1)=0.4$ and so $v(\G a,0)=0.5\tand 0.4=0.2$. By combining these values, we observe that $v(a \W b,0)\neq v(a \U b,0)\tor v(\G a,0)$, indeed $0.764\neq 0.74\tor 0.2 = 0.792$. 

For \FLTLF[L], we obtain $v(a \W b,1)=0.7\tor (0.4\tand 1)=1$, $v(a \W b,0)=0.6\tor (0.5\tand 1)=1$, $v(a \U b,1)=0.7$, $v(a \U b,0)=0.8$, $v(\G a,1)=0.4$ and $v(\G a,0)=0.5\tand 0.4=0$, by which on the LHS we have $v(a \W b,0)=1$ whereas on the RHS we get $v((a \U b)\lor \G a,0)=\min\{1,0.8+0\}=0.8$. 
\end{proof}

\begin{proposition}\label{propMR}
In \FLTLF, operators $\M$ and $\R$ are dual to $\W$ and $\U$, respectively:
$$
\varphi_1 \M \varphi_2 \equiv \neg (\neg \varphi_1 \W \neg \varphi_2)
\qquad
\varphi_1 \R \varphi_2 \equiv \neg (\neg \varphi_1 \U \neg \varphi_2)
$$
\end{proposition}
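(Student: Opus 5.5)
We argue by backward induction on the instant $i$, following the pattern of the proofs of Propositions~\ref{propF} and~\ref{propW}, and we treat the two equivalences separately. The only algebraic facts we need are that the negation is involutive ($\tneg\tneg\alpha=\alpha$) and that $\tand,\tor,\tneg$ form a De Morgan triplet in all three semantics. Neither distributivity nor idempotence is used, which is why the claim holds in all of \FLTLF.

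For $\varphi_1 \M \varphi_2 \equiv \neg(\neg\varphi_1 \W \neg\varphi_2)$, write $a_i = v(\varphi_1,i)$, $b_i = v(\varphi_2,i)$, and $w_i = v(\neg\varphi_1 \W \neg\varphi_2, i)$.

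\textbf{Base case.} At $i=last(\lambda)$ we have $v(\Xw(\cdot),i)=1$, so
\[
w_i = \tneg b_i \tor (\tneg a_i \tand 1) = \tneg b_i \tor \tneg a_i .
\]
Hence, by De Morgan and involution, $\tneg w_i = b_i \tand a_i$. On the other side, $v(\X(\cdot),i)=0$ gives $v(\varphi_1\M\varphi_2,i) = b_i \tand (a_i \tor 0) = b_i\tand a_i$, so the two values agree.

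\textbf{Inductive step.} For $i<last(\lambda)$, both $\X$ and $\Xw$ shift to $i{+}1$. Assume $v(\varphi_1\M\varphi_2,i{+}1) = \tneg w_{i+1}$. Then
\[
\tneg w_i = \tneg\big(\tneg b_i \tor (\tneg a_i \tand w_{i+1})\big) = b_i \tand \big(a_i \tor \tneg w_{i+1}\big).
\]
This follows by applying De Morgan twice, once to the outer $\tor$ and once to the inner $\tand$, together with involution on $\tneg\tneg b_i$ and $\tneg\tneg a_i$. By the inductive hypothesis the right-hand side equals $b_i \tand (a_i \tor v(\varphi_1\M\varphi_2,i{+}1)) = v(\varphi_1\M\varphi_2,i)$.

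The case $\varphi_1 \R \varphi_2 \equiv \neg(\neg\varphi_1\U\neg\varphi_2)$ runs the same way, with $u_i = v(\neg\varphi_1\U\neg\varphi_2,i)$ in place of $w_i$. Here the asymmetry between $\X$ and $\Xw$ is exactly what makes the duality work. At the last instant, $\U$ uses $\X$, which yields $0$, so
\[
u_i = \tneg b_i \tor (\tneg a_i \tand 0) = \tneg b_i \quad\text{and hence}\quad \tneg u_i = b_i .
\]
Meanwhile $\R$ uses $\Xw$, which yields $1$, so $v(\varphi_1\R\varphi_2,i) = b_i \tand (a_i \tor 1) = b_i$. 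This uses the boundary laws $\alpha\tand 0=0$, $\alpha\tor 1 = 1$, $\alpha\tor 0=\alpha$ and $\alpha\tand 1=\alpha$, which hold for every t-norm and t-conorm. The inductive step is then literally the same computation as for $\M$.

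We expect the main obstacle to be bookkeeping at the boundary rather than any algebra. One must check that $\tneg$ maps the value $0$ produced by $\X$ to the value $1$ produced by $\Xw$ and vice versa, so that $\M$ is paired with $\W$ and $\R$ with $\U$ rather than the other way round. One must also confirm that each step uses only De Morgan and involution, never distributivity, so that the argument transfers unchanged to \FLTLF[P] and \FLTLF[L].
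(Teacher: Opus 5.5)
Your proof is correct and follows essentially the paper's argument: backward induction on $i$, using only De Morgan and the involutivity of $\tneg$, with the base case driven by $\X$ returning $0$ and $\Xw$ returning $1$ at $last(\lambda)$. The differences are cosmetic. The paper first negates the $\M$ recursion to obtain a $\W$-shaped recursion for $\neg(\varphi_1 \M \varphi_2)$, via an identity $\tneg v(\X\varphi,i)=v(\X\neg\varphi,i)$ that is only valid for $i<last(\lambda)$; you instead negate the $\W$ recursion directly, and you also spell out the $\R$ boundary case, which the paper leaves as ``the same structure''.
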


\begin{proof}
%The claim can be verified by expanding the semantics. 
%
We show this for $\M$. First, from $v(\varphi_1 \M \varphi_2,i) = v(\varphi_2,i) \tand (v(\varphi_1,i) \tor v( \X (\varphi_1 \M \varphi_2), i))$, by repeated application of De Morgan and since $\tneg v(\varphi,i) = v(\neg \varphi,i)$ and $\tneg v(\X\varphi,i) = v(\X\neg \varphi,i)$, we obtain 
%By negating both sides and applying De Morgan: $\tneg v(\varphi_1 \M \varphi_2,\lambda,i) = (\tneg v(\varphi_2,\lambda,i)) \tor \tneg (v(\varphi_1,\lambda,i) \tor v( \X (\varphi_1 \M \varphi_2, \lambda, i))$. By applying again De Morgan on the RHS we get $\tneg v(\varphi_1 \M \varphi_2,\lambda,i) = (\tneg v(\varphi_2,\lambda,i)) \allowbreak~\tor~\allowbreak (\tneg v(\varphi_1,\lambda,i) ~\tand~ \tneg v( \X (\varphi_1 \M \varphi_2, \lambda, i))$. Since $\tneg v(\X\varphi,\lambda,i) = v(\X\neg \varphi,\lambda,i)$, we obtain 
$v(\neg (\varphi_1 \M \varphi_2),i) =v(\neg \varphi_2,i) \allowbreak \tor \allowbreak ( v(\neg \varphi_1,i) \tand v( \X \neg (\varphi_1 \M \varphi_2), i))$. 

Now we show that $\neg (\varphi_1 \M \varphi_2) \equiv (\neg \varphi_1 \W \neg \varphi_2)$ by backward induction. 
The boundary condition is true: for $i=last(\lambda)$, on the LHS of the equivalence we have %$v(\neg (\varphi_1 \M \varphi_2), \lambda, i)=\tneg (v(\varphi_2,\lambda,i) \tand v(\varphi_1,\lambda,i))$ which is equal to 
$v(\neg (\varphi_1 \M \varphi_2), i)=
\tneg (v(\varphi_2,i)\tand (v(\varphi_1,i)\tor 0))=
v(\neg \varphi_2,i) \tor v(\neg \varphi_1,i)$. Likewise, on RHS $v(\neg \varphi_1 \W \neg \varphi_2, i) = v(\neg \varphi_2,i) \tor (v(\neg \varphi_1,i) \tand 1)=v(\neg \varphi_2,i) \tor v(\neg \varphi_1,i)$. 
For the inductive step (so for $0\leq i<last(\lambda)$), on the LHS and RHS we have, respectively: 
\begin{align*}
v(\neg(\varphi_1 \M \varphi_2),i) &= v(\neg\varphi_2,i) \tor (v(\neg\varphi_1,i) \tand v( \neg(\varphi_1 \M \varphi_2), i{+}1)) \\
v(\neg\varphi_1 \W \neg\varphi_2,i) &= v(\neg\varphi_2,i) \tor (v(\neg\varphi_1,i) \tand v( \neg\varphi_1 \W \neg\varphi_2, i{+}1))
\end{align*}
% $
% \begin{array}{@{}r@{\;}c@{\;}l}
% v(\neg(\varphi_1 \M \varphi_2)),i) &=& v(\neg\varphi_2,i) \tor (v(\neg\varphi_1,i) \tand v( \neg(\varphi_1 \M \varphi_2), i{+}1)) \\
% v(\neg\varphi_1 \W \neg\varphi_2,i) &=& v(\neg\varphi_2,i) \tor (v(\neg\varphi_1,i) \tand v( \neg\varphi_1 \W \neg\varphi_2, i{+}1))
% \end{array}
% $
which are equal by hypothesis $v( \varphi_1 \M \varphi_2, i{+}1) = v( \neg(\neg\varphi_1 \W \neg\varphi_2), i{+}1)$.

The proof for $\R$ (latter equivalence) has the exact same structure.
\end{proof}

\begin{proposition}\label{propMR2}
In \FLTLF[P] and \FLTLF[L] $\M$ and $\R$ cannot be derived by these standard \LTLF equivalences:
$$
\varphi_1 \M \varphi_2 \equiv \varphi_2 \U (\varphi_1 \land \varphi_2)
\qquad
\varphi_1 \R \varphi_2 \equiv \varphi_2 \W (\varphi_1 \land \varphi_2)
$$
whereas in \FLTLF[G] these equivalences hold. 
\end{proposition}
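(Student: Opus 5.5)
The plan has two halves: a positive backward-induction argument for \FLTLF[G], mirroring the proof of Proposition~\ref{propW}, and explicit small counterexamples for \FLTLF[P] and \FLTLF[L].

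\emph{Positive part (\FLTLF[G]).} I will prove $v(\varphi_1\M\varphi_2,i)=v(\varphi_2\U(\varphi_1\land\varphi_2),i)$ by backward induction on $i$, writing $v_1=v(\varphi_1,i)$ and $v_2=v(\varphi_2,i)$.

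For the boundary $i=last(\lambda)$, the left-hand side is $\min\{v_2,\max\{v_1,0\}\}=\min\{v_1,v_2\}$. The right-hand side is $\max\{\min\{v_1,v_2\},\min\{v_2,0\}\}=\min\{v_1,v_2\}$.

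For the inductive step, unfold $\U$ on the right-hand side:
\[
\max\{\min\{v_1,v_2\},\ \min\{v_2, v(\varphi_2\U(\varphi_1\land\varphi_2),i{+}1)\}\}.
\]
Distributivity of $\min$ over $\max$ (Proposition~\ref{prop:distrib_onlyG}) lets me factor out $v_2$. This gives $\min\{v_2,\max\{v_1, v(\cdot,i{+}1)\}\}$, which equals the left-hand side by the inductive hypothesis.

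The $\R$/$\W$ case is identical, with $\Xw$ in place of $\X$. The only change is the boundary, where the value $1$ replaces $0$. There the left-hand side is $\min\{v_2,\max\{v_1,1\}\}=v_2$, and the right-hand side is $\max\{\min\{v_1,v_2\},\min\{v_2,1\}\}=v_2$. Both steps use absorption, which holds only for the lattice operations. This is exactly where P and L fail.

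\emph{Negative part.} I will exhibit explicit traces on which the two sides differ.

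For $\R$, a single-instant trace already suffices, because at the last instant the $\W$ side is $(v_1\tand v_2)\tor v_2$, which is not absorbed.
\begin{itemize}
\item In \FLTLF[P], take $a\mapsto 0.5$, $b\mapsto 0.5$. Then $v(a\R b,0)=0.5$, while $v(b\W(a\land b),0)=0.25\tor 0.5=0.625$.
\item In \FLTLF[L], take $a\mapsto 0.8$, $b\mapsto 0.8$. Then $v(a\R b,0)=0.8$, while $v(b\W(a\land b),0)=\min\{1,0.6+0.8\}=1$.
\end{itemize}

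For $\M$, one instant is not enough, since at the last instant both sides reduce to $v_1\tand v_2$. I will therefore use a trace of length $2$.
\begin{itemize}
\item In \FLTLF[P], take $a,b\mapsto 0.5$ at both instants. Both sides have value $0.25$ at instant $1$. At instant $0$, $v(a\M b,0)=0.5\cdot(0.5\tor 0.25)=0.3125$, while $v(b\U(a\land b),0)=0.25\tor(0.5\cdot 0.25)=0.34375$.
\item In \FLTLF[L], take $a,b\mapsto 0.8$ at both instants. Both sides have value $0.6$ at instant $1$. At instant $0$, $v(a\M b,0)=0.8\tand\min\{1,1.4\}=0.8$, while $v(b\U(a\land b),0)=0.6\tor(0.8\tand 0.6)=\min\{1,0.6+0.4\}=1$.
\end{itemize}

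The main obstacle is choosing the counterexamples for $\M$. The boundary instant gives no discrepancy, and some natural choices coincide: for instance $b\equiv 1$ makes the factor $v_2$ neutral, so the two sides agree. Values strictly inside $(0,1)$ for both $a$ and $b$ at both instants avoid this.
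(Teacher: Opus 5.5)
Your proof is correct and follows essentially the same route as the paper: backward induction for \FLTLF[G] using distributivity of $\min$ over $\max$ (plus absorption at the $\R$/$\W$ boundary), and explicit numerical counterexamples for \FLTLF[P] and \FLTLF[L]; all your values check out. The only difference is the choice of traces. The paper reuses the single trace $\langle\{a\mapsto 0.5,b\mapsto 0.6\},\{a\mapsto 0.4,b\mapsto 0.7\}\rangle$ throughout, whereas you correctly observe that for $\R$ the discrepancy already appears at the last instant (the paper's own instant-$1$ values show this too), so one instant suffices there, while $\M$ genuinely needs two.
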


\begin{proof}
We first consider $\M$ (former equivalence).
Given any $\lambda$, assume that $v(\varphi_1 \M \varphi_2,n)=v(\varphi_2 \U (\varphi_1 \land \varphi_2),n)$ holds for some $0<n\leq last(\lambda)$. 
Consider $i=n{-}1$. 
%
%This requires the following two evaluations to be equal:
%\begin{itemize}
%	
%\item RHS: by expanding $ v(\varphi_2 \U (\varphi_1 \land \varphi_2),\lambda, i)$ we get:\\ 
%$(v(\varphi_1,\lambda,i) \tand v(\varphi_2,\lambda,i)) \tor (v(\varphi_2,\lambda,i) \tand  k ) $
%\item LHS: by expanding $ v(\varphi_1 \M \varphi_2,\lambda, i)$ we get:\\  
%$v(\varphi_2,\lambda,i) \tand ( v(\varphi_1,\lambda,i) \tor  k ) $
%\end{itemize}
%
%This is not guaranteed in \FLTLF[P] and \FLTLF[L].
%
By applying the semantics and the hypothesis, on the LHS and RHS we have, respectively: 
\begin{align*}
v(\varphi_1 \M \varphi_2, i) &= v(\varphi_2,i) \tand ( v(\varphi_1,i) \tor  v(  \varphi_1 \M \varphi_2,i{+}1) ) \\
v(\varphi_2 \U (\varphi_1 \land \varphi_2), i) &= (v(\varphi_1,i) \tand v(\varphi_2,i)) \tor (v(\varphi_2,i) \tand v(  \varphi_1 \M \varphi_2,i{+}1) ) 
\end{align*}
% $
% \begin{array}{@{}r@{\;}c@{\;}l}
% v(\varphi_1 \M \varphi_2, i) &=& v(\varphi_2,i) \tand ( v(\varphi_1,i) \tor  v(  \varphi_1 \M \varphi_2,i{+}1) ) \\
% v(\varphi_2 \U (\varphi_1 \land \varphi_2), i) &=& (v(\varphi_1,i) \tand v(\varphi_2,i)) \tor (v(\varphi_2,i) \tand v(  \varphi_1 \M \varphi_2,i{+}1) ) 
% \end{array}
% $
% \noindent
As to equate these directly requires distributivity, this fails for \FLTLF[P] and \FLTLF[L] since that property does not hold for these semantics. 

\smallskip
We provide a counterexample which also highlights this aspect: assume $\lambda = \langle \{ a\mapsto 0.5, b \mapsto 0.6  \}, \{ a\mapsto 0.4, b \mapsto 0.7  \}\rangle$. 

For \FLTLF[P], on the LHS we have $v(a\M b, 1) = 0.7\tand (0.4\tor 0)=0.7\cdot 0.4=0.28$ and so $v(a\M b, 0) = 0.6 \tand (0.5 \tor 0.28) = 0.6 \cdot (0.5+0.28-0.14) = 0.384$, while for the RHS we have $v(b \U (a\tand b), 1)  =0.7\tand (0.4\tor 0)= 0.28$ and so $v(b \U (a\tand b), 0) = (0.5\tand 0.6) \tor (0.6 \tand 0.28)=0.3 + 0.168 - 0.3\cdot 0.168=0.4176$. 

For \FLTLF[L], on the LHS we have $v(a\M b, 1) = 0.7\tand (0.4 \tor 0)=0.1$ and so $v(a\M b, 0) = 0.6 \tand (0.5 \tor 0.1) = \max\{0,0.6+\min\{1,0.5+0.1\}-1\} = 0.2$, while for the RHS we have $v(b \U (a\tand b), 1) = v(a\tand b, 1) = \max\{0,0.4+0.7-1\} = 0.1$ and so $v(b \U (a\tand b), 0) = (0.5\tand 0.6) \tor (0.6 \tand 0.1)=\min\{1,0.1+0\}=0.1$.

\smallskip
For \FLTLF[G] instead the equality in the inductive step holds because of distributivity. 
Moreover, the boundary condition holds as well (as it does for \FLTLF[P] and \FLTLF[L], since this only requires $\tor(x,0){=}x$ and commutativity of $\tand$): for $i=last(\lambda)$, on the LHS we have $v(\varphi_1 \M \varphi_2, i)=v(\varphi_2,i)\tand (v(\varphi_1,i)\tor 0)=v(\varphi_1,i)\tand v(\varphi_2,i)$ and on the RHS $v(\varphi_2 \U (\varphi_1 \land \varphi_2), i)=(v(\varphi_1,i)\tand v(\varphi_2,i))\tor (v(\varphi_2,i)\tand 0)=v(\varphi_1,i)\tand v(\varphi_2,i)$. Thus we obtain the claim by backward induction. 

%however, the boundary condition, which we need in order to apply backward induction, fails: for $i=last(\lambda)$, we have $v(\varphi_1 \M \varphi_2, i)=v(\varphi_2,i)$ and $v(\varphi_2 \U (\varphi_1 \land \varphi_2), i)=v(\varphi_1 \land \varphi_2,i)$.

For $\R$ (latter equivalence), in the inductive step (hence with $0<i<last(\lambda)$, namely the next instant exists) we apply  the semantics and the hypothesis (i.e., that $v(\varphi_1 \R \varphi_2,i{+}1)=v(\varphi_2 \W (\varphi_1 \land \varphi_2),i{+}1)$), obtaining on the LHS and RHS, respectively:
\begin{align*}
v(\varphi_1 \R \varphi_2, i) &= v(\varphi_2,i) \tand ( v(\varphi_1,i) \tor  v(  \varphi_1 \R \varphi_2,i{+}1) ) \\
v(\varphi_2 \W (\varphi_1 \land \varphi_2), i) &= (v(\varphi_1,i) \tand v(\varphi_2,i)) \tor (v(\varphi_2,i) \tand v(  \varphi_1 \R \varphi_2,i{+}1) ) 
\end{align*}
% $
% \begin{array}{@{}r@{\;}c@{\;}l}
% v(\varphi_1 \R \varphi_2, i) &=& v(\varphi_2,i) \tand ( v(\varphi_1,i) \tor  v(  \varphi_1 \R \varphi_2,i{+}1) ) \\
% v(\varphi_2 \W (\varphi_1 \land \varphi_2), i) &=& (v(\varphi_1,i) \tand v(\varphi_2,i)) \tor (v(\varphi_2,i) \tand v(  \varphi_1 \R \varphi_2,i{+}1) ) 
% \end{array}
% $
% \noindent
which has the exact shape obtained for the former equivalence, so the same considerations apply regarding \FLTLF[P] and \FLTLF[L]. 
Indeed, for the counter-example trace $\lambda$ as above, we obtain distinct values: for \FLTLF[P] $v(a \R b,1)=0.7$, $v(a \R b,0)=0.6\tand (0.5\tor 0.7)=0.51$, $v(b \W (a \land b),1)=(0.4\tand 0.7)\tor(0.7\tand 1)=0.784$, $v(b \W (a \land b),0)=(0.5\tand 0.6)\tor(0.6\tand 0.784)=0.62928$. For \FLTLF[L] $v(a \R b,1)=0.7$, $v(a \R b,0)=0.6\tand (0.5\tor 0.7)=0.6$, $v(b \W (a \land b),1)=(0.4\tand 0.7)\tor(0.7\tand 1)=0.8$, $v(b \W (a \land b),0)=(0.5\tand 0.6)\tor(0.6\tand 0.8)=0.5$.

Again, the boundary condition holds for \FLTLF[G]: for $i=last(\lambda)$ we have $v(\varphi_1 \R \varphi_2,i)= v(\varphi_2, i) \tand (v(\varphi_1, i) \tor 1)= \min\{v(\varphi_2, i), 1\}=v(\varphi_2, i)$. And likewise: $v(\varphi_2 \W (\varphi_1 \land \varphi_2),i)=(v(\varphi_1, i)\tand v(\varphi_2, i)) \tor (v(\varphi_2, i) \tand 1) = \max\{\min\{v(\varphi_1, i), v(\varphi_2, i)\}, v(\varphi_2, i)\}=v(\varphi_2, i)$. So the boundary condition holds and by distributivity so does the inductive equality, hence the claim follows by backward induction. 
\end{proof}

These results show that $\R$ can be obtained by duality from $\U$ in \FLTLF and $\M$ from $\W$, however $\W$ is an abbreviation only in \FLTLF[G].  
%Therefore, the temporal logical basis for \FLTLF[G] can be expressed as $\{ \X, \U \}$, whereas for the other semantics we consider $\W$ as primitive. %, obtaining the remaining ones as abbreviations.
Therefore, the temporal logical basis for \FLTLF[G] can be expressed as $\{ \X, \U \}$, whereas for the other semantics we consider also $\W$ as primitive.

As a final remark, note that irrespective of the semantics chosen, since standard negation is also crisp, these logics collapse to \LTLF in the special case in which only $0$/$1$ values are present in a trace (and thus it can be taken as a standard \LTLF trace as in Definition~\ref{def:traces_ltlf}). Indeed, on boolean values the t-norm corresponds to the boolean conjunction and the t-conorm to the boolean disjunction.  
% \section{Temporal LTN}
% \label{sec:TLTN}
\section{The \texorpdfstring{\DiffLTLf}{dLTLf} Framework}
\label{sec:diffLTLf}
% Outline:
% 5.1 Problem Formulation
% - weakly sup. seq class task
% - streams S, obs space X, classes C, atomic props partitioned across streams ...
% - input sequence x (with tuple of obs)
% - crisp symbolic trace (ONLY to define ground truth)?
% - dataset D
% - goal (learn parameters of the perception module from D with lambda never observed)
% 5.2 Framework Overview
% - pipeline figure (o figure 2 o un'altra)
% - 3 blocchi:
%     i) perception module (provides a fuzzy trace)
%         - what is the perception module (f_theta -> ...) with softmax over class logits
%         - applied independently per stream and instant
%         - gives the fuzzy trace \lambda (? controlla notation), valida per definition 1 sec 4
        
%     ii) the LTLf formula evaluated on the fuzzy trace under FLTLfx
%         - PENSARE BENE A COME DESCRIVERLO
%     iii) resulting truth val against GT with bce
%         - BCE loss between v(Phi, lambda, 0) and y
%         - gradient flow?
% - point out differences wrt DFA-based approaches (no DFA representation)
% - describe padding (?) e minibatch evaluation

In this section, we introduce the \DiffLTLf framework, starting with the problem formulation in Section~\ref{sec:problem}, before introducing the proposed approach in Section~\ref{sec:framework}, and its implementation in Section~\ref{sec:implementation}.%, concluding the section with the way we represent \FLTLF formulae in propositional ones in Section~\ref{sec:encoding}.

\subsection{Problem Formulation}
\label{sec:problem}

We consider a weakly supervised sequence classification task defined over multiple streams of observations, such as images. 
Let $\mathcal{S} = \{1, \dots, S\}$ be a set of distinct perceptual streams and let $\mathcal{X}$ be the observation space containing the raw data.
A sequence of length $n$ is $ \mathbf{x} = \langle \mathbf{x}_0, \dots, \mathbf{x}_{n-1}  \rangle$, where each element $\mathbf{x}_i = \langle x_i^1, \dots, x_i^{|\mathcal{S}|} \rangle$ at instant $i\in \{0,\dots,n-1\}$ is a tuple of observations, one for each stream $s\in\mathcal{S}$, with $x_i^s \in \mathcal{X}$.

We define two sets, $\mathcal{C}$ and $\mathcal{P}$.
The set of classes $\mathcal{C}$ is the set of possible labels of an observation (e.g., the digits $0, \dots, 9$ for the MNIST dataset), and, for simplicity, we assume it is shared across streams. \footnote{For notational simplicity, we assume that all streams share a common observation space $\mathcal{X}$. The framework extends simply to the general case of stream-specific observation spaces $\mathcal{X}^s$, one for each $s \in \mathcal{S}$, by replacing the shared perception module $f_\theta$ with stream-specific modules $f_\theta^s: \mathcal{X}^s \to [0,1]^{|\mathcal{C}^s|}$.}
The set of atomic propositions $\mathcal{P}$ is the alphabet over which the symbolic traces of Section~\ref{sec:fuzzyLTL} are defined.
For each stream $s$, we employ a distinct set of propositional symbols $\mathcal{P}^s$, where $p_c^s\in \mathcal{P}^s$ denotes the fact that an observation in stream $s$ belongs to class $c\in \mathcal{C}$. 
%
% We distinguish between the set of classes $\mathcal{C}$ (e.g., the digits $0, \dots, 9$), which represent the symbols groundable by the perception module and are shared across streams, and the set of atomic propositions $\mathcal{P}$ used in the symbolic traces. 
% %  
% For simplicity, we employ for each stream $s$ a distinct set of propositional symbols $\mathcal{P}^s$, where $p_c^s\in \mathcal{P}^s$ denotes the fact that an observation in stream $s$ belongs to class $c\in \mathcal{C}$. 
%
Therefore, $\mathcal{P} = \bigcup_{s\in \mathcal{S}}\mathcal{P}^s$ is the full propositional alphabet, with $\mathcal{P}^s\cap \mathcal{P}^{s'}=\emptyset$ for $s\neq s'$. 

The learning task is guided by a temporal specification $\Phi$, expressed as an \LTLF formula over the alphabet $\mathcal{P}$. 
Each input sequence of observations $\mathbf{x}$ corresponds to a crisp symbolic trace $\lambda^\mathbf{x} = \langle \lambda_0, \ldots, \lambda_{n-1} \rangle$ as defined in Definition~\ref{def:traces_ltlf}, with $\lambda_i(p_c^s) = 1$ if $x_i^s$ is of class $c$, and $0$ otherwise.
The binary label $y$ associated with $\mathbf{x}$ is $1$ if $\lambda^\mathbf{x}$ satisfies $\Phi$, and $0$ otherwise:
\begin{equation}
    y =
    \begin{cases}
    1 & \text{if } \lambda^\mathbf{x} \models_{\LTLF} \Phi \\
    0 & \text{otherwise}
    \end{cases}
\end{equation}
The crisp trace $\lambda^\mathbf{x}$, however, is not available during training.
The dataset $\mathcal{D}$  is a collection of $M$ pairs $(\mathbf{x}, y)$, where only the sub-symbolic input $\mathbf{x}$ and the binary label $y$ are observed.
The goal is to learn the parameters $\theta$ of a perception module, modelled as a function $f_\theta: \mathcal{X} \to [0,1]^{|\mathcal{C}|}$, shared across streams, that maps each raw observation to a vector of fuzzy truth values.
In other words, the task is an instance of weakly supervised symbol grounding, a common setting in neurosymbolic AI \cite{DBLP:conf/nips/ManhaeveDKDR18, DBLP:journals/ml/DanieleKSH23}. The learning of the concepts is not driven by direct per-concept supervision, but by background knowledge, in our case the temporal specification $\Phi$, together with the resulting sequence-level binary label $y$.

\subsection{Framework}
\label{sec:framework}

\begin{figure}[bt]
\centering
  % trim = {Left Bottom Right Top}, clip removes the trimmed area
  \includegraphics[trim={.7cm .3cm .7cm .3cm}, clip, width=1.\linewidth]{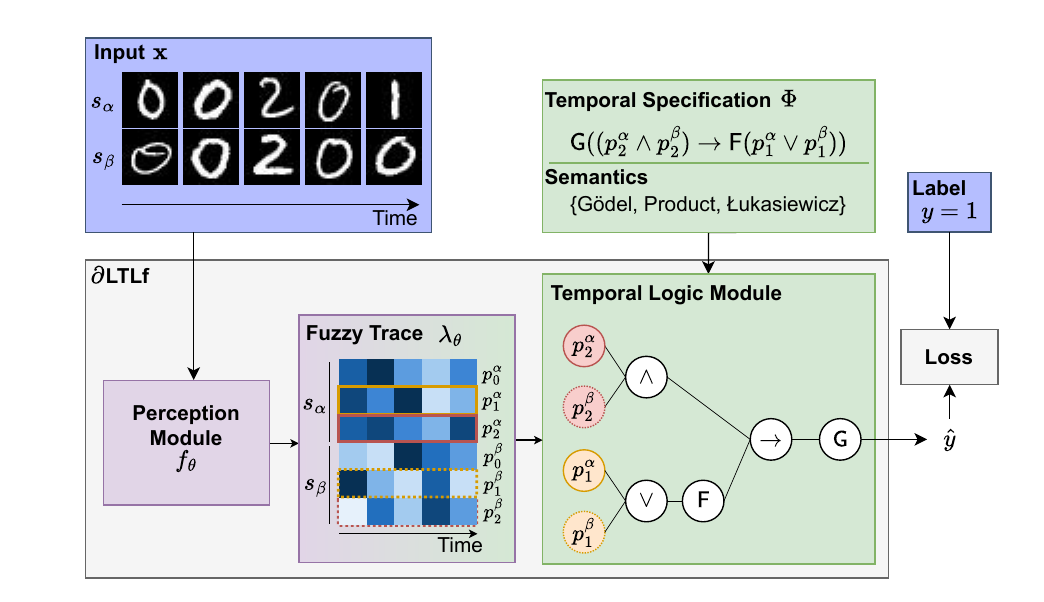}
  % \caption{Overview of the \DiffLTLf pipeline.}
  \caption{Overview of the \DiffLTLf framework. The perception module $f_\theta$ produces the fuzzy trace $\lambda_\theta$ (rows represent the propositions $p^s_c$ grouped by stream). The temporal logic module evaluates $\Phi$ under the \FLTLF[x] semantics. The loss evaluates the predictions of the temporal logic module in comparison with the ground truth label.}
  \label{fig:pipeline}
\end{figure}

\DiffLTLf is a neurosymbolic framework that, given an observation sequence $\mathbf{x}$ and a temporal specification $\Phi$, computes a fuzzy satisfaction value for formula $\Phi$. Internally, it first computes the fuzzy truth values of propositions $\mathcal{P}$ through a perception module $f_\theta$, and then it feeds them to the temporal logic module to compute the final truth value. 
Since the framework is differentiable almost everywhere, it can be trained end-to-end directly by applying a loss function to its outputs.
% The \DiffLTLf framework is a differentiable pipeline that, given an observation sequence $\mathbf{x}$ and a temporal specification $\Phi$, computes a fuzzy satisfaction value used to optimize the perception module parameters $\theta$.

The entire pipeline of \DiffLTLf is illustrated in Figure~\ref{fig:pipeline} and consists of three components: perception module, temporal logic module, and loss.

\paragraph{Perception module} The perception module $f_\theta$ is applied independently to each observation $x_i^s$ in every stream and at every instant. 
Each application of the perception module to an observation produces a vector in $[0,1]^{|\mathcal{C}|}$ whose $c$-th element is the fuzzy truth value assigned to the atomic proposition $p_c^s$ at that instant:
\begin{equation}
\lambda_i(p_c^s) = f_\theta(x_i^s)_c, \quad \text{for } c \in \mathcal{C},\ s \in \mathcal{S},\ i \in \{0, \dots, n-1\}.
\end{equation}
In the general formulation, atoms $p^s_c$ are not assumed to be mutually exclusive. When mutual exclusivity holds, i.e., each observation represents a single class, a softmax output layer can be applied to $f_\theta$, giving $\sum_{c \in \mathcal{C}}f_\theta(x_i^s)_c=1$.
% The collection of these vectors, across all streams and instants of $\mathbf{x}$, defines a fuzzy trace $\lambda_\theta$ in the sense of Definition~\ref{def:traces}.
Collecting these values across all instants yields, for each atom $p_c^s$, the sequence $\lambda(p^s_c) = \langle f_\theta(x_0^s)_c, \dots, f_\theta(x_{n-1}^s)_c\rangle$ in the sense of Definition~\ref{def:traces}. 
Across all atoms and streams, these sequences form the fuzzy trace $\lambda_\theta$.

\paragraph{Temporal logic module} The temporal specification $\Phi$ is represented by its syntactic tree, shown in Figure \ref{fig:pipeline}, whose leaves are the atomic propositions in $\mathcal{P}$ and whose internal nodes are the connectives and temporal operators occurring in $\Phi$. 
The formula is evaluated directly on the fuzzy trace $\lambda_\theta$ by application of the \FLTLF[x] semantics defined in Section~\ref{sec:fuzzyLTL}, under any of its three fuzzy semantic instances, without any intermediate automaton construction, proceeding from the leaves of the tree to its root.
The output is the value $v(\Phi, \lambda_\theta, 0) \in [0,1]$, which represents the degree of satisfaction of $\Phi$ at the beginning of the trace. For conciseness we denote it with $v(\Phi, \lambda_\theta)$, as in Section~\ref{sec:fuzzyLTL}.
The evaluation is differentiable almost everywhere with respect to the perception outputs, allowing the perception module to be trained using standard automatic differentiation. 
% The evaluation is differentiable almost everywhere with respect to the perception outputs, since every operator in the \FLTLF[x] semantics is differentiable almost everywhere with respect to its arguments.

\paragraph{Loss} The fuzzy satisfaction value $v(\Phi, \lambda_\theta)$ is compared against the binary label $y$ via binary cross-entropy. Gradients of the resulting loss are backpropagated through the temporal logic module and the perception module to update $\theta$.

\medskip
In contrast to existing temporal neurosymbolic approaches based on \LTLF~\cite{DBLP:conf/kr/UmiliCG23, DBLP:conf/ijcai/ManginasPR25}, which compile $\Phi$ into deterministic finite automata and propagate state values across the input sequence, \DiffLTLf evaluates $\Phi$ by recursive application of the \FLTLF[x] semantics directly on the fuzzy trace produced by the perception module.
Our use of fuzzy semantics to evaluate logical formulas is shared with static neurosymbolic frameworks such as LTN \cite{ltn} and SBR \cite{SBR}, but the setting is different.
There, the formula is assumed to hold for all instances, and its satisfaction is maximised during training.
Here $\Phi$ is not assumed to hold, and its satisfaction is the prediction of the model.
Following \citet{informed_ML}, this makes those frameworks loss-based and \DiffLTLf model-based, with $\Phi$ acting as a differentiable layer that stays in the model at inference time.
\footnote{The \DiffLTLf framework is not limited to this setting, as it can be instantiated in a loss-based fashion, and the theoretical analysis of Section \ref{sec:fuzzyLTL} applies unchanged. We leave such extensions to future work.}

\subsection{Implementation}
\label{sec:implementation}

The temporal logic module is implemented as a Python library built on top of PyTorch\footnote{https://pytorch.org/} and operates on batches of sequences in tensor form.
A batch consists of $B$ sequences, indexed by $b \in \{0, \dots, B-1\}$, of possibly different lengths $n_b$.
Representing a batch as a single tensor allows every step of the evaluation to be carried out by one tensor operation over all its sequences at once, but requires them to share a common length.
We therefore let $T = \max_b n_b$ be the maximum length in the batch, and pad the shorter sequences up to $T$.
Padding is introduced for this purpose alone, and how it is treated is described later in this section.
The batch is then represented by two tensors.
The first, $\Lambda_\theta \in [0,1]^{B\times T \times |\mathcal{P}|}$, collects the fuzzy values produced by the perception module $f_\theta$ on every observation of every sequence. Its slice $(\Lambda_\theta)_b$ is the tensor encoding of the fuzzy trace $\lambda_\theta^{(b)}$ of the $b$-th sequence.
Along the time axis, for a fixed proposition $p$, the slice $(\Lambda_\theta)_{b,:,p}$ encodes $\lambda(p)$ for the $b$-th sequence. Here the colon denotes all the entries along that axis, so that the slice collects the values that $p$ takes at every instant of the sequence.
The second tensor, $\mu \in \{0,1\}^{B \times T}$, is a boolean mask that flags the valid instants with 1 and the padded ones with 0: 
% $\mu_{b,i}=1$ iff $i \le n_b-1$, and $\mu_{b,i}=0$ otherwise.
\begin{equation}
\mu_{b,i} =
\begin{cases}
1 & \text{if } i \le n_b - 1 \\
0 & \text{otherwise.}
\end{cases}
\end{equation}

\begin{algorithm}[h]
\caption{Recursive tensor evaluation for \FLTLF[x] formulas.
}
\label{alg:eval}
\begin{algorithmic}[1]
    \Function{Eval}{$\varphi$, $\Lambda_\theta$, $\mu$}
        \If{$\varphi$ is an atom $p$}
            \State \Return $(\Lambda_\theta)_{:,:,p}$
        \ElsIf{$\varphi = \varphi_1 \land \varphi_2$} \Comment{analogously for $\lor, \neg, \rightarrow$} \label{ln:bool}
            \State \Return $\Call{Eval}{\varphi_1, \Lambda_\theta, \mu} \tand \Call{Eval}{\varphi_2, \Lambda_\theta, \mu}$
        \ElsIf{$\varphi = \X\psi$} \Comment{analogously for $\Xw$ with fill $1$ and last column $1$} \label{ln:next}
            \State $V \gets \Call{Eval}{\psi, \Lambda_\theta, \mu}$
            \State $\widetilde V \gets \text{fill}(V, \mu, 0)$
            \State \Return left shift of $\widetilde V$ by one position, last column $\gets 0$
        \ElsIf{$\varphi = \G\psi$} \Comment{analogously for $\F$ with fill $0$ and $\tor$} \label{ln:always}
            \State $V \gets \Call{Eval}{\psi, \Lambda_\theta, \mu}$
            \State $\widetilde V \gets \text{fill}(V, \mu, 1)$
            \State \Return reverse cumulative $\tand$ of $\widetilde V$ along the time axis
        \ElsIf{$\varphi = \varphi_1 \U \varphi_2$} \Comment{analogously for $\W, \R, \M$} \label{ln:until_start}
            \State $\widetilde V_1 \gets \text{fill}(\Call{Eval}{\varphi_1, \Lambda_\theta, \mu}, \mu, 0)$
            \State $\widetilde V_2 \gets \text{fill}(\Call{Eval}{\varphi_2, \Lambda_\theta, \mu}, \mu, 0)$
            \State $V \gets$ empty tensor of shape $(B, T)$;\ \ $\text{curr} \gets 0$
            \For{$i = T-1$ \textbf{down to} $0$}
                \State $\text{curr} \gets (\widetilde V_2)_{:, i} \tor ((\widetilde V_1)_{:, i} \tand \mathrm{curr})$
                \State $V_{:, i} \gets \text{curr}$
            \EndFor
            \State \Return $V$ \label{ln:until_end}

        \EndIf
    \EndFunction
    
\end{algorithmic}
\end{algorithm}

% The evaluation is carried out by the recursive function \textsc{Eval} of Algorithm \ref{alg:eval}, which visits the syntactic structure of $\Phi$ from the leaves to the root.
% Given a subformula $\varphi$, the call $\textsc{Eval}(\varphi, \Lambda_\theta, \mu)$ returns a tensor $V_\varphi \in [0,1]^{B \times T}$ such that $(V_\varphi)_{b,i} = v(\varphi, \lambda_\theta^{(b)}, i)$ at every valid instant.
% For each sequence $b$ in the batch, the satisfaction value $v(\Phi, \lambda_\theta^{(b)}, 0)$ fed to the loss is the element $(V_\Phi)_{b, 0}$ of the value tensor returned by $\textsc{Eval}(\Phi, \Lambda_\theta, \mu)$.

The recursive evaluator, given as the function \textsc{Eval} of Algorithm \ref{alg:eval}, visits the syntactic structure of $\Phi$ from the leaves to the root, associating to each sub-formula $\varphi$ a tensor $V_\varphi \in [0,1]^{B \times T}$ such that $(V_\varphi)_{b,i} = v(\varphi, \lambda_\theta^{(b)}, i)$ at every valid instant.
For each sequence $b$ in the batch, the satisfaction value $v(\Phi, \lambda_\theta^{(b)}, 0)$ fed to the loss is the element $(V_\Phi)_{b, 0}$ of the value tensor.

For sequences shorter than $T$, the evaluation would be affected by the padded positions. To prevent this, and so to properly manage padding values, we introduce two auxiliary operators $\nabla_0$ and $\nabla_1$.
Applied to an operand value at an instant within $n_b$, they return that value unchanged, while applied past the end of the trace, they return 0 and 1, respectively.
Each temporal operator of \FLTLF[x] is associated with one of the two, determined by its semantics at the boundary.

% To handle uniformly the value of an operand past the end of the trace, we introduce two auxiliary operators $\nabla_0$ and $\nabla_1$. Applied to an operand value at an instant within $n_b$, they return that value unchanged, while applied past the end of the trace, they return 0 and 1, respectively.
% Each temporal operator of \FLTLF[x] is associated with one of the two, determined by its semantics at the boundary.

When a temporal operator associated with $\nabla \in \{\nabla_0, \nabla_1\}$ is applied to a sub-formula with the value tensor $V_\varphi$, the operator first replaces the padded entries of $V_\varphi$ with the value $\nabla$ returns past the end of the trace ($0$ for $\nabla_0$, $1$ for $\nabla_1$), obtaining

\begin{equation}
(\widetilde V_\varphi)_{b,i} =
\begin{cases}
(V_\varphi)_{b,i} & \text{if } \mu_{b,i} = 1 \\
0 & \text{if } \mu_{b,i} = 0 \text{ and } \nabla = \nabla_0 \\
1 & \text{if } \mu_{b,i} = 0 \text{ and } \nabla = \nabla_1
\end{cases}
\end{equation}

It then applies its tensor operation to $\widetilde V_\varphi$.
This substitution is performed by the function $\mathrm{fill}$ of Algorithm \ref{alg:eval}.
Padded positions therefore contribute only this boundary value and cannot affect the result at any valid instant.

Propositional connectives are pointwise and require no padding logic. Their pointwise nature implies that a padded value at $(b,i)$ can affect only the outputs at the same $(b,i)$ in subsequent propositional steps, and it is overwritten by the boundary value of any temporal operator further up.
Since the loss considers $V_\Phi$ only at $i=0$, which is a valid instant in every sequence, what happens at padded positions has no effect on the satisfaction value.

Each operator is computed by a standard tensor operation on its operand(s), one for each branch of \textsc{Eval}.
Propositional connectives apply the elementwise t-norm, t-conorm, and fuzzy negation of the chosen semantics (Algorithm~\ref{alg:eval}, line~\ref{ln:bool}).
$V_{\X\varphi}$ is obtained by shifting $\widetilde V_\varphi$ (with $\nabla = \nabla_0$) one position to the left and filling the last column with $0$ (line~\ref{ln:next}).
$V_{\Xw \varphi}$ is analogous, with $\nabla_1$ and last column filled with $1$.
$V_{\G \varphi}$ at instant $i$ is the iterated t-norm of $\widetilde V_\varphi$ (with $\nabla = \nabla_1$) from $i$ to the end of the trace, computed as a reverse cumulative $\tand$ along the time axis (line~\ref{ln:always}).
$V_{\F \varphi}$ is the iterated t-conorm (with $\nabla = \nabla_0$):
\begin{equation}
(V_{\mathsf{G}\varphi})_{b, i} = 
\bigotimes_{k=i}^{T-1} (\widetilde V_\varphi)_{b, k}, 
\qquad 
(V_{\mathsf{F}\varphi})_{b, i} = \bigoplus_{k=i}^{T-1} (\widetilde V_\varphi)_{b, k}.
\end{equation}

$\varphi_1 \U \varphi_2$ is computed by backward iteration over time on $\widetilde V_{\varphi_1}$ and $\widetilde V_{\varphi_2}$ (both with $\nabla = \nabla_0$), following the recursive computation
\begin{equation}
v(\varphi_1 \U \varphi_2, \lambda, i) = v(\varphi_2, \lambda, i) \tor ( v(\varphi_1, \lambda, i) \tand v(\varphi_1 \U \varphi_2, \lambda, i+1) )
\end{equation}
initialised at the end of the trace with the value returned by $\nabla_0$ (Algorithm \ref{alg:eval}, lines~\ref{ln:until_start}-\ref{ln:until_end}).
The remaining operators of \FLTLF[x] can be handled in the same way.
% Algorithm \ref{alg:eval} summarizes the recursive evaluator.

\begin{example}
Consider $\Phi = \mathsf{G}(a \to \mathsf{F} b)$, interpreted under Gödel semantics, on a trace of length $3$ padded to $T = 5$, so that $\mu = (1,1,1,0,0)$.
% Consider $\Phi = \mathsf{G}(a \to \mathsf{F} b)$ on a trace of true length $3$ padded to $T = 5$, so that $\mu = (1,1,1,0,0)$.
The perception module assigns to $a$ and $b$
% , at the three valid instants, \todo{Ale: secondo me è sbagliato dire che nei tre istanti assegnamo un vettore. Il vettore contiene già tutti e tre, e il padding}
the values $V_a = (0.8,\,0.1,\,0.2,\,*,\,*)$ and $V_b = (0.1,\,0.3,\,0.7,\,*,\,*)$, where $*$ marks padded positions.
% We use Gödel semantics throughout this example.
The evaluator proceeds bottom-up.
The sub-formula $\mathsf{F} b$ uses $\nabla_0$, so its padded entries become $0$, obtaining $\widetilde V_b = (0.1,\,0.3,\,0.7,\,0,\,0)$. Since $0$ is the identity of $\oplus$, the iterated t-conorm is unaffected, and $V_{\mathsf{F} b} = (0.7,\,0.7,\,0.7,\,*,\,*)$.
% at the valid instants.
The implication connective is pointwise, i.e., it is computed independently at each instant,
% The connective $a \to \mathsf{F} b$ is pointwise, that is, it is computed independently at each instant from the values that its two operands take at the same instant.
% Using the material implication equivalence, $a \to \mathsf{F} b$ is $\neg a \lor \mathsf{F} b$,
% \todo{Ale: noi nella parte teorica sulla logica fuzzy definiamo direttamente l'implicazione. Non diciamo che bisogna convertire la formula, quella conversione è implicita. Io salterei questa parte (intendo la conversione da  $a \to \mathsf{F} b$ a $\neg a \lor \mathsf{F} b$)} 
and with $\neg a = (0.2,\,0.9,\,0.8,\,*,\,*)$ this yields $V_{a \to \mathsf{F} b} =(0.7,\,0.9,\,0.8,\,*,\,*)$.
Finally, $\mathsf{G}$ uses $\nabla_1$, filling the padded entries with $1$. Since $1$ is the identity of $\otimes$, the iterated t-norm is unaffected, and $v(\Phi, \lambda) = (V_\Phi)_0 = 0.7$, matching the evaluation on the unpadded trace.
\label{ex:trace_padding}
\end{example}

\begin{figure}[t]
\centering
\resizebox{\textwidth}{!}{%
\begin{tikzpicture}[
  >=Stealth, font=\footnotesize,
  cell/.style ={draw=black!30, minimum width=0.66cm, minimum height=0.50cm,
                inner sep=1pt, anchor=center},
  hdr/.style  ={cell, draw=none, minimum height=0.40cm,
                font=\scriptsize, text=black!55},
  pad/.style  ={cell, fill=black!10, text=black!45},
  fillv/.style={cell, fill=cF!70},
  neu/.style  ={cell, fill=black!4},
  va/.style   ={cell, fill=cA!16},
  vb/.style   ={cell, fill=cB!16},
  vo/.style   ={cell, fill=cO!16},
  vg/.style   ={cell, fill=cG!16},
  strip/.style={matrix of nodes, ampersand replacement=\&,
                anchor=north west, inner sep=0pt, nodes={cell},
                row sep=-\pgflinewidth, column sep=-\pgflinewidth},
  lab/.style  ={anchor=east, inner sep=0pt},
  ttl/.style  ={anchor=south west, inner sep=0pt, font=\footnotesize\bfseries},
  op/.style   ={anchor=east, inner sep=0pt, font=\scriptsize\itshape, text=black!60},
  nab/.style  ={font=\scriptsize\bfseries, text=cF!80!black},
  bx/.style   ={rounded corners=2.5pt, line width=0.7pt},
  ar/.style   ={->, line width=0.9pt, rounded corners=2pt},
]

\draw[bx, draw=black!45, fill=black!2] ( 0.00,-0.39) rectangle ( 5.14,-2.73);
\draw[bx, draw=cA,       fill=cA!4   ] ( 5.94, 0.00) rectangle (11.08,-1.26);
\draw[bx, draw=cB,       fill=cB!4   ] ( 5.94,-1.86) rectangle (11.08,-4.08);
\draw[bx, draw=cO,       fill=cO!4   ] (11.88, 0.00) rectangle (17.02,-2.40);
\draw[bx, draw=cG,       fill=cG!4   ] (11.88,-3.00) rectangle (17.02,-5.22);

%%  fuzzy trace 
\node[ttl, text=black!75] at (0.15,-0.34) {fuzzy trace $\lambda_\theta$};
\matrix (T) [strip, row 1/.style={nodes={hdr}}] at (1.62,-0.61) {
  $i{=}0$ \& $i{=}1$ \& $i{=}2$ \& $i{=}3$ \& $i{=}4$ \\
  $1$ \& $1$ \& $1$ \& |[pad]| $0$ \& |[pad]| $0$ \\
  |[neu]| $0.8$ \& |[neu]| $0.1$ \& |[neu]| $0.2$ \& |[pad]| $*$ \& |[pad]| $*$ \\
  |[neu]| $0.1$ \& |[neu]| $0.3$ \& |[neu]| $0.7$ \& |[pad]| $*$ \& |[pad]| $*$ \\
};
\node[lab] at (1.56,-1.26) {$\mu$};
\node[lab] at (1.56,-1.76) {$V_a$};
\node[lab] at (1.56,-2.26) {$V_b$};

%% negation 
\node[ttl, text=cA] at (6.09,0.05) {$\neg a$};
\node[op, anchor=center] at (9.21,-0.38) {elementwise $\tneg$};
\matrix (N) [strip] at (7.56,-0.54) {
  |[va]| $0.2$ \& |[va]| $0.9$ \& |[va]| $0.8$ \& |[pad]| $*$ \& |[pad]| $*$ \\
};
\node[lab] at (7.50,-0.79) {$V_{\neg a}$};

%%  eventually
\node[ttl, text=cB] at (6.09,-1.81) {$\F b$};
\node[nab] at (10.20,-2.24) {$\nabla_0$};
\matrix (F) [strip] at (7.56,-2.40) {
  |[vb]| $0.1$ \& |[vb]| $0.3$ \& |[vb]| $0.7$ \& |[fillv]| $0$ \& |[fillv]| $0$ \\
};
\matrix (Fr) [strip] at (7.56,-3.36) {
  |[vb]| $0.7$ \& |[vb]| $0.7$ \& |[vb]| $0.7$ \& |[pad]| $*$ \& |[pad]| $*$ \\
};
\node[lab] at (7.50,-2.65) {$\widetilde V_b$};
\node[lab] at (7.50,-3.61) {$V_{\F b}$};
\draw[ar, cB] (9.21,-2.90) -- (9.21,-3.36);
\node[op] at (9.03,-3.13) {reverse cumulative $\tor$};

%%  disjunction 
\node[ttl, text=cO] at (12.03,0.05) {$a \to \F b \;\equiv\; \neg a \lor \F b$};
\matrix (O) [strip] at (13.50,-0.22) {
  |[va]| $0.2$ \& |[va]| $0.9$ \& |[va]| $0.8$ \& |[pad]| $*$ \& |[pad]| $*$ \\
  |[vb]| $0.7$ \& |[vb]| $0.7$ \& |[vb]| $0.7$ \& |[pad]| $*$ \& |[pad]| $*$ \\
};
\matrix (Or) [strip] at (13.50,-1.68) {
  |[vo]| $0.7$ \& |[vo]| $0.9$ \& |[vo]| $0.8$ \& |[pad]| $*$ \& |[pad]| $*$ \\
};
\node[lab] at (13.44,-0.47) {$V_{\neg a}$};
\node[lab] at (13.44,-0.97) {$V_{\F b}$};
\node[lab] at (13.44,-1.93) {$V_{\neg a \lor \F b}$};
\draw[ar, cO] (15.15,-1.22) -- (15.15,-1.68);
\node[op] at (14.97,-1.45) {elementwise $\tor$};

%%  globally 
\node[ttl, text=cG] at (12.03,-2.95) {$\Phi = \G(\neg a \lor \F b)$};
\node[nab] at (16.14,-3.38) {$\nabla_1$};
\matrix (G) [strip] at (13.50,-3.54) {
  |[vo]| $0.7$ \& |[vo]| $0.9$ \& |[vo]| $0.8$ \& |[fillv]| $1$ \& |[fillv]| $1$ \\
};
\matrix (Gr) [strip] at (13.50,-4.50) {
  |[vg]| $0.7$ \& |[vg]| $0.8$ \& |[vg]| $0.8$ \& |[pad]| $*$ \& |[pad]| $*$ \\
};
\node[lab] at (13.44,-3.79) {$\widetilde V_{\neg a \lor \F b}$};
\node[lab] at (13.44,-4.75) {$V_{\Phi}$};
\draw[ar, cG] (15.15,-4.04) -- (15.15,-4.50);
\node[op] at (14.97,-4.27) {reverse cumulative $\tand$};
\draw[cG, line width=1.2pt, rounded corners=1pt]
      (13.50,-4.50) rectangle (14.16,-5.00);

%%  output 
\draw[ar, cG] (13.83,-5.22) -- (13.83,-5.58);
\node[anchor=north] at (15.0,-5.60) {$v(\Phi,\lambda_\theta)=(V_\Phi)_0=0.7$};

%%  tree edges
\draw[ar, cA] ( 5.14,-1.76) -- (5.46,-1.76) -- (5.46,-0.79) -- (5.94,-0.79);
\draw[ar, cB] ( 5.14,-2.26) -- (5.62,-2.26) -- (5.62,-2.65) -- (5.94,-2.65);
\draw[ar, cA] (11.08,-0.79) -- (11.46,-0.79) -- (11.46,-0.47) -- (11.88,-0.47);
\draw[ar, cB] (11.08,-3.61) -- (11.30,-3.61) -- (11.30,-0.97) -- (11.88,-0.97);
\draw[ar, cO] (15.15,-2.40) -- (15.15,-3.00);

%%  legend
\node[anchor=north west, font=\scriptsize, text=black!70,
      align=left, text width=5.1cm] at (0.00,-3.10)
     {G\"odel semantics: $\alpha\tand\beta=\min\{\alpha,\beta\}$,\quad
      $\alpha\tor\beta=\max\{\alpha,\beta\}$,\quad $\tneg\alpha=1-\alpha$.};

\end{tikzpicture}}
\caption{Computation of $v(\G(a\to \F b),\lambda)$, from Example~\ref{ex:trace_padding}.}
\label{fig:eventually_pipeline}
\end{figure}

% Formally, given the observation sequence $\mathbf{x} = \langle \mathbf{x}_0, \dots, \mathbf{x}_n  \rangle$ with $\mathbf{x}_i = \langle x_i^1, \dots, x_i^{|S|} \rangle$ at instant $i$ specifying the observation in each stream $s$, then the output of the perception module is a fuzzy trace 
% %
% $$ \lambda_\theta = \langle \lambda_0, \dots, \lambda_n \rangle $$
% where for each instant $i\in[0,n]$ and each observation $x^s_i$ in stream $s$: 
% $$ \lambda_i(p^s_c) = f_\theta(x^s_{i})_c \text{ for each } c\in \mathcal{C}$$

% \noindent
% The result is a complete fuzzy trace $\lambda$ as defined in Section~\ref{sec:fuzzyLTL}, where for each instant $i$ each proposition $p\in \mathcal{P}$ is assigned a value in $[0,1]$. 
% Therefore, we are able to evaluate \FLTLF formulae on these fuzzy traces, and compute the error with respect to label $y$ associated to the original input trace.

%\todo[inline]{P:commentato via sezione su proposiz. con endinput}

\section{Evaluation}
\label{sec:evaluation}
%% Labels are used to cross-reference an item using \ref command.

% Outline:
% 6.1 Research Questions
% 6.2 Benchmark Construction
% - multi-stream temporal banchmark (relation with \\LTLZinc)
% - declare patterns
% - image datasets

% 6.3 Experimental configurations
% 6.3.1
% - varying formula complexity
% - varying sequence length

% 6.4 Baselines
% - \FuzzyDFA (explain here the use of \\LTLZinc variant)
% - \NeSyA
% - \TILR (?)
% - Neural Baselines (dire che rimpiazzano il logic module)

% 6.5 Metrics
% - symbol grounding accuracy
% - seq class accuracy
% - logic module time per epoch
% - 10 independent runs, test statistico

In this section, we evaluate \DiffLTLf on a temporal neurosymbolic learning task involving multi-stream perception traces and complex temporal dependencies. We compare our approach against two state-of-the-art approaches based on deterministic finite-state automata (DFA), introduced in Section~\ref{sec:conceptual}, namely \FuzzyDFA~\cite{DBLP:conf/kr/UmiliCG23} and \NeSyA~\cite{ DBLP:conf/ijcai/ManginasPR25}.
Our evaluation focuses on two complementary aspects: (i) performance in terms of sequence-level classification accuracy, and symbol grounding quality under weak supervision; (ii) computational scalability with respect to increasing formula complexity and sequence lengths.

%We do not compare only \DiffLTLf with the SOTA approaches but also compare the different proposed sematincs within DiffLTLF, should mention above
To structure the experimental analysis, we will address the following research questions:

\begin{enumerate}[noitemsep]
    \item \textbf{RQ1: Impact of fuzzy temporal semantics.} 
    How do different fuzzy \LTLF semantics used in \DiffLTLf affect the sequence classification accuracy and symbol grounding performance of the proposed approach?

    \item \textbf{RQ2: Comparison with state of the art.}
    How does the proposed approach compare against existing state-of-the-art temporal neurosymbolic integration methods in terms of classification accuracy and symbol grounding performance?

    \item \textbf{RQ3: Scalability.}
    How do different temporal NeSy approaches scale in terms of runtime complexity as formula complexity and  sequence length increase?
\end{enumerate}

\textbf{RQ1} aims to determine the impact of the different fuzzy semantics introduced in Section~\ref{sec:fuzzyLTL} within the \DiffLTLf framework on classification performance at both the sequence level and the symbol grounding level. While comparisons of fuzzy semantics have previously been conducted in non-temporal domains~\cite{DBLP:journals/ai/KriekenAH22}, their impact in temporal settings remains largely unexplored.

For \textbf{RQ2}, we compare the best-performing fuzzy semantics with state-of-the-art temporal neurosymbolic approaches in terms of classification accuracy at both the sequence level and the individual symbol grounding level. To this end, we vary both the complexity of the formulas and the length of the input sequences. This allows us to assess how the different methods perform not only in terms of predictive accuracy but also in their ability to scale with increasing formula complexity and input length.

For \textbf{RQ3}, similarly to \textbf{RQ2}, we evaluate how the proposed method compares with state-of-the-art approaches in terms of computational cost during training when increasing sequence length and formula complexity. Specifically, we measure the runtime of the logic modules per training epoch for each technique.

\subsection{Temporal Multi-Stream Benchmark}
\label{sec:benchmark}
We evaluate \DiffLTLf on a temporal multi-stream benchmark, which represents a specific instantiation of the problem defined in Section~\ref{sec:diffLTLf}. 
Our evaluation builds on the \LTLZinc benchmarking framework~\cite{ltlzincIJCAI}, which generates temporal sequence classification tasks from \LTLF specifications and static relational constraints. 
We introduce three deliberate modifications that adapt the framework to our evaluation objectives.

First, we remove all relational constraints from the specification. \LTLZinc tasks interleave temporal operators with relational predicates, resulting from arithmetic comparisons or global constraints (e.g. \texttt{all\_different}).
While this design is appropriate for evaluating the joint effect of relational and temporal reasoning, it may confound the two distinct reasoning capabilities. A method that fails on such a task may do so because of flawed temporal reasoning, flawed relational reasoning, or both. The evaluation cannot distinguish between the two cases.
By restricting our specifications to purely temporal formulas, we isolate a model's temporal reasoning capabilities and obtain a more sound evaluation.
Second, we increase the complexity of the temporal specifications beyond the level considered in prior work.
Third, we adopt a strictly weaker form of supervision. The \LTLZinc benchmark provides annotations at multiple levels: ground-truth image labels at every timestep, relational constraint satisfaction, automaton states, in addition to sequence-level satisfaction label.
Accordingly, the training pipeline proposed by \LTLZinc employs four separate loss terms that supervise image classification, relational constraint classification, next-state prediction, and sequence classification independently. In addition, it employs a pre-training phase with direct supervision on the image labels to ensure convergence~\cite{ltlzincIJCAI}.

In contrast, our setting provides only the binary satisfaction label of the temporal formula as supervision. In other words, the neurosymbolic system must solve the symbol grounding problem entirely from this sequence-level supervisory signal, without access to intermediate annotations.
This choice reflects a more realistic scenario, since in practical applications ground-truth symbolic traces, automaton states, and constraint labels are typically unavailable. 
Moreover, supervision on automaton states is intrinsically DFA-based and it is not applicable to methods such as \DiffLTLf that evaluate temporal formulas directly. As a result, \LTLZinc's evaluation protocol restricts fair comparison to DFA-based approaches only.

% Unlike previous benchmarks that rely on supervision in terms of both intermediate relational constraints within temporal specifications  and further supervision at the symbol grounding level~\cite{ltlzincIJCAI}, our porposed setting requires the neurosymbolic system to reason directly over temporal properties of the raw perceptual streams, resolving the symbol grounding problem without intermediate supervision. This, coupled with the fact of having multiple input streams, rather than a single one~\cite{DBLP:conf/kr/UmiliCG23}, greatly increases the complexity of the learning task.
% \todo{capire come concludere frase e aggiungere che noi, a diff di \\LTLZinc, non diamo supervisione intermedia}.

In our proposed benchmark, the input sequences consist of $S=3$ parallel streams, denoted as $(X, Y, Z)$, each receiving a sequence of images, as in \LTLZinc~\cite{ltlzincIJCAI}.
The temporal specifications used in our experiments are built from the \LTLF templates introduced in Section~\ref{sec:background} and listed in Table~\ref{tab:declare}, which have been previously adopted within the experimental setting of temporal NeSy architectures~\cite{DBLP:conf/kr/UmiliCG23,DBLP:conf/nesy/AndreoniBDGMR25,DBLP:conf/ecai/UmiliC24}.
Based on these eight templates, we define two separate sets of formulas in order to distinguish between different classes of temporal dependencies and to assess their impact independently. In particular, the partition separates constraints that primarily capture forward-looking (response-type) relations from those that enforce backward-looking or co-occurrence (precedence-type) relations, which differ in both semantics and verification complexity.

The first set, Set~1, combines the first four patterns from Table~\ref{tab:declare}, namely Response, Not Succession, Chain Succession, and Not Chain Succession templates. Conversely, Set~2 combines the last four templates: Precedence, Chain Precedence, Responded Existence, and Not Co-existence.

Each formula set is paired with two image datasets, which are also used within the \LTLZinc benchmark~\cite{ltlzincIJCAI}: MNIST \cite{lecunmnist} and Fashion-MNIST~\cite{fashionmnist}, resulting in four experimental cases in total.

\subsubsection{Varying Formula Complexity}
\label{sec:formula_complexity}
In this evaluation direction, the temporal specification $\Phi$ is defined as a conjunction of $K$ independent temporal sub-formulas, $\Phi = \bigwedge_{k=1}^K\varphi_k$%~\footnote{Independent sub-formulas refers to the fact that there is no overlap within the atoms of each of the sub-formulas.}
, where each sub-formula $\varphi_k$ is an instantiation of a template from Table~\ref{tab:declare}. % and predicates on a disjoint subset of classes and streams.
% introduces two distinct classes, not shared with any other sub-formula
% Each sub-formula $\varphi_k$ predicates on a disjoint subset of classes and streams.
Since each sub-formula introduces two additional classes, the total number of classes in the temporal specification grows as $|\mathcal{C}| = 2K$. By varying $K \in \{1,2,3,4\}$, we increase the complexity of the temporal specification and, as a consequence, of the learning task.

%We define two formula sets. Set~1 combines Response, Not Succession, Chain Succession, and Not Chain Succession templates. Set~2 combines Precedence, Chain Precedence, Responded Existence, and Not Co-existence templates.

% Each formula set is associated with two image dataset, MNIST \cite{lecunmnist} and Fashion-MNIST~\cite{fashionmnist} resulting in four experimental settings in total.
The dataset for each setting is composed of sequences of varying lengths, sampled uniformly from the integer interval $[2, 10]$.
Sequence length is deliberately kept short and fixed in distribution across all values of $K$, so that variation in performance is attributable to formula complexity alone, and not confounded by sequence length,
%whose effect is studied in Section \ref{sec:varying_length}.
whose effect is examined using the setting described in Section \ref{sec:varying_length}.

% To give an example of this, Figure~\ref{fig:pipeline} illustrates the task on a Set~1 instance with $K=2$: three streams of MNIST images are processed by the perception module $f_\theta$ and evaluated against the temporal specification $\Phi$, with only the binary satisfaction label $y$ as supervision. The temporal specification in this case, since $K =2 $, will be composed of a conjunction of the Response and Not Succession patterns.
As an example, for Set 1 with $K=2$ the specification is $\Phi=\always (p^X_0 \to \eventually p^Y_1) \land G((p^Y_2 \lor p^Z_3)\to \neg\eventually  p^X_3)$, which conjoins the Response and Not Succession patterns.

% By using the different templates in Table~\ref{tab:declare}, we can evaluate \DiffLTLf not only by increasing the complexity of the temporal specifications, but also in terms of determining how different patterns, which express different temporal relations (and different time horizons, such as $\eventually$ or $\tomorrow$) impact on the performance of the framework. 
% Furthermore, as we will see next, we also aim to test how increasing the length of the input sequences affects the performance of the learned models (cf. Section~\ref{sec:varying_length}).

%%%%%%%%%%%%%%%%%%%%%%%%%%%%%%%%%%%%%%
\subsubsection{Varying Sequence Length}
\label{sec:varying_length}
% In this evaluation direction, the temporal specification is fixed at the highest complexity level, involving $|\mathcal{C}| = 8$ classes distributed across the three streams.
% The specification used here has the form $\Phi = (\varphi_1 \land \varphi_2 ) \lor (\varphi_3 \land \varphi_4)$, where each $\varphi_k$ is an instantiation of the Precedence pattern.
% The image dataset is MNIST \cite{lecunmnist}. The sequence length varies from 20 to 200 in steps of 20. Unlike the previous setting, all sequences within each configuration share the same fixed length, so as to isolate the effect of sequence length on learning performance.
In this evaluation direction, the temporal specification is fixed, and the sequence length varies from 20 to 200 in steps of 20. Unlike the previous setting, all sequences within each configuration share the same fixed length, so as to isolate the effect of sequence length on learning performance. The image dataset is MNIST~\cite{lecunmnist}.

We consider two specifications of different complexity, based on either the Response or Precedence template from Table~\ref{tab:declare}. 
The first involves $|\mathcal{C}| = 4$ classes and is defined as a conjunction of two Response sub-formulas. The second involves $|\mathcal{C}| = 8$ classes and takes the form $\Phi = (\varphi_1 \wedge \varphi_2) \vee (\varphi_3 \wedge \varphi_4)$, where each $\varphi_k$ is an instantiation of the Precedence pattern. 
\footnote{
When traces are sampled uniformly at random, the fraction of positive traces in the generated dataset generally depends on the trace length, and for most templates in Table~\ref{tab:declare} it degenerates towards $0$ or $1$ as the length grows.
The Response and Precedence templates do not show this behaviour, as their satisfaction probability converges to a constant as the trace length increases.
The way the sub-formulas are combined is chosen for the same reason, that is, to obtain datasets that are as balanced as possible.
}
Using two specifications allows us to observe whether the effect of increasing sequence length depends also on the complexity of the temporal specification itself.

\subsection{Evaluation Metrics}
We use two accuracy metrics, both computed on the held-out test set. 
Symbol grounding accuracy is the fraction of correctly classified images. Each observation $x_i^s$ is assigned to the class $\arg \max_c f_\theta (x_i^s)_c$ and compared against its ground-truth class, and averaged over all observations, streams, and sequences.
Sequence classification accuracy is the fraction of sequences whose predicted satisfaction matches the label $y$. A sequence is predicted to satisfy $\Phi$ when $v(\Phi, \lambda_\theta) \geq 0.5$.
The per-image ground-truth labels serve only to evaluate symbol grounding performance and are never used during training, consistently with the weakly supervised setting of Section \ref{sec:task_overview}.
Finally, computational cost is measured as the logic-module time per training epoch.

                \section{Results}
\label{sec:results}
%% Labels are used to cross-reference an item using \ref command.

% Struttura:
% - preambolo (poche frasi) per richiamare le RQs, dire che i risultati sono la media di 10 runs, il test statistico usato, e gli hyperparams
% - parte 1 (impatto della semantica fuzzy) RQ1
%     - confronto di goedel, prod, luk, sugli esperimenti formula-complessità
%     - risultato: product è la più consistent. le altre ottengono ottimi risultati con 2 classi ma degradano con l'aumentare della complessità della formula e hanno molta varianza (minimi locali). product è stabile
%     - conclusione: selezioniamo product come semantica rappresentativa per RQ2 e RQ3
% - parte 2 (confronto con lo stato dell'arte) RQ2-3
%     - mostriamo che \DiffLTLf (product) ha pari o superiore acc dei competitors
%     - mostriamo che è molto più veloce
% - limiti:
%     - benchmark sintetico
%     - dimensione del dataset (ma giustificabile dicendo che i metodi dfa-based hanno il limite di non poter trattare dataset troppo vasti, per problemi di scalabilità)
%     - abbiamo testato solo un tipo di perception (CNN). In realtà non è un grosso limite dato che stiamo confrontando il modulo logico

This section presents the experimental results structured around the three research questions introduced in Section \ref{sec:evaluation}.
\textbf{RQ1} (Section \ref{sec:results_rq1}) investigates the impact of the fuzzy semantics on the sequence classification and on the symbol grounding accuracy performance within \DiffLTLf. %Next,
\textbf{RQ2} (Section \ref{sec:results_rq2}) compares the best-performing \DiffLTLf semantic configuration against state-of-the-art temporal neurosymbolic methods in terms of classification accuracy at both the sequence and symbol grounding level.
\textbf{RQ3} (Section \ref{sec:results_rq3}) evaluates the computational scalability of the compared methods.
The three subsections share the same structure: each one first provides a direct answer to the corresponding research question, supported by the experimental evidence, and then discusses the results in greater depth, providing additional insights.
All results are averaged over 10 independent runs. As all methods share seeds, splits, and sampled sequences, differing only in the logic module, statistical significance is assessed 
with a paired one-sided Wilcoxon signed-rank test. 
Throughout, results marked in bold denote a method that is significantly better than all
competitors at $p<0.05$.
The complete experimental setup, including perception architecture, optimiser, learning rate, batch size, number of training epochs, dataset construction, and hardware, is identical across all compared methods and is reported in \ref{app:exp_setup}.

\subsection{RQ1: Impact of Fuzzy Temporal Semantics}
\label{sec:results_rq1}

\begin{figure}[t]
  \centering
  \includegraphics[width=\textwidth]{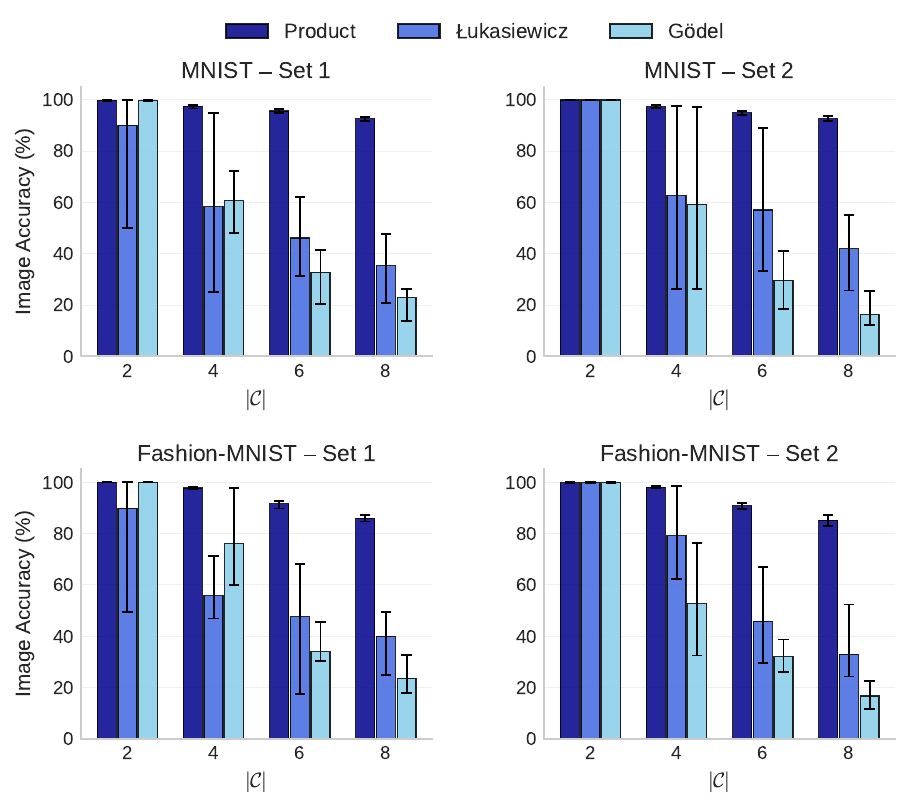}
  \caption{Symbol grounding accuracy (\%) across 
  10 independent runs for \DiffLTLf instantiated with the three fuzzy semantics: Product (\FLTLF[P]), Łukasiewicz (\FLTLF[L]), and Gödel 
  (\FLTLF[G]). 
  Each subplot corresponds to one experimental setting (formula 
  set $\times$ image dataset), with the number of classes $|\mathcal{C}|$ on the horizontal axis.}
  \label{fig:acc_semantics}
\end{figure}

We evaluate the effect of the underlying fuzzy semantics on the classification performance of \DiffLTLf. 
Specifically, we compare the three instantiations of \FLTLF introduced in Section \ref{sec:fuzzyLTL}: Gödel (\FLTLF[G]), Product (\FLTLF[P]), and Łukasiewicz (\FLTLF[L]). 
% The comparison is conducted on the formula-complexity benchmark described in Section \ref{sec:evaluation_formula}, 
% across all four experimental settings (formulas set 1 and 2, on both MNIST and Fashion-MNIST datasets), with the number of conjuncted sub-formulas ranging from $K=1$ to $K=4$, and correspondingly the number of classes $|\mathcal{C}| = 2K$ ranging from $2$ to $8$.
The comparison is conducted across all four experimental settings of Section~\ref{sec:formula_complexity}, with the number of classes $|\mathcal{C}|$, equal to twice the number of conjoined sub-formulas, ranging from $2$ to $8$.
Figure \ref{fig:acc_semantics} shows the symbol grounding accuracy attained by each semantics at each value of $|\mathcal{C}|$.
% across runs for each semantics and each value of $|\mathcal{C}|$. 
Table \ref{tab:semantics_summary} reports the average of the symbol grounding and the sequence classification accuracy over the four settings.

\begin{table}[t]
\centering
\footnotesize
\caption{Symbol grounding and sequence classification accuracy (\%) of \DiffLTLf under the three
fuzzy semantics, averaged over the four experimental settings and 10 runs. Results in \textbf{bold} are significantly better than both other semantics (Wilcoxon, $p<0.05$).}
\label{tab:semantics_summary}
\begin{tabular}{l cccc cccc}
\toprule
& \multicolumn{4}{c}{\textbf{Img. Acc.} (\%) $\uparrow$}
& \multicolumn{4}{c}{\textbf{Seq. Acc.} (\%) $\uparrow$} \\
\cmidrule(lr){2-5} \cmidrule(lr){6-9}
$|\mathcal{C}|$ & 2 & 4 & 6 & 8 & 2 & 4 & 6 & 8 \\
\midrule
\FLTLF[P] & 99.9 & \textbf{97.7} & \textbf{93.3} & \textbf{89.2} & 99.9 & \textbf{98.9} & \textbf{97.0} & \textbf{96.0} \\
\FLTLF[G] & 99.9 & 62.2 & 32.1 & 19.9 & 99.9 & 90.0 & 88.6 & 86.2 \\
\FLTLF[L] & 94.9 & 64.0 & 49.2 & 37.6 & 96.6 & 90.6 & 89.3 & 87.7 \\
\bottomrule
\end{tabular}
\end{table}

\subsubsection{Answer to RQ1}
\label{sec:answer_rq1}

The choice of the fuzzy semantics has a substantial impact on the accuracy performance of \DiffLTLf, and Product is the only semantics that remains robust as the complexity of the temporal specification grows.

\FLTLF[P] maintains consistently high symbol grounding accuracy, with low variance, across all values of $|\mathcal{C}|$ and across all settings. Its average accuracy remains above $92\%$ and $85\%$ for the MNIST and Fashion-MNIST datasets, respectively, even at $|\mathcal{C}|=8$. The low variance throughout the 10 runs indicates that convergence to a correct grounding does not depend on the initialisation.

Gödel and Łukasiewicz semantics, in contrast, are competitive only on the simplest specifications. At $|\mathcal{C}|=2$, where the temporal specification involves only two classes, all three semantics achieve comparable performances, with average grounding accuracy above $99\%$ in almost all settings. Differences emerge as the formula complexity increases, as both Gödel and Łukasiewicz exhibit a significant increase in variance. For $|\mathcal{C}|=4$, while these two semantics occasionally reach near-optimal levels of performance, many runs converge to lower values. For $|\mathcal{C}|\geq 6$, the degradation of performance becomes more severe.
This pattern suggests sensitivity to local optima during training.
Sequence classification accuracy follows the same ranking, but with much smaller differences (Table \ref{tab:semantics_summary}), as sequence classification is an easier task than symbol grounding.

Based on these findings, all subsequent experiments adopt \FLTLF[P] Product semantics as the default configuration of \DiffLTLf.
Having answered RQ1, the remainder of this subsection discusses these results in greater depth and provides further insights.

% \begin{table}[t]
% \centering
% \caption{}
% \label{tab:semantics_summary}
% \begin{tabular}{l cccc cccc}
% \toprule
% & \multicolumn{4}{c}{\textbf{Img. Acc.} (\%) $\uparrow$}
% & \multicolumn{4}{c}{\textbf{Seq. Acc.} (\%) $\uparrow$} \\
% \cmidrule(lr){2-5} \cmidrule(lr){6-9}
% $|\mathcal{C}|$ & 2 & 4 & 6 & 8 & 2 & 4 & 6 & 8 \\
% \midrule
% \FLTLF[G] & 99.9 & 62.2 & 32.1 & 19.9 & 99.9 & 90.0 & 88.6 & 86.2 \\
% \FLTLF[P] & 99.9 & \textbf{97.7} & \textbf{93.3} & \textbf{89.2} & 99.9 & \textbf{98.9} & \textbf{97.0} & \textbf{96.0} \\
% \FLTLF[L] & 94.9 & 64.0 & 49.2 & 37.6 & 96.6 & 90.6 & 89.3 & 87.7 \\
% \bottomrule
% \end{tabular}
% \end{table}

\subsubsection{Discussion on RQ1}
\label{sec:discussion_rq1}

The pattern observed above is consistent with prior findings in non-temporal neurosymbolic settings, where Product logic has been shown to provide more informative gradients than Gödel and Łukasiewicz for constraint satisfaction tasks \cite{DBLP:journals/ai/KriekenAH22}. The present results extend this observation to the temporal domain and can be explained by analysing the gradient properties of the underlying operators. 
In our setting, the temporal operators $\always$ and $\eventually$ reduce to iterated applications of the t-norm and t-conorm respectively, over the trace length.

Under \textbf{Product semantics}, the t-norm $\alpha \tand \beta = \alpha \cdot \beta$ has partial derivative $\beta$ with respect to $\alpha$, and the t-conorm $\alpha \oplus \beta = \alpha + \beta - \alpha \cdot \beta$ has partial derivative $1 - \beta$.
Both are non-vanishing on almost the entire input domain. As a result, when $\always$ is evaluated over a trace of length $n$, 
% every input at each timestep 
every timestep receives a non-zero gradient, enabling the perception module to update all its predictions in each training step.

Under \textbf{Gödel semantics}, the $\min$ and $\max$ operators are single-passing.
% At most one of their arguments receives a non-zero gradient.
As shown in~\cite{DBLP:journals/ai/KriekenAH22}, any composition of single-passing functions is itself single-passing. 
As a consequence, the entire computation graph derived from a temporal formula inherits this property.
Concretely, given a sequence of $n$ observations, exactly one receives a non-zero gradient. For this reason, convergence becomes sensitive to which single observation happens to receive the learning signal.
This makes learning increasingly hard as the formula's complexity grows,
% \todo[inline]{Ale: Perché? Secondo il ragionamento, dovrebbe cambiare rispetto alla lunghezza delle sequenze, non rispetto al numero di elementi nell'alfabeto. O si intende qui rispetto alla complessità della formula? Cioè, formula più lunga, dunque maggiore tempo alla convergenza. A proposito: qui la dimensione delle sequenze sono variabili.. siamo sicuri che non influiscano sulla media dei valori}
 explaining the performance degradation and high variance observed in Figure~\ref{fig:acc_semantics}.

Under \textbf{Łukasiewicz semantics}, the t-norm $\alpha \otimes \beta = \max(\alpha + \beta - 1, 0)$ has gradient $1$ when $\alpha + \beta > 1$ and $0$ otherwise.
For the $n$-ary case, which corresponds to the evaluation of $\always$ over a trace of length $n$, van Krieken et al.~\cite{DBLP:journals/ai/KriekenAH22} show that the gradient is non-vanishing only when the mean truth value of the $n$ arguments is greater than $(n-1)/n$, and that the fraction of the input space satisfying this condition is $1/n!$. 
This condition is particularly difficult to satisfy, especially during early training, when the perception module produces output with intermediate truth values.
The t-conorm $\min(\alpha + \beta, 1)$ shows a symmetrical issue, with its gradient vanishing when the sum of its arguments exceeds $1$. 
As $|\mathcal{C}|$ increases, the outer conjunction further stresses the problem. This explains the performance degradation at higher $|\mathcal{C}|$ (Figure~\ref{fig:acc_semantics}).

\subsection{RQ2: Comparison with State of the Art}
\label{sec:results_rq2}

Following the outcome of RQ1, we instantiate \DiffLTLf with Product semantics and compare it
against two state-of-the-art temporal neurosymbolic methods: \FuzzyDFA\footnote{We use the \FuzzyDFA implementation from the \LTLZinc benchmark \cite{ltlzincIJCAI}, which deviates from the original in state normalisation and training loss; see \ref{app1} for a precise description.} \cite{DBLP:conf/kr/UmiliCG23} and \NeSyA \cite{DBLP:conf/ijcai/ManginasPR25}.
The comparison is conducted along two complementary directions: increasing the formula complexity (Table~\ref{tab:formula_complexity}) and increasing the sequence length (Table~\ref{tab:seq_length}).
Both tables also report the logic module time per epoch, which is discussed separately in Section \ref{sec:results_rq3}.
Figure~\ref{fig:formula_complexity} summarises the formula-complexity results averaged over the four settings. It additionally includes four further methods: \TILR \cite{DBLP:conf/nesy/AndreoniBDGMR25}, the original \FuzzyDFA implementation, and two purely neural baselines, obtained by replacing the logic module with a GRU \cite{DBLP:conf/emnlp/ChoMGBBSB14} and with a Transformer encoder \cite{DBLP:conf/nips/VaswaniSPUJGKP17}, respectively.
% Additionally, we include two purely neural baselines, obtained by replacing the logic module with a GRU \cite{DBLP:conf/emnlp/ChoMGBBSB14} and with a Transformer encoder \cite{DBLP:conf/nips/VaswaniSPUJGKP17}, respectively. Their complete numerical results are reported in \ref{app:full_results}.
All of them fall substantially behind the three neurosymbolic methods compared above, increasingly so as the specification becomes more complex. For this reason, we do not include them in the tables of this section, nor in the sequence-length comparison, and we provide their complete numerical results in \ref{app:full_results}.

\begin{table*}[ht]
\centering
\caption{Performance comparison varying formula complexity. Results are averaged over 10 independent runs.
\textbf{Img. Acc}: Symbol Grounding Accuracy; 
\textbf{Seq. Acc}: Sequence Classification Accuracy; 
\textbf{Time}: Logic Module Time per Epoch. 
Results in \textbf{bold} are statistically significantly better than all other methods (Wilcoxon, $p<0.05$).}
\resizebox{\textwidth}{!}{
\begin{tabular}{lc ccc ccc ccc}
\toprule
& & \multicolumn{3}{c}{\textbf{Img. Acc.} ($\%$) $\uparrow$} & \multicolumn{3}{c}{\textbf{Seq. Acc.} ($\%$) $\uparrow$} & \multicolumn{3}{c}{\textbf{Time} (s) $\downarrow$} \\
\cmidrule(lr){3-5} \cmidrule(lr){6-8} \cmidrule(lr){9-11}
Setting & $|\mathcal{C}|$ & \DiffLTLf & \FuzzyDFA & \NeSyA & \DiffLTLf & \FuzzyDFA & \NeSyA & \DiffLTLf & \FuzzyDFA & \NeSyA \\
\midrule
\multirow{4}{*}{\shortstack[l]{MNIST\\Set1}} & 2 & 99.9 & 99.9 & 99.9 & 99.8 & 99.8 & 99.8 & \textbf{0.04} & 0.17 & 0.17 \\
 & 4 & 97.4 & 97.4 & 97.4 & 98.9 & 98.9 & 99.0 & \textbf{0.05} & 0.33 & 0.40 \\
 & 6 & 95.7 & 95.6 & 91.3 & 98.3 & 98.3 & 97.6 & \textbf{0.05} & 0.63 & 0.99 \\
 & 8 & 92.7 & 91.5 & 88.4 & 96.7 & 96.5 & 95.5 & \textbf{0.05} & 1.97 & 5.08 \\
\midrule
\multirow{4}{*}{\shortstack[l]{MNIST\\Set2}} & 2 & 99.9 & 99.9 & 99.9 & 99.8 & 99.9 & 99.9 & \textbf{0.05} & 0.17 & 0.15 \\
 & 4 & 97.4 & 97.3 & 97.3 & 99.0 & 99.1 & 99.0 & \textbf{0.06} & 0.34 & 0.37 \\
 & 6 & 95.0 & 95.0 & 95.0 & 98.1 & 98.2 & 98.1 & \textbf{0.06} & 0.69 & 0.88 \\
 & 8 & 92.7 & 92.7 & 92.7 & 98.0 & 98.0 & 97.9 & \textbf{0.07} & 4.86 & 10.38 \\
\midrule
\multirow{4}{*}{\shortstack[l]{F-MNIST\\Set1}} & 2 & 100.0 & 100.0 & 100.0 & 100.0 & 100.0 & 100.0 & \textbf{0.04} & 0.14 & 0.30 \\
 & 4 & 97.8 & 97.9 & 97.9 & 98.7 & 98.7 & \textbf{98.8} & \textbf{0.05} & 0.35 & 0.59 \\
 & 6 & 91.7 & 91.7 & \textbf{92.0} & 95.5 & 95.3 & 95.7 & \textbf{0.05} & 0.69 & 1.27 \\
 & 8 & 86.0 & 85.6 & 83.9 & 93.2 & 93.2 & 92.8 & \textbf{0.05} & 2.24 & 5.42 \\
\midrule
\multirow{4}{*}{\shortstack[l]{F-MNIST\\Set2}} & 2 & 99.9 & 99.9 & 99.9 & 99.9 & 99.9 & 99.9 & \textbf{0.05} & 0.14 & 0.14 \\
 & 4 & 98.0 & 98.0 & 98.0 & 99.0 & 98.9 & 99.0 & \textbf{0.05} & 0.25 & 0.34 \\
 & 6 & 90.9 & 91.1 & 91.0 & 96.2 & 96.3 & 96.1 & \textbf{0.06} & 0.71 & 0.87 \\
 & 8 & 85.1 & 85.6 & 85.4 & 96.2 & 96.2 & 96.4 & \textbf{0.07} & 4.83 & 10.49 \\
\bottomrule
\end{tabular}}
\label{tab:formula_complexity}
\end{table*}

\subsubsection{Answer to RQ2}
\label{sec:answer_rq2}

\begin{figure}[t]
  \centering
  \includegraphics[width=\textwidth]{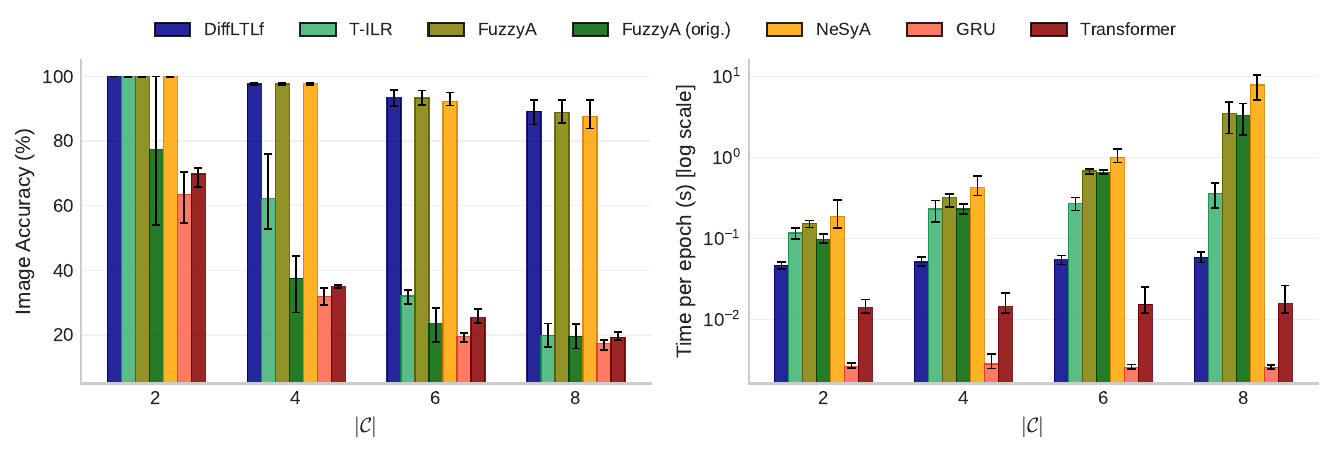}
  \caption{Comparison of symbol grounding accuracy (left) and logic module time per epoch (right) across number of classes $|\mathcal{C}| \in \{2,4,6,8\}$. Values are averaged across all four experimental settings (formula set $\times$ image dataset).}
  
  \label{fig:formula_complexity}
\end{figure}

We answer RQ2 by discussing the two evaluation directions.

% spostata in fondo, alla fine di 7.2.1
% \DiffLTLf attains classification performance on par with, and in the most demanding configurations superior to, both DFA-based competitors, at the symbol grounding as well as at the sequence classification level.
%
% rimossa
% On the most complex specification ($|\mathcal{C}|=8$) with long sequences (Table~\ref{tab:seq_length}, the accuracy of \DiffLTLf varies by less than one percentage point over the entire range of lengths, whereas both competitors degrade steadily.
%
% tolgo, già detto in Varying formula complexity
% The two purely neural baselines never learn a valid symbol grounding in any configuration.

% cancellata (ripete cose che già diciamo sotto)
% On the less demanding configurations, no method is consistently superior.
% Across the formula-complexity setting and the $|\mathcal{C}|=4$ specification of the sequence-length setting, the three neurosymbolic methods are statistically indistinguishable in most configurations. 
%
% spostata in Varying formula complexity
% \NeSyA is the only method that attains a statistically significant advantage over both competitors in a few isolated cases, although with always less than half a percentage point in difference.
%
% spostata in Varying formula complexity
% Finally, the absolute accuracy of all three neurosymbolic methods decreases as $|\mathcal{C}|$ grows, indicating that the increase in specification complexity makes the weakly supervised grounding problem harder for every method considered.

\paragraph{Varying formula complexity} 
Table~\ref{tab:formula_complexity} and Figure~\ref{fig:formula_complexity} report classification performance as the number of classes $|\mathcal{C}|$ increases from $2$ to $8$. 
At $|\mathcal{C}|=2$, all three neurosymbolic methods achieve near-perfect grounding accuracy across all settings, with almost no statistically significant differences. As the symbolic complexity grows, the three methods remain competitive with no single method consistently superior across all configurations.
The only exceptions are two isolated configurations in which \NeSyA attains a statistically significant advantage over both competitors, although with always less than half a percentage point in difference.
% \NeSyA \todo{Ale: questa parte forse è meglio spostarla dopo, visto che non è riferita al caso $|\mathcal{C}|=2$.} is the only method that attains a statistically significant advantage over both competitors in a few isolated cases, although with always less than half a percentage point in difference.
The absolute accuracy of all three neurosymbolic methods decreases as $|\mathcal{C}|$ grows, indicating that the increase in specification complexity makes the weakly supervised grounding problem harder for every method considered.
% The two neural baselines confirm the necessity of explicit temporal logic rule injection, as they prove to be inadequate for this weakly-supervised learning task.
The two purely neural baselines fail to learn a valid symbol grounding, as their grounding accuracy collapses towards random-guess performance as $|\mathcal{C}|$ grows (Figure~\ref{fig:formula_complexity}, left), confirming that explicit temporal logic injection is necessary for this weakly supervised task.

\begin{table*}[ht]
\centering
\caption{Performance comparison varying sequence length. 
Results are averaged over 10 independent runs.
\textbf{Img. Acc}: Symbol Grounding Accuracy; 
\textbf{Seq. Acc}: Sequence Classification Accuracy; 
\textbf{Time}: Logic Module Time per Epoch. 
Results in \textbf{bold} are statistically significantly 
better than all other methods (Wilcoxon, $p<0.05$).}
\resizebox{\textwidth}{!}{
\begin{tabular}{cc ccc ccc ccc}
\toprule
& & \multicolumn{3}{c}{\textbf{Img. Acc.} ($\%$) $\uparrow$} 
& \multicolumn{3}{c}{\textbf{Seq. Acc.} ($\%$) $\uparrow$} 
& \multicolumn{3}{c}{\textbf{Time} (s) $\downarrow$} \\
\cmidrule(lr){3-5} \cmidrule(lr){6-8} \cmidrule(lr){9-11}
$|\mathcal{C}|$ & Len & \DiffLTLf & \FuzzyDFA & \NeSyA 
& \DiffLTLf & \FuzzyDFA & \NeSyA 
& \DiffLTLf & \FuzzyDFA & \NeSyA \\
\midrule
\multirow{10}{*}{4}
 & 20  & 97.9 & 97.9 & \textbf{98.0} & 97.7 & 97.8 & 97.8 & \textbf{0.007} & 0.41 & 0.57 \\
 & 40  & 97.9 & 98.0 & 98.0 & 98.0 & 98.1 & 98.1 & \textbf{0.007} & 0.74 & 1.14 \\
 & 60  & 97.9 & 98.1 & \textbf{98.1} & 97.2 & 97.7 & 97.6 & \textbf{0.007} & 1.09 & 1.74 \\
 & 80  & 98.0 & 97.9 & 97.9 & 98.0 & 98.2 & 98.2 & \textbf{0.007} & 1.46 & 2.33 \\
 & 100 & 98.0 & 98.0 & \textbf{98.1} & 98.1 & 98.0 & 98.0 & \textbf{0.008} & 1.83 & 2.91 \\
 & 120 & 97.9 & 97.9 & 97.9 & 98.0 & 98.2 & 98.0 & \textbf{0.008} & 2.21 & 3.53 \\
 & 140 & 98.1 & 98.1 & 98.1 & 98.9 & 99.0 & 99.0 & \textbf{0.008} & 2.59 & 4.12 \\
 & 160 & 97.7 & 97.7 & \textbf{97.8} & 97.3 & 97.2 & 97.3 & \textbf{0.008} & 2.96 & 4.74 \\
 & 180 & 97.9 & 98.0 & \textbf{98.0} & 98.4 & 98.5 & 98.5 & \textbf{0.008} & 3.31 & 5.34 \\
 & 200 & 97.7 & 97.8 & 97.8 & 97.6 & 97.5 & 97.6 & \textbf{0.008} & 3.69 & 5.89 \\
\midrule
\multirow{10}{*}{8}
 & 20  & 95.8 & 95.3 & 95.8 & 95.3 & 94.0 & 95.5 & \textbf{0.16} & 2.32 & 3.43 \\
 & 40  & 96.0 & 95.6 & 96.0 & 94.5 & 92.8 & 94.7 & \textbf{0.30} & 4.73 & 6.90 \\
 & 60  & \textbf{95.6} & 94.7 & 95.4 & \textbf{94.7} & 92.2 & 94.1 & \textbf{0.44} & 7.12 & 10.45 \\
 & 80  & \textbf{95.7} & 94.6 & 95.4 & 95.0 & 93.5 & 94.2 & \textbf{0.62} & 9.56 & 13.93 \\
 & 100 & 95.6 & 94.2 & 95.6 & 94.5 & 90.1 & 94.2 & \textbf{0.79} & 11.91 & 17.39 \\
 & 120 & \textbf{95.5} & 93.6 & 94.7 & \textbf{93.0} & 90.0 & 91.8 & \textbf{0.95} & 14.35 & 21.00 \\
 & 140 & \textbf{96.1} & 93.7 & 95.5 & \textbf{93.5} & 88.9 & 91.8 & \textbf{1.11} & 16.72 & 24.64 \\
 & 160 & \textbf{95.8} & 94.2 & 95.3 & 94.0 & 90.5 & 92.9 & \textbf{1.25} & 19.09 & 27.87 \\
 & 180 & \textbf{95.8} & 88.3 & 95.3 & \textbf{93.7} & 83.2 & 91.8 & \textbf{1.42} & 21.58 & 29.88 \\
 & 200 & \textbf{95.6} & 92.2 & 95.0 & \textbf{94.7} & 86.3 & 92.9 & \textbf{1.57} & 23.84 & 32.58 \\
\bottomrule
\end{tabular}}
\label{tab:seq_length}
\end{table*}

\begin{figure}[t]
  \centering
  \includegraphics[width=\textwidth]{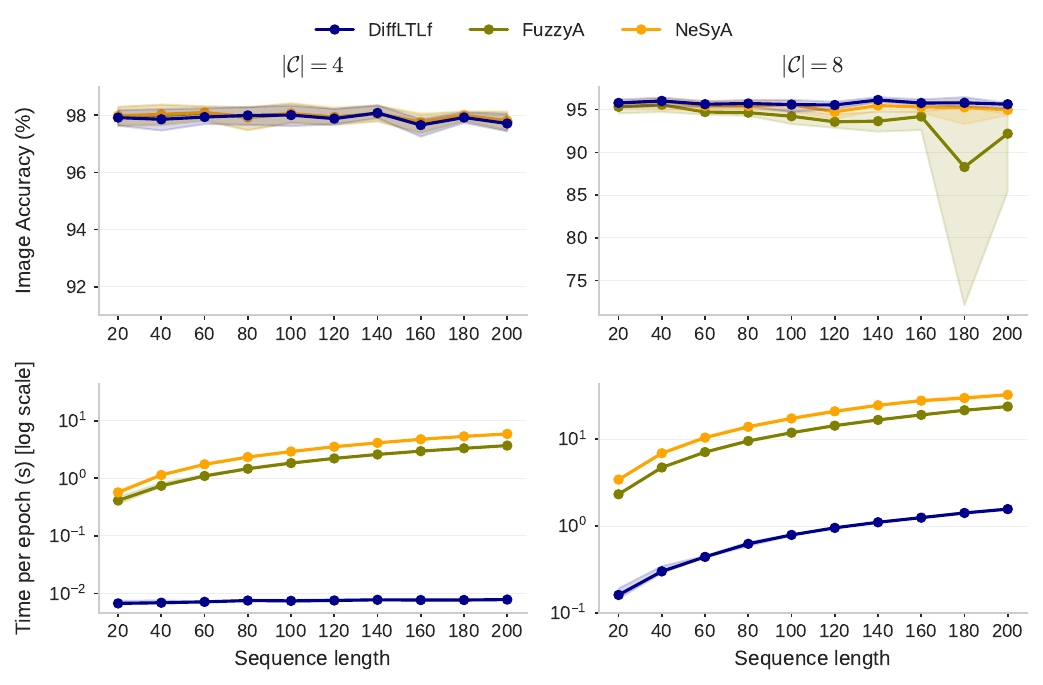}
  \caption{Comparison of symbol grounding accuracy (top row) and logic module time per epoch (bottom row) across sequence lengths varying from 20 to 200.
  Left column: temporal specification with $|\mathcal{C}|=4$ classes. 
  Right column: temporal specification with $|\mathcal{C}|=8$ classes.
  Lines represent the mean over 10 independent runs; shaded bands indicate the range between the minimum and maximum values.
  Time is shown on a logarithmic scale.}
  
  \label{fig:sequence_len}
\end{figure}
\paragraph{Varying sequence length} 
Table~\ref{tab:seq_length} and Figure~\ref{fig:sequence_len} report the performance as the sequence length increases from 20 to 200, following the setup described in Section~\ref{sec:varying_length}.
% Unlike the previous setting, here all sequences within each configuration share the same fixed length, so as to isolate the effect of sequence length on learning performance.
% The temporal specification is ... \todo{qui spendere 2 frasi per spiegare la scelta della formula logica e le motivazioni (distribuzione classi costante al variare di L)} ... involving 8 classes.
With $|\mathcal{C}|=4$, all three methods achieve high and stable grounding accuracy across the entire range of sequence lengths. \NeSyA achieves statistically significant improvements over both competitors at several lengths; however, the absolute differences never exceed
$0.2$ percentage points. In practice, the three methods are on par in terms of accuracy on this specification.
In contrast, with $|\mathcal{C}|=8$, at short sequence lengths \DiffLTLf and \NeSyA achieve comparable grounding accuracy. However, as the sequence length increases, both \FuzzyDFA and \NeSyA show performance degradation and fall behind \DiffLTLf in both image and sequence classification accuracy, with statistically significant differences appearing from length 60 onward. Such degradation is not visible for \DiffLTLf, which maintains stable grounding accuracy across the entire range of sequence lengths. 
% At short sequence lengths, \DiffLTLf and \NeSyA achieve comparable grounding accuracy. As the sequence length increases, both \FuzzyDFA and \NeSyA show performance degradation and fall behind \DiffLTLf in both image and sequence classification accuracy, with statistically significant differences appearing from length 60 onward. Such degradation is not visible in for \DiffLTLf, which maintains stable grounding accuracy across the entire range of sequence lengths.
For clarity, the two purely neural baselines are omitted from
Figure~\ref{fig:sequence_len} and Table~\ref{tab:seq_length}, as they do not reach a valid solution on this task (grounding accuracy close to a random-guess classifier across all sequence lengths).
Their complete numerical results are reported in~\ref{app:full_results}.

% \paragraph{Summary} 
% In the formula-complexity setting, no method consistently outperforms the others: the three neurosymbolic approaches achieve comparable accuracy across all tested configurations. In the sequence-length setting, when the temporal specification is simple ($|\mathcal{C}|=4$), all methods perform comparably. When the specification complexity increases ($|\mathcal{C}|=8$), \DiffLTLf shows an advantage over the DFA-based methods, maintaining stable performances as the problem difficulty increases, while both competitors' performances degrade.
% The following section shows that this accuracy is achieved at a fraction of the computational cost.

\medskip
\noindent
%\DiffLTLf attains classification performance on par with, and in the most demanding configurations superior to, both DFA-based competitors, at the symbol grounding as well as at the sequence classification level.
These results demonstrate that \DiffLTLf attains classification performance on par with both DFA-based competitors, and outperforms them in the most demanding configurations, characterized by high formula complexity and sequence length, at both the symbol grounding and sequence classification levels.

Section~\ref{sec:results_rq3} shows that this level of accuracy is attained at a fraction of the computational cost required by the DFA-based competitors.
Having answered RQ2, the remainder of this subsection discusses these results in greater depth and provides further insights.

\subsubsection{Discussion on RQ2}
\label{sec:discussion_rq2}

% The accuracy differences observed above are setting-dependent. %, and we are not able to attribute them to a single cause.

We have shown in the previous section that the relative performance of the three neurosymbolic methods is not uniform across the experimental configurations.
On most of them, all three methods are statistically indistinguishable; on a few, \NeSyA achieves a small but statistically significant advantage; while on the most demanding configurations, \DiffLTLf consistently achieves higher accuracy.
%Any explanation of these differences must therefore account for the configuration under which the methods are evaluated.
% account not only for why the methods differ, but also for the configuration under which they differ
% Since the compared methods differ along multiple dimensions at once, namely the structure of the differentiable computation (recursion on the syntactic structure of $\Phi$ against recurrence over automaton states) and the semantics used to evaluate it (fuzzy against exact probabilistic), isolating the exact cause of these differences is not straightforward, especially in the many configurations where they are small.
% The three methods
% Understanding which characteristics of the logic module are responsible for these performance differences across experimental configurations is not straightforward. 
Understanding which characteristics of the logic module are responsible for these performance differences, and why such differences appear under certain configurations, remains an open question.
% and the experiments presented in the previous section are not designed to answer this.
%This raises the question of which characteristics of the logic module are responsible for such differences in performance, which the present experiments are not designed to answer.
% The experiments presented in the previous section answer RQ2 but do not suffice to address this analysis, since it requires comparing the methods at the level of the gradient rather than of the final accuracy. 
The end-of-training accuracy metrics, while sufficient to answer RQ2, do not provide the evidence needed to identify the mechanism that produces such differences.
Comparing the methods at the level of the gradient rather than of the final accuracy appears to be a promising starting point.
Feeding the same perception output to each logic module and inspecting the gradient it propagates back (both in terms of magnitude and how it is distributed across the timesteps) for different configurations may %would 
provide more insight about the learning signal each module provides.
% This would extend the gradient-level analysis of Section \ref{sec:discussion_rq1} from the choice of the fuzzy semantics to the choice of the temporal reasoning module, and is out of scope for the current work.
The interest in this direction is motivated by the gradient-level discussion of Section \ref{sec:discussion_rq1}, where the differences between the fuzzy semantics can be attributed to the gradient properties of the underlying operators. However, whether this suffices as an explanation for the choice of the logic module is itself an open question, and we leave its investigation to future work.
% We leave it to future work.

\subsection{RQ3: Computational Scalability}
\label{sec:results_rq3}

We now focus on the computational cost of the compared methods, measured as the time per training epoch spent by the temporal module. 
For the neurosymbolic methods this is the logic module, while for the two purely neural baselines it is the GRU or the Transformer encoder that replaces it.
This metric isolates the overhead introduced by the temporal module, as the perception module is identical across all methods.

\subsubsection{Answer to RQ3}
\label{sec:answer_rq3}
By looking at Table~\ref{tab:formula_complexity} and Table~\ref{tab:seq_length}, we observe that \DiffLTLf is the fastest of the three neurosymbolic methods in every configuration tested and the margin increases with both dimensions of the evaluation (i.e., formula complexity and sequence length).
Along the formula-complexity direction~(Figure~\ref{fig:formula_complexity}), the time required by \DiffLTLf is essentially invariant with respect to the complexity of the specification: going from $|\mathcal{C}|=2$ to $|\mathcal{C}|=8$, i.e.\ from one to four conjoined sub-formulas, produces negligible overhead (from $0.04$\,s to $0.07$\,s per epoch).
Both DFA-based competitors, in contrast, grow super-linearly with $|\mathcal{C}|$. Their cost increases by roughly an order of magnitude in a single step from $|\mathcal{C}|=6$ to $|\mathcal{C}|=8$.
Along the sequence-length direction~(Figure~\ref{fig:sequence_len}), all three neurosymbolic methods scale linearly with the sequence length. However, at the longest sequences tested, \DiffLTLf is faster than both competitors by between one and nearly three orders of magnitude, with \NeSyA consistently the slowest of the two.
Moreover, the reported cost accounts only for training. The cost of constructing the automaton, which \DiffLTLf does not incur at all, is excluded, so the reported gap is a lower bound.
The two purely neural baselines attain lower per-epoch times, as they avoid any symbolic computation, but their speed is not meaningful here since they do not learn the task.

\subsubsection{Discussion on RQ3}
\label{sec:discussion_rq3}
Both \FuzzyDFA and \NeSyA rely on the explicit construction of a finite-state automaton from the \LTLF formula, which carries two distinct limitations.
First, the automaton must be built before training, and its size is, in the worst case, double-exponential in the size of the \LTLF formula~\cite{DBLP:conf/ijcai/GiacomoV13}.
% This is what is measured by the formula-complexity direction of the evaluation, and it accounts for the rapid growth of the cost of both methods with $|\mathcal{C}|$.
This is what the formula-complexity direction measures. Since $\Phi$ conjoins $K=|\mathcal{C}|/2$ sub-formulas (see Section \ref{sec:formula_complexity}), a larger $|\mathcal{C}|$ means a larger formula, hence a larger
automaton, and a fast growth of the computational cost of both methods.
Second, the automaton's states valuation is computed recurrently over the sequence, each timestep depending on the previous one, so the evaluation is inherently sequential, preventing parallelisation over time. 

On top of this common cost, the two methods differ in how each transition is evaluated.
\FuzzyDFA evaluates each transition guard according to fuzzy semantics, and updates the vector of fuzzy truth values over the automaton states.
\NeSyA instead computes the probability that each transition guard is satisfied, and these probabilities determine the transition matrix by which the state distribution is updated.
The probabilistic semantics is exact and avoids the approximation introduced by fuzzy evaluation.
It comes, however, at a computational cost, as weighted model counting must be carried out for every transition at every timestep. This is what makes \NeSyA the most computationally demanding of the compared methods. 
This is reflected in the observation that the gap between \FuzzyDFA and \NeSyA remains stable as the sequence length grows, and widens as the temporal specification becomes more complex (for which larger automata are built).

\DiffLTLf incurs neither cost, as it avoids the automaton altogether and evaluates the temporal specification directly on the perception output.
The cost of the evaluation is governed by the size of the temporal specification $\Phi$, and not by that of an automaton which may be double-exponentially larger.

Combined with the accuracy result of Section \ref{sec:results_rq2}, these results show that direct fuzzy evaluation of \LTLF specifications achieves comparable or higher classification accuracy than DFA-based methods while requiring a fraction of the computational cost. 
%\todo{dire che non consideriamo il costo di costruzione del DFA}

\section{Threats to Validity}
\label{sec:threat_validity}
In this section, we highlight some limitations of the current study. 

\paragraph{Synthetic dataset}
% \paragraph{Synthetic Dataset}
The input sequences used in the evaluation are synthetically built. This choice is deliberate as synthetic generation provides full control over the independent variables of the evaluation, namely formula complexity and sequence length.
Control over such variables is necessary to fully assess the performance and scalability of the temporal reasoning component, and to stress-test it under increasingly demanding conditions.
% che potrebbero anche essere estese oltre a quando facciamo qui (più classi, seq più lunghe)
To the best of our knowledge, no real-world temporal datasets annotated with ground-truth \LTLF satisfaction labels exist at the scale required by our experimental evaluation.
%protocol
The image datasets used for the perceptual component, MNIST and Fashion-MNIST, are standard benchmarks adopted across temporal neurosymbolic works~\cite{DBLP:conf/kr/UmiliCG23, ltlzincIJCAI}.

\paragraph{Selection of temporal specifications}
The temporal specifications used in the evaluation are built as conjunctions or simple combinations of \declare patterns, rather than \LTLF formulas with deeply nested temporal operators. This was done, in line with previous work~\cite{DBLP:conf/kr/UmiliCG23}, due to the fact that
\declare patterns are a set of well-established temporal constraints widely adopted in the process mining literature~\cite{declare_handbook}.
% They cover a representative range of temporal dependencies, including ...
The conjunctive structure of the temporal formula allows control of its complexity through the number of sub-formulas $K$, which serves as the independent variable in the formula-complexity experiments (Section~\ref{sec:formula_complexity}).
% si può mettere: usare operatori molto innestati renderebbe estremamente complesso risalire agli effetti di ciascun operatore (e.g., formula 2 esperimenti synt driving Manginas)
% non è espresso bene

\paragraph{Design choices}
No exhaustive hyperparameter search was performed.
The training configuration, including optimizer (Adam), learning rate ($10^{-4}$), and number of training epochs, directly follows configurations of a related temporal neurosymbolic benchmark~\cite{ltlzincIJCAI}.
All the compared methods share the same training configuration to ensure fair comparison.
The train/test partition follows a standard 80/20 split.
\section{Conclusions}
\label{sec:conclusions}
%% Labels are used to cross-reference an item using \ref command.

This paper introduced \DiffLTLf, a differentiable framework for injecting \LTLF specifications into neural sequence classifiers while avoiding intermediate automata-based representations. The method evaluates temporal formulas directly on finite traces through various fuzzy semantics, allowing the symbolic specification to contribute to the training objective while keeping the perception module's architecture unchanged. Alongside the framework, we provided a systematic study of fuzzy temporal semantics, highlighting how the choice of semantics affects both the interpretation of temporal operators and the resulting learning behaviour.

Moreover, \DiffLTLf was evaluated on classification tasks under different temporal specifications, formula complexities, numbers of classes, and sequence lengths, and compared against DFA-based neurosymbolic baselines.  The results show that different fuzzy semantics lead to different predictive behaviour, confirming that the choice of semantics is an important design decision in temporal NeSy frameworks. At the same time, \DiffLTLf achieves comparable or better classification performance while being substantially more efficient than existing state-of-the-art techniques. This advantage is especially clear as the specifications become more complex, since \DiffLTLf avoids the automaton-size blow-up that can affect DFA-based methods.

Overall, our findings demonstrate that direct fuzzy evaluation of \LTLF formulas provides a practical and scalable alternative to automata-based integration of temporal knowledge. In addition, the theoretical analysis of fuzzy temporal semantics and the evaluation protocol introduced in this work provide a foundation for the development and systematic assessment of future temporal neurosymbolic systems. Future work will investigate richer temporal logics, larger and more realistic benchmarks, and methods for jointly learning or refining temporal specifications from data.

\section*{Acknowledgements}
This work is funded by the project TRUStworthy predictive models for Temporal Event Data (TRUSTED), start-up fund, Free University of Bozen-Bolzano.

%% If you have bib database file and want bibtex to generate the
%% bibitems, please use
%%
\bibliographystyle{elsarticle-harv} 
\bibliography{biblio}

\appendix
\section{Implementation Details of Baselines}
\label{app1}
This appendix provides details regarding the implementation of the baseline methods, \FuzzyDFA \cite{DBLP:conf/kr/UmiliCG23} and \NeSyA \cite{DBLP:conf/ijcai/ManginasPR25}, considered in our comparative analysis. 
% While our experimental framework is built on the \LTLZinc benchmark \cite{ltlzincIJCAI}, 
We adopted specific baseline implementations to ensure a fair comparison in terms of computational efficiency and adherence to the theoretical foundations of each method.

\subsection{\FuzzyDFA}
For the \FuzzyDFA baseline, we adopted the implementation provided in the \LTLZinc codebase \cite{ltlzincIJCAI}, which, 
% reproduces the original method while adding batch sequence evaluation to reduce training times.
in addition to batched sequence evaluation, differs from the original implementation \cite{DBLP:conf/kr/UmiliCG23} in two aspects.
First, at each timestep the automaton's state distribution is renormalized through a softmax before the next transition step.
Second, the training objective is a binary cross-entropy loss between the sum of the fuzzy values assigned to the accepting states and the sequence label, rather than the loss function used in \cite{DBLP:conf/kr/UmiliCG23}. The original loss function combines the fuzzy values of the accepting states through a fuzzy exclusive-or (for positive examples) and through a conjunction of negations (for negative examples).
We verified empirically that, under identical perception module and training parameters, the \LTLZinc implementation variant yields substantially higher symbol-grounding accuracy and sequence classification accuracy than the original implementation variant on our benchmark. 
Moreover, the change of loss function accounts for most of the gap (\ref{app:full_results}). %\todo{puntare a sezione risultati nell'appendice se la mettiamo e se mettiamo un'ablation sull'implementazione di \FuzzyDFA}
The core mechanism of this approach remains unchanged and involves building a propositional formula for each potential \textit{next state} of the automaton. The formula is built as the disjunction of all valid incoming transitions, where a transition is defined as the conjunction of the \textit{previous state} and the corresponding \textit{transition guard}. Each \textit{previous state} in the automaton is treated as a logical variable.
These formulas are evaluated by mapping conjunctions to multiplications and disjunctions to algebraic summations.
\subsection{\NeSyA}
For the \NeSyA baseline, we did not utilize the implementation found in the \LTLZinc codebase, as it compiles state variables and transition guards together into a single logical circuit and can be computationally inefficient for large automata.
Instead, we implemented the architecture described in the original \NeSyA work \cite{DBLP:conf/ijcai/ManginasPR25}. In this setup, only the transition guards are compiled into d-DNNF circuits to enable exact probabilistic reasoning over the input symbols.
At each timestep, these compiled circuits are evaluated to populate a transition matrix. The probability over the states is updated via matrix multiplication between the previous state distribution and the transition matrix.
%% For citations use: 
%%       \cite{<label>} ==> [1]

%%
% \input{Sections/appendixB}
\section{Experimental Setup Details}
\label{app:exp_setup}
% \todo{max: questa app non è mai citata nel testo}

This appendix provides the training configuration, model architecture, and dataset construction details used across all experiments.

\paragraph{Perception module}
All methods share the same convolutional neural network as the perception module, ensuring that differences in performance are to be attributed exclusively to the temporal reasoning component.
The network consists of two convolutional layers with 32 and 64 filters respectively (kernel size $5 \times 5$), each followed by a ReLU activation and $2 \times 2$
max-pooling. The convolutional layers are followed by two fully connected layers of size 1024 and $|C|$, with ReLU activation and dropout ($p = 0.5$) applied after the first fully connected layer. 
The input images are resized to $32 \times 32$ pixels. Since each image represents a single class, mutual exclusivity of the class atoms holds. The output logits are passed through a softmax layer, so that the per-class truth values of each observation are normalized to sum to one.

\paragraph{Training configuration}
All models are trained with the Adam optimizer using a fixed learning rate of $10^{-4}$ and a batch size of 64.
The loss function is a binary cross-entropy between the predicted satisfaction value and the ground-truth sequence label $y \in \{0, 1\}$.
In the formula-complexity experiments (Section~\ref{sec:formula_complexity}), the training runs for 50 epochs.
In the sequence-length experiments (Section~\ref{sec:varying_length}), training is limited to 30 epochs.
The reduced number of training epochs reflects the substantially higher per-epoch training time of DFA-based methods on long sequences. Since a fair comparison requires all methods to train for the same number of epochs, we selected a number of epochs that remains feasible for the slowest method under evaluation.
Each configuration is repeated over 10 independent runs with distinct random seeds.

\paragraph{Dataset construction}
For each experimental configuration, we generate a dataset of 1000 sequences. Each sequence is constructed by independently sampling, at every timestep and for each of the $S=3$ streams, a class label uniformly at random from $\{0, \ldots, |C|-1\}$ and retrieving a corresponding image from the MNIST or Fashion-MNIST training set.
The ground-truth sequence label is computed by evaluating the \LTLF specification $\Phi$ on the crisp symbolic trace obtained by the sampled class label. 
The resulting dataset is partitioned into 80\% training and 20\% test sets; the test set is held out and never observed during training.

\paragraph{Hardware and software}
All experiments were performed on a single NVIDIA RTX A6000 GPU (48 GB), using PyTorch 2.3.1 with CUDA 12.1.
%% Refer following link for more details about bibliography and citations.
%% https://en.wikibooks.org/wiki/LaTeX/Bibliography_Management
\section{Complete Experimental Results}
\label{app:full_results}

Results in Section \ref{sec:results} report only the representative temporal neurosymbolic methods (\DiffLTLf with \FLTLF[P] semantics, \FuzzyDFA, and \NeSyA).
Here we report the complete results for all evaluated methods: the three \DiffLTLf semantics
(Product, Gödel, Łukasiewicz), \TILR, \FuzzyDFA (both the \LTLZinc variant used
throughout the paper and the original implementation, see~\ref{app1}), \NeSyA,
and the two purely neural baselines (GRU and Transformer).
Tables~\ref{tab:full_complexity_mnist} and ~\ref{tab:full_complexity_fmnist}
report the formula-complexity experiment (Section~\ref{sec:formula_complexity})
and Tables~\ref{tab:full_length_c4} and ~\ref{tab:full_length_c8} the sequence-length experiment
(Section~\ref{sec:varying_length}). All values are averaged over 10 runs and use
the same protocol as the main text; bold marks statistical significance over all
other methods (Wilcoxon signed-rank, $p<0.05$).

\begin{table*}[ht]
\centering
\caption{Full results varying formula complexity on MNIST, averaged over 10 runs. \textbf{Img. Acc.}: Symbol Grounding Accuracy; \textbf{Seq. Acc.}: Sequence Classification Accuracy; \textbf{Time}: Logic Module Time per Epoch. Results in bold are
statistically significantly better than all other methods (Wilcoxon, $p<0.05$).}
\label{tab:full_complexity_mnist}
\resizebox{\textwidth}{!}{%
\begin{tabular}{l c c c c c c c c}
\toprule
Method & \multicolumn{4}{c}{Set 1} & \multicolumn{4}{c}{Set 2} \\
\cmidrule(lr){2-5} \cmidrule(lr){6-9}
$|\mathcal{C}|$: & 2 & 4 & 6 & 8 & 2 & 4 & 6 & 8 \\
\midrule
\multicolumn{9}{l}{\textit{Img. Acc. (\%) $\uparrow$}} \\
\DiffLTLf (Product) & 99.9 & 97.4 & 95.7 & 92.7 & 99.9 & 97.4 & 95.0 & 92.7 \\
\DiffLTLf (G\"odel) & 99.9 & 60.8 & 32.7 & 22.8 & 99.9 & 59.2 & 29.6 & 16.3 \\
\DiffLTLf (\L{}ukasiewicz) & 90.1 & 58.4 & 46.2 & 35.4 & 99.9 & 62.6 & 57.1 & 42.0 \\
\TILR & 99.9 & 60.8 & 32.7 & 22.8 & 99.9 & 59.2 & 29.6 & 16.3 \\
\FuzzyDFA & 99.9 & 97.4 & 95.6 & 91.5 & 99.9 & 97.3 & 95.0 & 92.7 \\
\FuzzyDFA (orig.) & 70.2 & 44.5 & 22.8 & 22.0 & 99.9 & 37.2 & 17.9 & 17.2 \\
\NeSyA & 99.9 & 97.4 & 91.3 & 88.4 & 99.9 & 97.3 & 95.0 & 92.7 \\
GRU & 65.8 & 34.5 & 20.5 & 15.5 & 63.1 & 30.8 & 19.0 & 18.4 \\
Transformer & 70.5 & 34.9 & 23.8 & 18.4 & 65.8 & 34.7 & 24.6 & 21.0 \\
\midrule
\multicolumn{9}{l}{\textit{Seq. Acc. (\%) $\uparrow$}} \\
\DiffLTLf (Product) & 99.8 & 98.9 & 98.3 & 96.7 & 99.8 & 99.0 & 98.1 & 98.0 \\
\DiffLTLf (G\"odel) & 99.8 & 87.2 & 90.2 & 82.8 & 99.8 & 93.0 & 86.8 & 87.3 \\
\DiffLTLf (\L{}ukasiewicz) & 93.3 & 88.5 & 90.8 & 87.3 & 99.8 & 93.5 & 87.9 & 88.5 \\
\TILR & 99.8 & 87.2 & 90.2 & 82.8 & 99.8 & 93.0 & 86.8 & 87.3 \\
\FuzzyDFA & 99.8 & 98.9 & 98.3 & 96.5 & 99.9 & 99.1 & 98.2 & 98.0 \\
\FuzzyDFA (orig.) & 79.5 & 86.5 & 87.0 & 86.7 & 99.8 & 88.2 & 85.5 & 86.2 \\
\NeSyA & 99.8 & 99.0 & 97.6 & 95.5 & 99.9 & 99.0 & 98.1 & 97.9 \\
GRU & 99.5 & 87.0 & 87.3 & 82.5 & 79.4 & 88.1 & 85.5 & 86.8 \\
Transformer & 99.3 & 89.0 & 90.7 & 87.5 & 99.5 & 91.5 & 88.3 & 88.8 \\
\midrule
\multicolumn{9}{l}{\textit{Time (s) $\downarrow$}} \\
\DiffLTLf (Product) & 0.042 & 0.045 & 0.048 & 0.050 & 0.050 & 0.059 & 0.061 & 0.065 \\
\DiffLTLf (G\"odel) & 0.003 & 0.006 & 0.008 & 0.011 & 0.050 & 0.052 & 0.054 & 0.061 \\
\DiffLTLf (\L{}ukasiewicz) & 0.004 & 0.007 & 0.010 & 0.013 & 0.050 & 0.055 & 0.055 & 0.065 \\
\TILR & 0.098 & 0.295 & 0.279 & 0.479 & 0.099 & 0.158 & 0.221 & 0.297 \\
\FuzzyDFA & 0.166 & 0.327 & 0.628 & 1.97 & 0.167 & 0.345 & 0.694 & 4.86 \\
\FuzzyDFA (orig.) & 0.089 & 0.247 & 0.697 & 1.91 & 0.087 & 0.209 & 0.625 & 4.64 \\
\NeSyA & 0.165 & 0.397 & 0.990 & 5.08 & 0.154 & 0.367 & 0.878 & 10.38 \\
GRU & \textbf{0.002} & \textbf{0.002} & \textbf{0.002} & \textbf{0.002} & \textbf{0.003} & \textbf{0.003} & \textbf{0.002} & \textbf{0.002} \\
Transformer & 0.014 & 0.012 & 0.012 & 0.013 & 0.012 & 0.012 & 0.012 & 0.012 \\
\bottomrule
\end{tabular}}
\end{table*}

\begin{table*}[ht]
\centering
\caption{Full results varying formula complexity on Fashion-MNIST, averaged over 10 runs. \textbf{Img. Acc.}: symbol grounding accuracy; \textbf{Seq. Acc.}: sequence classification accuracy; \textbf{Time}: Logic Module Time per Epoch. Results in bold are
statistically significantly better than all other methods (Wilcoxon, $p<0.05$).}
\label{tab:full_complexity_fmnist}
\resizebox{\textwidth}{!}{%
\begin{tabular}{l c c c c c c c c}
\toprule
Method & \multicolumn{4}{c}{Set 1} & \multicolumn{4}{c}{Set 2} \\
\cmidrule(lr){2-5} \cmidrule(lr){6-9}
$|\mathcal{C}|$: & 2 & 4 & 6 & 8 & 2 & 4 & 6 & 8 \\
\midrule
\multicolumn{9}{l}{\textit{Img. Acc. (\%) $\uparrow$}} \\
\DiffLTLf (Product) & 100.0 & 97.8 & 91.7 & 86.0 & 99.9 & 98.0 & 90.9 & 85.1 \\
\DiffLTLf (G\"odel) & 100.0 & 76.0 & 34.0 & 23.6 & 99.9 & 52.7 & 32.0 & 16.7 \\
\DiffLTLf (\L{}ukasiewicz) & 89.9 & 55.9 & 47.6 & 40.0 & 99.9 & 79.2 & 45.8 & 33.0 \\
\TILR & 100.0 & 76.0 & 34.0 & 23.6 & 99.9 & 52.7 & 32.0 & 16.7 \\
\FuzzyDFA & 100.0 & 97.9 & 91.7 & 85.6 & 99.9 & 98.0 & 91.1 & 85.6 \\
\FuzzyDFA (orig.) & 54.0 & 41.6 & 28.3 & 23.5 & 84.8 & 27.0 & 25.2 & 15.9 \\
\NeSyA & 100.0 & 97.9 & \textbf{92.0} & 83.9 & 99.9 & 98.0 & 91.0 & 85.4 \\
GRU & 70.4 & 33.5 & 20.6 & 18.2 & 54.5 & 29.3 & 17.8 & 17.3 \\
Transformer & 71.6 & 35.5 & 28.0 & 19.4 & 71.2 & 34.6 & 24.8 & 18.4 \\
\midrule
\multicolumn{9}{l}{\textit{Seq. Acc. (\%) $\uparrow$}} \\
\DiffLTLf (Product) & 100.0 & 98.7 & 95.5 & 93.2 & 99.9 & 99.0 & 96.2 & 96.2 \\
\DiffLTLf (G\"odel) & 100.0 & 89.0 & 89.9 & 85.5 & 99.9 & 91.0 & 87.3 & 89.2 \\
\DiffLTLf (\L{}ukasiewicz) & 93.3 & 87.1 & 89.5 & 85.8 & 99.9 & 93.1 & 88.9 & 89.3 \\
\TILR & 100.0 & 89.0 & 89.9 & 85.5 & 99.9 & 91.0 & 87.3 & 89.2 \\
\FuzzyDFA & 100.0 & 98.7 & 95.3 & 93.2 & 99.9 & 98.9 & 96.3 & 96.2 \\
\FuzzyDFA (orig.) & 67.0 & 82.7 & 87.0 & 86.3 & 90.0 & 88.5 & 87.2 & 87.6 \\
\NeSyA & 100.0 & \textbf{98.8} & 95.7 & 92.8 & 99.9 & 99.0 & 96.1 & 96.4 \\
GRU & 99.8 & 86.0 & 87.8 & 83.3 & 73.0 & 88.5 & 88.7 & 88.2 \\
Transformer & 99.7 & 89.0 & 89.2 & 86.5 & 99.5 & 91.8 & 90.0 & 90.6 \\
\midrule
\multicolumn{9}{l}{\textit{Time (s) $\downarrow$}} \\
\DiffLTLf (Product) & 0.042 & 0.046 & 0.048 & 0.050 & 0.051 & 0.053 & 0.062 & 0.068 \\
\DiffLTLf (G\"odel) & 0.003 & 0.006 & 0.008 & 0.010 & 0.050 & 0.053 & 0.058 & 0.062 \\
\DiffLTLf (\L{}ukasiewicz) & 0.004 & 0.007 & 0.009 & 0.012 & 0.048 & 0.053 & 0.053 & 0.062 \\
\TILR & 0.135 & 0.222 & 0.321 & 0.427 & 0.134 & 0.246 & 0.256 & 0.239 \\
\FuzzyDFA & 0.144 & 0.353 & 0.690 & 2.24 & 0.137 & 0.245 & 0.714 & 4.83 \\
\FuzzyDFA (orig.) & 0.095 & 0.201 & 0.637 & 1.89 & 0.113 & 0.266 & 0.624 & 4.66 \\
\NeSyA & 0.299 & 0.595 & 1.27 & 5.42 & 0.135 & 0.339 & 0.867 & 10.49 \\
GRU & \textbf{0.003} & \textbf{0.004} & \textbf{0.003} & \textbf{0.003} & \textbf{0.003} & \textbf{0.002} & \textbf{0.002} & \textbf{0.002} \\
Transformer & 0.018 & 0.021 & 0.025 & 0.026 & 0.012 & 0.012 & 0.012 & 0.012 \\
\bottomrule
\end{tabular}}
\end{table*}

\begin{table*}[ht]
\centering
\caption{Full results varying the sequence length, $|\mathcal{C}|=4$, averaged over 10 runs. \textbf{Img. Acc.}: Symbol Grounding Accuracy; \textbf{Seq. Acc.}: Sequence Classification Accuracy; \textbf{Time}: Logic
Module Time per Epoch. Bold marks a value statistically significantly better than all other methods (Wilcoxon, $p<0.05$).}
\label{tab:full_length_c4}
\resizebox{\textwidth}{!}{%
\begin{tabular}{l c c c c c c c c c c}
\toprule
& \multicolumn{10}{c}{Sequence length} \\
\cmidrule(lr){2-11}
Method & 20 & 40 & 60 & 80 & 100 & 120 & 140 & 160 & 180 & 200 \\
\midrule
\multicolumn{11}{l}{\textit{Img. Acc. (\%) $\uparrow$}} \\
\DiffLTLf (Product) & 97.9 & 97.9 & 97.9 & 98.0 & 98.0 & 97.9 & 98.1 & 97.7 & 97.9 & 97.7 \\
\DiffLTLf (G\"odel) & 43.5 & 35.0 & 47.2 & 35.9 & 49.9 & 48.5 & 41.1 & 55.4 & 55.0 & 54.8 \\
\DiffLTLf (\L{}ukasiewicz) & 72.9 & 92.7 & 87.8 & 92.1 & 85.5 & 78.6 & 82.6 & 86.6 & 91.1 & 81.2 \\
\TILR & 43.5 & 35.0 & 47.2 & 35.9 & 49.9 & 48.5 & 41.1 & 55.4 & 55.0 & 54.8 \\
\FuzzyDFA & 97.9 & 98.0 & 98.1 & 97.9 & 98.0 & 97.9 & 98.1 & 97.7 & 98.0 & 97.8 \\
\FuzzyDFA (orig.) & 22.7 & 22.7 & 22.8 & 22.8 & 22.8 & 22.7 & 22.7 & 22.8 & 22.7 & 22.8 \\
\NeSyA & \textbf{98.0} & 98.0 & \textbf{98.1} & 97.9 & \textbf{98.1} & 97.9 & 98.1 & \textbf{97.8} & \textbf{98.0} & 97.8 \\
GRU & 44.9 & 41.1 & 36.9 & 43.0 & 39.4 & 41.4 & 39.5 & 37.8 & 36.3 & 38.0 \\
Transformer & 46.6 & 36.3 & 42.3 & 39.0 & 36.0 & 38.4 & 43.2 & 32.8 & 34.3 & 44.0 \\
\midrule
\multicolumn{11}{l}{\textit{Seq. Acc. (\%) $\uparrow$}} \\
\DiffLTLf (Product) & 97.7 & 98.0 & 97.2 & 98.0 & 98.1 & 98.0 & 98.9 & 97.3 & 98.4 & 97.6 \\
\DiffLTLf (G\"odel) & 70.6 & 64.7 & 72.0 & 68.5 & 74.2 & 75.0 & 70.1 & 75.5 & 71.9 & 75.0 \\
\DiffLTLf (\L{}ukasiewicz) & 84.4 & 90.0 & 89.5 & 91.2 & 87.8 & 85.7 & 86.8 & 87.3 & 90.2 & 84.8 \\
\TILR & 70.6 & 64.7 & 72.0 & 68.5 & 74.2 & 75.0 & 70.1 & 75.5 & 71.9 & 75.0 \\
\FuzzyDFA & 97.8 & 98.1 & 97.7 & 98.2 & 98.0 & 98.2 & 99.0 & 97.2 & 98.5 & 97.5 \\
\FuzzyDFA (orig.) & 32.7 & 36.2 & 35.5 & 35.1 & 33.5 & 34.9 & 35.7 & 35.2 & 37.5 & 37.5 \\
\NeSyA & 97.8 & 98.1 & 97.6 & 98.2 & 98.0 & 98.0 & 99.0 & 97.3 & 98.5 & 97.6 \\
GRU & 78.4 & 79.1 & 78.3 & 81.1 & 79.5 & 79.2 & 79.2 & 77.5 & 76.5 & 78.2 \\
Transformer & 80.4 & 78.3 & 76.9 & 74.1 & 73.5 & 71.0 & 67.2 & 69.5 & 62.8 & 67.2 \\
\midrule
\multicolumn{11}{l}{\textit{Time (s) $\downarrow$}} \\
\DiffLTLf (Product) & 0.007 & 0.007 & 0.007 & 0.008 & 0.007 & 0.008 & 0.008 & 0.008 & 0.008 & 0.008 \\
\DiffLTLf (G\"odel) & 0.025 & 0.026 & 0.023 & 0.026 & 0.023 & 0.023 & 0.022 & 0.025 & 0.024 & 0.027 \\
\DiffLTLf (\L{}ukasiewicz) & 0.026 & 0.028 & 0.025 & 0.028 & 0.024 & 0.025 & 0.025 & 0.027 & 0.025 & 0.027 \\
\TILR & 0.340 & 0.760 & 1.32 & 2.02 & 2.86 & 3.84 & 4.93 & 6.38 & 7.63 & 9.25 \\
\FuzzyDFA & 0.410 & 0.740 & 1.09 & 1.46 & 1.83 & 2.21 & 2.59 & 2.96 & 3.31 & 3.69 \\
\FuzzyDFA (orig.) & 0.351 & 0.686 & 1.03 & 1.39 & 1.75 & 2.11 & 2.44 & 2.80 & 3.20 & 3.50 \\
\NeSyA & 0.568 & 1.14 & 1.74 & 2.33 & 2.91 & 3.53 & 4.12 & 4.74 & 5.34 & 5.89 \\
GRU & \textbf{0.004} & \textbf{0.004} & \textbf{0.004} & \textbf{0.004} & \textbf{0.005} & \textbf{0.005} & \textbf{0.005} & \textbf{0.005} & \textbf{0.006} & \textbf{0.006} \\
Transformer & 0.016 & 0.018 & 0.015 & 0.017 & 0.020 & 0.023 & 0.028 & 0.031 & 0.033 & 0.039 \\
\bottomrule
\end{tabular}}
\end{table*}

\begin{table*}[ht]
\centering
\caption{Full results varying the sequence length, $|\mathcal{C}|=8$, averaged over 10 runs. \textbf{Img. Acc.}: Symbol Grounding Accuracy; \textbf{Seq. Acc.}: Sequence Classification Accuracy; \textbf{Time}: Logic
Module Time per Epoch. 
Results in bold are statistically significantly better than all
other methods (Wilcoxon, $p<0.05$).}
\label{tab:full_length_c8}
\resizebox{\textwidth}{!}{%
\begin{tabular}{l c c c c c c c c c c}
\toprule
& \multicolumn{10}{c}{Sequence length} \\
\cmidrule(lr){2-11}
Method & 20 & 40 & 60 & 80 & 100 & 120 & 140 & 160 & 180 & 200 \\
\midrule
\multicolumn{11}{l}{\textit{Img. Acc. (\%) $\uparrow$}} \\
\DiffLTLf (Product) & 95.8 & 96.0 & \textbf{95.6} & \textbf{95.7} & 95.6 & \textbf{95.5} & \textbf{96.1} & \textbf{95.8} & \textbf{95.8} & \textbf{95.6} \\
\DiffLTLf (G\"odel) & 17.5 & 17.1 & 17.7 & 16.7 & 17.3 & 15.1 & 14.8 & 16.4 & 15.2 & 14.9 \\
\DiffLTLf (\L{}ukasiewicz) & 14.6 & 14.6 & 14.6 & 14.6 & 14.5 & 14.5 & 14.5 & 14.5 & 14.6 & 14.6 \\
\TILR & 17.5 & 17.1 & 17.7 & 16.7 & 17.3 & 15.1 & 14.8 & 16.4 & 15.2 & 14.9 \\
\FuzzyDFA & 95.3 & 95.6 & 94.7 & 94.6 & 94.2 & 93.6 & 93.7 & 94.2 & 88.3 & 92.2 \\
\FuzzyDFA (orig.) & 15.5 & 24.8 & 39.1 & 18.9 & 42.6 & 38.5 & 29.0 & 45.7 & 45.1 & 51.2 \\
\NeSyA & 95.8 & 96.0 & 95.4 & 95.4 & 95.6 & 94.7 & 95.5 & 95.3 & 95.3 & 95.0 \\
GRU & 16.1 & 17.1 & 16.7 & 18.2 & 16.2 & 16.1 & 17.7 & 17.0 & 16.1 & 17.2 \\
Transformer & 21.4 & 22.7 & 21.2 & 19.6 & 21.0 & 21.3 & 17.0 & 17.9 & 21.4 & 19.2 \\
\midrule
\multicolumn{11}{l}{\textit{Seq. Acc. (\%) $\uparrow$}} \\
\DiffLTLf (Product) & 95.3 & 94.5 & \textbf{94.7} & 95.0 & 94.5 & \textbf{93.0} & \textbf{93.5} & 94.0 & \textbf{93.7} & \textbf{94.7} \\
\DiffLTLf (G\"odel) & 57.0 & 55.1 & 53.4 & 54.1 & 54.1 & 53.3 & 50.3 & 53.8 & 52.2 & 52.6 \\
\DiffLTLf (\L{}ukasiewicz) & 50.8 & 52.9 & 51.0 & 47.9 & 51.5 & 51.5 & 47.4 & 50.8 & 50.9 & 51.6 \\
\TILR & 57.0 & 55.1 & 53.4 & 54.1 & 54.1 & 53.3 & 50.3 & 53.8 & 52.2 & 52.6 \\
\FuzzyDFA & 94.0 & 92.8 & 92.2 & 93.5 & 90.1 & 90.0 & 88.9 & 90.5 & 83.2 & 86.3 \\
\FuzzyDFA (orig.) & 50.8 & 58.8 & 71.2 & 59.7 & 71.0 & 68.3 & 65.8 & 70.0 & 71.4 & 71.7 \\
\NeSyA & 95.5 & 94.7 & 94.1 & 94.2 & 94.2 & 91.8 & 91.8 & 92.9 & 91.8 & 92.9 \\
GRU & 54.0 & 54.5 & 53.7 & 55.4 & 54.5 & 51.6 & 54.5 & 55.1 & 52.8 & 56.4 \\
Transformer & 63.2 & 63.8 & 61.7 & 57.1 & 54.5 & 58.1 & 53.4 & 54.2 & 55.2 & 53.0 \\
\midrule
\multicolumn{11}{l}{\textit{Time (s) $\downarrow$}} \\
\DiffLTLf (Product) & 0.161 & 0.300 & 0.441 & 0.624 & 0.790 & 0.953 & 1.11 & 1.25 & 1.42 & 1.57 \\
\DiffLTLf (G\"odel) & 0.053 & 0.098 & 0.134 & 0.186 & 0.229 & 0.275 & 0.354 & 0.393 & 0.420 & 0.465 \\
\DiffLTLf (\L{}ukasiewicz) & 0.064 & 0.109 & 0.137 & 0.185 & 0.230 & 0.276 & 0.324 & 0.371 & 0.421 & 0.471 \\
\TILR & 0.911 & 1.90 & 2.66 & 2.74 & 3.02 & 3.62 & 4.29 & 4.95 & 6.95 & 8.04 \\
\FuzzyDFA & 2.32 & 4.73 & 7.12 & 9.56 & 11.91 & 14.35 & 16.72 & 19.09 & 21.58 & 23.84 \\
\FuzzyDFA (orig.) & 3.06 & 4.93 & 7.94 & 9.57 & 12.27 & 13.04 & 15.32 & 17.33 & 19.00 & 21.20 \\
\NeSyA & 3.43 & 6.90 & 10.45 & 13.93 & 17.39 & 21.00 & 24.64 & 27.87 & 29.88 & 32.58 \\
GRU & \textbf{0.004} & \textbf{0.004} & \textbf{0.004} & \textbf{0.005} & \textbf{0.005} & \textbf{0.006} & \textbf{0.006} & \textbf{0.006} & \textbf{0.006} & \textbf{0.006} \\
Transformer & 0.014 & 0.014 & 0.014 & 0.017 & 0.020 & 0.022 & 0.027 & 0.029 & 0.032 & 0.039 \\
\bottomrule
\end{tabular}}
\end{table*}

\end{document}